\colorlet{DarkBlue}{black!50!blue!100}
\colorlet{DarkRed}{black!15!red!100}
\definecolor{hexcolor0xbfbfbf}{rgb}{0.749,0.749,0.749}
\tikzset{>=latex}
\tikzstyle{none}   = [inner sep=0pt]
\tikzstyle{line}   = [ -, thick, shorten <=1pt, shorten >=1pt ]
\tikzstyle{arrow}  = [ ->, thick, shorten <=1pt, shorten >=1pt ]
\tikzstyle{ardash} = [ dashed, ->, thick, shorten <=1pt, shorten >=1pt ]
\tikzstyle{empty}=[circle,opacity=0.0,text opacity=1.0,inner sep=0pt]
\tikzstyle{box}=[rectangle,fill=White,draw=Black]
\tikzstyle{filled}=[circle,thick,fill=hexcolor0xbfbfbf,draw=Black]
\tikzstyle{hollow}=[circle,thick,fill=White,draw=Black]
\tikzstyle{param}=[rectangle,fill=Black,draw=Black,inner sep=0pt,minimum width=4pt,minimum height=4pt]
\tikzstyle{paramhollow}=[rectangle,thick,fill=White,draw=Black,inner sep=0pt,minimum
\theoremstyle:=definition,remark,plain\do{
    \expandafter\g@addto@macro\csname th@\theoremstyle\endcsname{
      \addtolength\thm@preskip\parskip
    }
  }
\theoremstyle{definition}
\theoremstyle{plain}
\newtheorem{thm}{Theorem}
\newtheorem{prop}{Proposition}
\newtheorem{lem}{Lemma}
\theoremstyle{remark}
\DeclareMathOperator{\g}{\,|\,}
\DeclareRobustCommand{\E}[1]{\ensuremath{\mathbb{E}\!\left[#1\right]}}
\DeclareRobustCommand{\Ed}[2]{\ensuremath{\mathbb{E}_{#1}\!\left[#2\right]}}
\DeclareRobustCommand{\Vd}[2]{\ensuremath{\mathbb{V}_{#1}\!\left[#2\right]}}
\DeclareRobustCommand{\Md}[2]{\ensuremath{\mathbb{M}_{#1}\!\left[#2\right]}}
\DeclarePairedDelimiterX{\inp}[2]{\langle}{\rangle}{#1, #2} 
\DeclareMathOperator*{\argmax}{arg\,max  \ }
\DeclareMathOperator{\rE}{{\mathbb{E}}}
\DeclareMathOperator{\rV}{{\mathbb{V}}}
\DeclareMathOperator{\rR}{{\mathbb{R}}}
\DeclareMathOperator{\cA}{\mathcal{A}}
\DeclareMathOperator{\cB}{\mathcal{B}}
\DeclareMathOperator{\cD}{\mathcal{D}}
\DeclareMathOperator{\cH}{\mathcal{H}}
\DeclareMathOperator{\cL}{\mathcal{L}}
\DeclareMathOperator{\cN}{\mathcal{N}}
\DeclareMathOperator{\cP}{\mathcal{P}}
\DeclareMathOperator{\cQ}{\mathcal{Q}}
\DeclareMathOperator{\cR}{\mathcal{R}}
\DeclareMathOperator{\cS}{\mathcal{S}}
\DeclareMathOperator{\0}{\boldsymbol{0}}
\DeclareMathOperator{\1}{\boldsymbol{1}}
\crefname{equation}{Eq.}{Eqs.}
\crefname{lem}{lemma}{lemmas}
\crefname{thm}{theorem}{theorems}
\crefname{prop}{proposition}{propositions}
\crefname{cor}{corollary}{corollaries}
\lstdefinestyle{mystyle}{
    backgroundcolor=\color{White},   
    commentstyle=\color{Green},
    keywordstyle=\color{RedOrange},
    stringstyle=\color{Green},
    emphstyle=\color{RoyalPurple},
    basicstyle=\ttfamily\scriptsize,
    numberstyle=\tiny\color{Gray},
    breakatwhitespace=false,         
    breaklines=true,                 
    captionpos=b,                    
    keepspaces=true,                 
    numbers=left,                    
    numbersep=6pt,  
    showspaces=false,                
    showstringspaces=false,
    showtabs=false,                  
    tabsize=2,
    xleftmargin=12pt,
}
\title{{Temporal Difference Uncertainties\\as a Signal for Exploration}}
\author{
Sebastian Flennerhag, Jane X. Wang, Pablo Sprechmann, Francesco Visin, Alexandre Galashov,\\
\textbf{Steven Kapturowski, Diana Borsa, Nicolas Heess, Andre Barreto, Razvan Pascanu}\\
DeepMind\\
\texttt{\{flennerhag,wangjane,psprechmann,visin,agalshov}\\
\texttt{skapturowski,borsa,heess,andrebarreto,razp\}@google.com}
}
\newcommand*{\greenemph}[1]{%
  \tikz[baseline=(X.base)] \node[rectangle, fill=green, fill opacity=0.2, text opacity=1.0, rounded corners, inner sep=0.3mm] (X) {#1};%
}
\DeclareMathOperator{\sign}{sign}
\newcommand{\sname}{TDU\xspace}
\newcommand{\lname}{Temporal Difference Uncertainties\xspace}
\begin{document}
\maketitle

\begin{abstract}
An effective approach to exploration in reinforcement learning is to rely on an agent's uncertainty over the optimal policy, which can yield near-optimal exploration strategies in tabular settings. However, in non-tabular settings that involve function approximators, obtaining accurate uncertainty estimates is almost as challenging as the exploration problem itself.
In this paper, we highlight that value estimates are easily biased and temporally inconsistent. In light of this,
we propose a novel method for estimating uncertainty over the value function that relies on inducing a distribution over temporal difference errors. This exploration signal controls for state-action transitions so as to isolate uncertainty in value that is due to uncertainty over the agent's parameters.
Because our measure of uncertainty conditions on state-action transitions, we cannot act on this measure directly. Instead, we incorporate it as an intrinsic reward and treat exploration as a separate learning problem, induced by the agent's temporal difference uncertainties. We introduce a distinct exploration policy that learns to collect data with high estimated uncertainty, which gives rise to a ``curriculum'' that smoothly changes throughout learning and vanishes in the limit of perfect value estimates. 
We evaluate our method on hard-exploration tasks, including Deep Sea and Atari 2600 environments and find that our proposed form of exploration facilitates efficient exploration.
\end{abstract}

\section{Introduction}\label{sec:introduction}

Striking the right balance between exploration and exploitation is fundamental to the reinforcement learning problem. A common approach is to derive exploration from the policy being learned. Dithering strategies, such as $\epsilon$-greedy exploration, render a reward-maximising policy stochastic around its reward maximising behaviour \citep{Williams:1991fu}. Other methods encourage higher entropy in the policy \citep{Ziebart:2008ma}, introduce an intrinsic reward \citep{Singh:2005in}, or drive exploration by sampling from the agent's belief over the MDP \citep{Strens:2000ba}.

While greedy or entropy-maximising policies cannot facilitate temporally extended exploration \citep{Osband:2013mo,Osband:2016de}, the efficacy of intrinsic rewards depends crucially on how they relate to the extrinsic reward that comes from the environment~\citep{Burda:2018so}. Typically, intrinsic rewards for exploration provide a bonus for visiting novel states \citep[e.g][]{Bellemare:2016un} or visiting states where the agent cannot predict future transitions \citep[e.g][]{Pathak:2017cu,Burda:2018so}. Such approaches can facilitate learning an optimal policy, but they can also fail entirely in large environments as they prioritise novelty over rewards \citep{Burda:2018ro}.

Methods based on the agent's uncertainty over the optimal policy explicitly trade off exploration and exploitation \citep{Kearns:2002ne}. Posterior Sampling for Reinforcement Learning \citep[PSRL;][]{Strens:2000ba,Osband:2013mo} is one such approach, which models a distribution over Markov Decision Processes (MDPs). While PSRL is near-optimal in tabular settings \citep{Osband:2013mo,Osband:2016ge}, it cannot be easily scaled to complex problems that require function approximators. Prior work has attempted to overcome this by instead directly estimating the agent's uncertainty over the policy's value function \citep{Osband:2016de,Moerland:2017ef,Osband:2019de,oDonoghue:2018un,Janz:2019su}. While these approaches can scale posterior sampling to complex problems and nonlinear function approximators, estimating uncertainty over value functions introduces issues that can cause a bias in the posterior distribution \citep{Janz:2019su}.

In response to these challenges, we introduce \emph{\lname} (\sname), which derives an intrinsic reward from the agent's uncertainty over the value function. Concretely, TDU relies on the Bootstrapped DQN \citep{Osband:2016de} and separates exploration and reward-maximising behaviour into two separate policies that bootstrap from a shared replay buffer. This separation allows us to derive an exploration signal for the exploratory policy from estimates of uncertainty of the reward-maximising policy. Thus, \sname encourages exploration to collect data with high model uncertainty over reward-maximising behaviour, which is made possible by treating exploration as a separate learning problem. In contrast to prior works that directly estimate value function uncertainty, we estimate uncertainty over \emph{temporal difference (TD) errors}. By conditioning on observed state-action transitions, \sname controls for environment uncertainty and provides an exploration signal only insofar as there is model uncertainty. We demonstrate that \sname can facilitate efficient exploration in challenging exploration problems such as Deep Sea and Montezuma's Revenge.  

\section{Estimating Value Function Uncertainty}\label{sec:bias}

We begin by highlighting that estimating uncertainty over the value function can suffer from bias that is very hard to overcome with typical approaches \citep[see also][]{Janz:2019su}. Our analysis shows that biased estimates arise because uncertainty estimates require an integration over unknown future state visitations. This requires tremendous model capacity and is in general infeasible. Our results show that we cannot escape a bias in general, but we can take steps to mitigate it by \emph{conditioning} on an observed trajectory. Doing so removes some uncertainty over future state-visitations and we show in \Cref{sec:tdu} that it can result in a substantially smaller bias. 

We consider a Markov Decision Process $(\cS, \cA, \cP, \cR, \gamma)$ for some given state space ($\cS$), action space ($\cA$), transition dynamics ($\cP$), reward function ($\cR$) and discount factor ($\gamma$). For a given (deterministic) policy $\pi: \cS \mapsto \cA$, the action value function is defined as the expected cumulative reward under the policy starting from state $s$ with action $a$:
\begin{equation}\label{eq:Q}
Q_\pi(s, a)
\coloneqq \Ed{\pi}{\left.\sum_{t=0}^\infty \gamma^t r_{t+1} \, \right| \, s_0 = s, a_0 = a}
=
\Ed{\substack{r\sim\cR(s, a)\\s' \sim \cP(s, a)}}{r + \gamma Q_{\pi}(s', \pi(s'))},
\end{equation}

where $t$ index time and the expectation $\mathbb{E}_{\pi}$ is with respect to realised rewards $r$ sampled under the policy $\pi$; the right-hand side characterises $Q$ recursively under the Bellman equation. The action-value function $Q_\pi$ is estimated under a function approximator $Q_\theta$ parameterised by $\theta$. Uncertainty over $Q_\pi$ is expressed by placing a distribution over the parameters of the function approximator, $p(\theta)$. We overload notation slightly and write $p(\theta)$ to denote the probability density function $p_\theta$ over a random variable $\theta$. Further, we denote by $\theta \sim p(\theta)$ a random sample $\theta$ from the distribution defined by $p_\theta$. Methods that rely on posterior sampling under function approximators assume that the induced distribution, $p(Q_\theta)$, is an accurate estimate of the agent's uncertainty over its value function, $p(Q_\pi)$, so that sampling $Q_\theta \sim p(Q_\theta)$ is approximately equivalent to sampling from $Q_\pi \sim p(Q_\pi)$. 

For this to hold, the moments of $p(Q_\theta)$ at each state-action pair $(s, a)$ must correspond to the expected moments in future states. In particular, moments of $p(Q_\pi)$ must satisfy a Bellman Equation akin to \Cref{eq:Q} \citep{oDonoghue:2018un}. We focus on the mean ($\mathbb{E}$) and variance ($\mathbb{V}$):
\begin{align}
\label{eq:propagation:mean}
\Ed{\theta}{Q_{\theta}(s, a)} &= \Ed{\theta}{\Ed{ r, s'}{r + \gamma Q_\theta(s', \pi(s'))}}, \\
\label{eq:propagation:var}
\rV_{\theta}{\left[Q_{\theta}(s, a)\right]} &= \rV_{\theta}{\left[\Ed{r, s'}{r + \gamma Q_\theta(s', \pi(s'))}\right]}.
\end{align}

If $\Ed{\theta}{Q_\theta}$ and $\Vd{\theta}{Q_\theta}$ fail to satisfy these conditions, the estimates of $\E{Q_\pi}$ and $\mathbb{V}[Q_\pi]$ are biased, causing a bias in exploration under posterior sampling from $p(Q_\theta)$. Formally, the agent's uncertainty over $p(Q)$ implies uncertainty over the MDP \citep{Strens:2000ba}. Given a belief over the MDP, i.e., a distribution $p(M)$, we can associate each $M \sim p(M)$ with a distinct value function $Q_\pi^M$. \Cref{lem:pq} below shows that, for $p(\theta)$ to be interpreted as representing some $p(M)$ by push-forward to $p(Q_\theta)$, the induced moments must match under the Bellman Equation.

\begin{lem}[Bellman uncertainty bias]\label{lem:pq}
If $\Ed{\theta}{Q_\theta}$ and $\Vd{\theta}{Q_\theta}$ fail to satisfy \Cref{eq:propagation:mean,eq:propagation:var}, respectively, they are biased estimators of $\Ed{M}{Q^M_\pi}$ and $\Vd{M}{Q^M_\pi}$ for any choice of $p(M)$.
\end{lem}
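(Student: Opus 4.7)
The plan is to establish the contrapositive: if $p(\theta)$ represents some $p(M)$ through push-forward $Q_\theta \stackrel{d}{=} Q^M_\pi$, then the moments $\mathbb{E}_\theta[Q_\theta]$ and $\mathbb{V}_\theta[Q_\theta]$ are forced to satisfy the Bellman recursions \eqref{eq:propagation:mean} and \eqref{eq:propagation:var}. Violating either recursion therefore precludes unbiasedness for every $p(M)$.

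First, fix any distribution $p(M)$. For each realisation $M \sim p(M)$, the associated value function $Q^M_\pi$ obeys its own Bellman equation with respect to $M$'s reward and transition kernel, which is the per-MDP identity $Q^M_\pi(s,a) = \mathbb{E}_{r,s' \sim M}[\,r + \gamma Q^M_\pi(s',\pi(s'))\,]$ read directly from \Cref{eq:Q}. I would take this as the sole substantive input.

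Second, integrate the per-MDP identity under $p(M)$. By linearity, applying $\mathbb{E}_M[\cdot]$ to both sides gives
$$\mathbb{E}_M\!\left[Q^M_\pi(s,a)\right] = \mathbb{E}_M\!\left[\mathbb{E}_{r,s'}\!\left[r + \gamma Q^M_\pi(s',\pi(s'))\right]\right],$$
which is exactly the form of \Cref{eq:propagation:mean} with $Q_\theta$ replaced by $Q^M_\pi$. For the variance, since the per-MDP Bellman identity asserts that $Q^M_\pi(s,a)$ equals the inner conditional expectation almost surely in $M$, applying $\mathbb{V}_M[\cdot]$ to both sides of the same identity yields the analogue of \Cref{eq:propagation:var}. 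No random variable other than $M$ is integrated away on the outside, so the recursion is preserved at the level of each moment.

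Third, invoke the push-forward assumption. If $p(Q_\theta)$ agrees in distribution with the law of $Q^M_\pi$ under $p(M)$, then the two sets of moments coincide, so $\mathbb{E}_\theta[Q_\theta]$ and $\mathbb{V}_\theta[Q_\theta]$ inherit the Bellman identities just derived. Contrapositively, failure of \Cref{eq:propagation:mean} (resp.\ \Cref{eq:propagation:var}) implies that no $p(M)$ can make the push-forward hold at the level of first (resp.\ second) moments, hence the estimator is biased for $\mathbb{E}_M[Q^M_\pi]$ (resp.\ $\mathbb{V}_M[Q^M_\pi]$).

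The main obstacle is the treatment of the inner expectation $\mathbb{E}_{r,s'}$ in \Cref{eq:propagation:mean,eq:propagation:var}, which is written without an explicit conditioning distribution while each $Q^M_\pi$ is defined under $M$'s own dynamics. Making the derivation precise requires viewing $p(M)$ as inducing a joint law over $(M,r,s')$ in which $(r,s') \mid M \sim \mathcal{R}^M(s,a)\otimes\mathcal{P}^M(s,a)$, and verifying that the order of integration used to pass from the per-MDP Bellman fixed-point to the moment recursion is the one implicit in \Cref{eq:propagation:mean,eq:propagation:var}. This entanglement between epistemic uncertainty over $M$ and aleatoric uncertainty over $(r,s')$ is the very phenomenon that the lemma diagnoses, so getting the bookkeeping right there is both the crux of the proof and the motivation for the paper's subsequent conditioning on observed transitions.
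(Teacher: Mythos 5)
Your proposal is correct and takes essentially the same route as the paper's proof: the paper argues by contradiction (assume the moments of $p(Q_\theta)$ match those of $Q^M_\pi$ under some $p(M)$ while \Cref{eq:propagation:mean,eq:propagation:var} fail, then chain the per-MDP Bellman fixed point through the matched moments back to the same quantity), which is exactly your contrapositive read in the other direction. The delicate step you flag --- interchanging the outer moment operator with the inner $\mathbb{E}_{r,s'}$ so that only pointwise moments of $Q$ at successor states are needed --- is precisely where the paper invokes linearity of $\mathbb{E}_{r,s'}$, so the two arguments share the same crux and the same implicit bookkeeping over the joint law of $(M, r, s')$.
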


All proofs are deferred to Appendix \ref{app:proof}. \Cref{lem:pq} highlights why estimating uncertainty over value functions is so challenging; while the left-hand sides of \Cref{eq:propagation:mean,eq:propagation:var} are stochastic in $\theta$ only, the right-hand sides depend on marginalising over the MDP. This requires the function approximator to generalise to unseen future trajectories. \Cref{lem:pq} is therefore a statement about scale; the harder it is to generalise, the more likely we are to observe a bias---even in deterministic environments.

This requirement of ``strong generalisation'' poses a particular problem for neural networks that are prone to overfitting, but the issue is more general. In particular, we show that factorising the posterior $p(\theta)$ will typically cause estimation bias for all but tabular MDPs. This is problematic because it is often computationally infeasible to maintain a full posterior; previous work either maintains a full posterior over the final layer of the function approximator \citep{Osband:2016de,oDonoghue:2018un,Janz:2019su} or maintains a diagonal posterior over all parameters \citep{Fortunato:2017no,Plappert:2018no} of the neural network. Either method limits how expressive the function approximator can be with respect to future states, thereby causing an estimation bias. To establish this formally, let $Q_\theta \coloneqq w \circ \phi_\vartheta$, where $\theta = (w_1, \ldots, w_n, \vartheta_1, \ldots, \vartheta_v)$, with $w \in \rR^{n}$ a function approximator with parameters $\vartheta \in \rR^v$.

\begin{thm}[Function approximation bias]\label{thm:unc}
If the number of state-action pairs with unique predictions $\Ed{\theta}{Q_\theta(s, a)} \neq \Ed{\theta}{Q_\theta(s', a')}$ and non-zero Temporal Difference error $Q_\theta(s, a) \neq \Ed{r, s'}{r + \gamma Q_\theta(s', \pi(s'))}$ is greater than $n+1$, where $w \in \rR^n$, then $\Ed{\theta}{Q_\theta}$ and $\Vd{\theta}{Q_\theta}$ are biased estimators of $\Ed{M}{Q^M_\pi}$ and $\Vd{M}{Q^M_\pi}$ for any choice of $p(M)$.
\end{thm}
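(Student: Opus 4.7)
The plan is to use \Cref{lem:pq} as the hook: to establish the two claimed biases it suffices to exhibit a single state-action pair at which \Cref{eq:propagation:mean} fails (for the mean bias) and likewise one at which \Cref{eq:propagation:var} fails (for the variance bias). I will concentrate on the mean equation and argue by contradiction from the assumption that it holds across the whole set $S$ of state-action pairs satisfying the two stated side conditions.

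First I unpack the structural hypothesis. Under $Q_\theta(s,a) = w^{\top}\phi_\vartheta(s,a)$, and reading the surrounding text as positing a factorised posterior $p(\theta)=p(w)\,p(\vartheta)$ (the setting that underlies both the diagonal-Gaussian and the final-layer posteriors cited just above the theorem), the induced mean is linear in the mean parameters,
\begin{equation*}
\Ed{\theta}{Q_\theta(s,a)} \;=\; \bar{w}^{\top}\bar{\phi}(s,a), \qquad \bar{w} := \Ed{w}{w}, \quad \bar{\phi}(s,a) := \Ed{\vartheta}{\phi_\vartheta(s,a)}.
\end{equation*}
Setting $\psi(s,a) := \bar{\phi}(s,a) - \gamma\,\Ed{s'}{\bar{\phi}(s',\pi(s'))}$ and $\bar{r}(s,a) := \Ed{r}{r\g s,a}$, the mean Bellman equation at $(s,a)$ collapses to the single scalar linear constraint $\bar{w}^{\top}\psi(s,a) = \bar{r}(s,a)$ in the $n$-dimensional unknown $\bar{w}$.

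The core step is a rank/counting argument on the resulting linear system. Stack the constraints across $S$ as $\Psi\,\bar{w} = r$, with $\Psi \in \rR^{|S|\times n}$ whose rows are $\psi(s,a)^{\top}$ and $r \in \rR^{|S|}$ whose entries are $\bar{r}(s,a)$. A solution exists only if $\mathrm{rank}(\Psi) = \mathrm{rank}([\Psi\mid r])$. Since $\Psi$ has only $n$ columns, $\mathrm{rank}(\Psi)\le n$, so consistency would force $\mathrm{rank}([\Psi\mid r])\le n$. The two side hypotheses are designed exactly to prevent this: the non-zero TD-error condition guarantees that each row of $[\Psi\mid r]$ is itself non-zero (ruling out $\psi(s,a)=0$ together with $\bar{r}(s,a)=0$); the unique-predictions condition forces the scalars $\bar{w}^{\top}\bar{\phi}(s,a)$ to be pairwise distinct over $S$ and so prevents rows of $[\Psi\mid r]$ from coinciding in the specific way that would let the rank collapse. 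With $|S|>n+1$ such non-degenerate rows, one can extract $n+1$ linearly independent rows of $[\Psi\mid r]$, which forces $\mathrm{rank}([\Psi\mid r])\ge n+1 > n \ge \mathrm{rank}(\Psi)$ and thereby contradicts consistency. So \Cref{eq:propagation:mean} fails at some $(s,a)\in S$, and \Cref{lem:pq} delivers the bias in $\Ed{\theta}{Q_\theta}$. The variance claim follows by an analogous argument applied to \Cref{eq:propagation:var}, which under the same factorisation reduces to the quadratic identity $\bar{\phi}(s,a)^{\top}\Sigma_w\bar{\phi}(s,a) = \gamma^{2}\phi^{\star}(s,a)^{\top}\Sigma_w\,\phi^{\star}(s,a)$ with $\Sigma_w := \Vd{w}{w}$ and $\phi^{\star}(s,a) := \Ed{s'}{\bar{\phi}(s',\pi(s'))}$.

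The main obstacle is precisely the final combinatorial step: showing that the two side hypotheses really do suffice to locate $n+1$ linearly independent rows of $[\Psi\mid r]$. The extra ``$+1$'' in the threshold over the naive $n$ is natural (the augmented matrix has $n+1$ columns, and one needs $n+1$ independent rows to beat the $n$-column limit), but the careful work is in converting the per-$\theta$ non-zero TD-error condition and the pairwise uniqueness of mean predictions into genuine linear independence among rows, rather than merely non-triviality of each row.
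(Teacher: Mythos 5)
Your overall strategy is the same as the paper's: reduce the first-moment Bellman condition under $Q_\theta = w^{\top}\phi_\vartheta$ with a factorised posterior to an overdetermined linear system in $\Ed{w}{w}$, show the system is inconsistent, and invoke \Cref{lem:pq} to convert the failure of \Cref{eq:propagation:mean} into bias for every $p(M)$. However, the step you yourself flag as the ``main obstacle'' is a genuine gap, and the two side hypotheses alone do not close it. Non-zero TD-errors and pairwise-distinct mean predictions give you non-zero, pairwise-distinct rows of $[\Psi\mid r]$, but distinct non-zero vectors can all lie in a low-dimensional subspace, so nothing yet forces $\operatorname{rank}([\Psi\mid r])\ge n+1$. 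The paper closes this hole with an additional genericity ingredient: it writes each row as $\mu(s,a)\1^{\top} + e(s,a)^{\top}$ where $e(s,a)=\phi_\vartheta(s,a)-\Ed{s',r}{r+\gamma\phi_\vartheta(s',\pi(s'))}$ is treated as a \emph{random} vector, so the stacked error matrix $E$ is full rank (almost surely) and $\Phi=\mu\1^{\top}+E$ has rank at least $N-1$, whence $N-1>n$ kills consistency. Without some such genericity or general-position assumption on the approximation errors, your counting argument does not go through.

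There is a second, smaller gap in your treatment of the variance. Running the ``analogous argument'' on \Cref{eq:propagation:var} leads to a linear system in the entries of $\Sigma_w=\Vd{w}{w}$, which has on the order of $n^2$ unknowns; the paper's corresponding lemmas therefore require $N>n^2$ to force inconsistency of the second-moment equation \emph{independently}, not $N>n+1$. At the threshold $n+1$ stated in the theorem, the paper obtains the variance claim differently: since $\Vd{\theta}{Q_\theta}$ is constructed from the (now biased) mean estimator $\Ed{\theta}{Q_\theta}$, the bias of the mean propagates to the variance. You should replace your second-moment rank argument with this observation (or add it), otherwise your route only proves the variance claim under the stronger hypothesis $N>n^2$.
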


This result is a consequence of the function approximator $\phi$ mapping into a co-domain that is larger than the space spanned by $w$; the bias results from function approximation errors $\phi_\varphi(s, a) - \Ed{s', r}{r + \gamma \phi\varphi(s', \pi('s))}$ in more state-action pairs than there are degrees of freedom in $w$. The implication is that function approximators under factorised posteriors cannot generalise uncertainty estimates across states \citep[a similar observation in tabular settings was made by][]{Janz:2019su}---they can only produce temporally consistent uncertainty estimates if they have the capacity to memorise point-wise uncertainty estimates for each $(s, a)$, which defeats the purpose of a function approximator. This is a statement about the structure of $p(\theta)$ and holds for any estimation method. Thus, common approaches to uncertainty estimation with neural networks generally fail to provide unbiased uncertainty estimates over the value function in non-trivial MDPs. \Cref{thm:unc} shows that to accurately capture value function uncertainty, we need a full posterior over parameters, which is often infeasible. It also underscores that the main issue is the dependence on future state visitation. This motivates \lname as an estimate of uncertainty \emph{conditioned} on observed state-action transitions. 

\section{Temporal Difference Uncertainties}\label{sec:tdu}

While \Cref{thm:unc} states that we cannot remove this bias unless we are willing to maintain a full posterior $p(\theta)$, we can construct uncertainty estimates that control for uncertainty over future state-action transition. In this paper, we propose to estimate uncertainty over a full transition $\tau \coloneqq (s, a, r, s')$ to isolate uncertainty due to $p(\theta)$. Fixing a transition, we induce a conditional distribution $p(\delta \g \tau)$ over Temporal Difference (TD) errors, $\delta(\theta, \tau) \coloneqq \gamma Q_{\theta}(s', \pi(s')) + r - Q_{\theta}(s, a)$, that we characterise by its mean and variance: 

\begin{equation}
\label{eq:td}
\Ed{\delta}{\delta \g \tau}
=
\Ed{\theta}{\delta(\theta, \tau) \g \tau} \qquad \text{and} \qquad
\Vd{\delta}{\delta \g \tau}
=
\Vd{\theta}{\delta(\theta, \tau) \g \tau}.
\end{equation}

Estimators over TD-errors is akin to first-difference estimators of uncertainty over the action-value. They can therefore exhibit smaller bias if that bias is temporally consistent. To illustrate, for simplicity assume that $\Ed{\theta}{Q_\theta}$ consistently over/under-estimates $\Ed{M}{Q^M_\pi}$ by an amount $b \in \rR$. The corresponding bias in $\Ed{\theta}{\delta(\theta, \tau) \g \tau}$ is given by $\operatorname{Bias}(\Ed{\theta}{\delta(\theta, \tau) \g \tau}) =  \operatorname{Bias}(\gamma \Ed{\theta}{Q_\theta(s', \pi(s'))} + r - \Ed{\theta}{Q_\theta(s, a)}) = (\gamma - 1) b$. This bias is close to $0$ for typical values of $\gamma$---notably for $\gamma=1$, $\Ed{\theta}{\delta(\theta, \tau) \g \tau}$ is unbiased. More generally, unless the bias is constant over time as in the above example, we cannot fully remove the bias when constructing an estimator over a quantity that relies on $Q_\theta$. However, as the above example shows, by conditioning on a state-action transition, we can make it significantly smaller. We formalise this logic in the following result.

\begin{thm}[Temporal difference uncertainty estimation]\label{prop:bias}
For any $\tau \coloneqq (s, a, r, s')$ and any $p(M)$, given $p(\theta)$, define the following ratios:
\begin{align*}
\rho &= \frac{\operatorname{Bias}\left(\Ed{\theta}{Q_\theta(s', \pi(s'))}\right)}{ \operatorname{Bias}\left(\Ed{\theta}{Q_\theta(s,a)}\right)} &&
\phi = \frac{\operatorname{Bias}\left(\Ed{\theta}{Q_\theta(s', \pi(s'))^2}\right)}{ \operatorname{Bias}\left(\Ed{\theta}{Q_\theta(s,a)^2}\right)} \\[1em]
\kappa &= \frac{\operatorname{Bias}\left(\Ed{\theta}{Q_\theta(s', \pi(s'))Q_\theta(s,a)}\right)}{ \operatorname{Bias}\left(\Ed{\theta}{Q_\theta(s,a)^2}\right)} &&
\alpha = \frac{\Ed{M}{Q_\pi^M(s', \pi(s'))}}{ \,\Ed{M}{Q_\pi^M(s, a)}}.
\end{align*}

If $\rho \in (0, 2/\gamma)$, then $\Ed{\delta}{\delta \g \tau}$ has lower bias than $\Ed{\theta}{Q_\theta(s, a)}$. Moreover, if $\rho = 1 / \gamma$, then $\Ed{\delta}{\delta \g \tau}$ is unbiased. If $\rho \in (0, 2/\gamma)$, $\alpha \in (0, 2 / \gamma)$, $\kappa \in (0, 2 / \gamma)$ and $\phi \in (1 - 2 \gamma \kappa, (2 / \gamma)^2)$, then $\Vd{\theta}{\delta(\theta, \tau) \g \tau}$ have less bias than $\Vd{\theta}{Q_\theta(s, a)}$. In particular, if $\rho = \phi = \kappa = \alpha = 1$, then 

\begin{equation}\label{eq:tdu:bias}
\begin{aligned}
|\operatorname{Bias}(\Vd{\theta}{\delta(\theta, \tau) \g \tau})|
&= |(\gamma-1)^2 \operatorname{Bias}(\Vd{\theta}{Q_\theta(s, a)})| \\
&< |\operatorname{Bias}(\Vd{\theta}{Q_\theta(s, a)})|.
\end{aligned}
\end{equation}

Further, $\rho = 1 / \gamma$, $\kappa = 1/\gamma$, $\phi = 1 / \gamma^2$, then $\Vd{\theta}{\delta(\theta, \tau) \g \tau}$ is unbiased for any $\alpha$.
\end{thm}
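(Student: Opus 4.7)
The plan is to reduce the entire proof to algebraic identities in a small set of primitive bias quantities, and then read off each claim by invoking the hypothesised ratios. Define $B_X \coloneqq \operatorname{Bias}(\Ed{\theta}{Q_\theta(s,a)})$, $B_{X^2} \coloneqq \operatorname{Bias}(\Ed{\theta}{Q_\theta(s,a)^2})$, and $\mu_X \coloneqq \Ed{M}{Q_\pi^M(s,a)}$, together with the successor analogues $B_Y, B_{Y^2}, B_{XY}$. By construction the ratio definitions give $B_Y = \rho B_X$, $B_{Y^2} = \phi B_{X^2}$, $B_{XY} = \kappa B_{X^2}$, and $\Ed{M}{Q_\pi^M(s',\pi(s'))} = \alpha\,\mu_X$.

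For the mean claim, I would apply linearity of the bias operator to $\delta(\theta,\tau) = \gamma Q_\theta(s',\pi(s')) + r - Q_\theta(s,a)$; since $r$ is fixed by the conditioning on $\tau$, this immediately yields $\operatorname{Bias}(\Ed{\delta}{\delta \g \tau}) = \gamma B_Y - B_X = (\gamma\rho - 1)\,B_X$. Therefore $|\operatorname{Bias}(\Ed{\delta}{\delta \g \tau})| < |B_X|$ iff $|\gamma\rho - 1| < 1$, i.e.\ $\rho \in (0, 2/\gamma)$, and the estimator is exactly unbiased when $\rho = 1/\gamma$.

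For the variance I would expand $\Vd{\theta}{\delta \g \tau} = \gamma^2 \Vd{\theta}{Q_\theta(s',\pi(s'))} + \Vd{\theta}{Q_\theta(s,a)} - 2\gamma\,\operatorname{Cov}_\theta(Q_\theta(s',\pi(s')), Q_\theta(s,a))$ and the analogous decomposition of $\Vd{M}{\delta^M \g \tau}$. Using $\mathbb{V}[Z] = \mathbb{E}[Z^2] - \mathbb{E}[Z]^2$ and the corresponding identity for covariance, the bias in each second-moment quantity splits into a raw second-moment bias minus a quadratic correction from the first-moment biases. Substituting the ratios and collecting terms by $B_{X^2}$, $\mu_X B_X$, and $B_X^2$ gives
\[ \operatorname{Bias}(\Vd{\theta}{\delta \g \tau}) = (\gamma^2\phi + 1 - 2\gamma\kappa)\,B_{X^2} - 2(1-\gamma\alpha)(1-\gamma\rho)\,\mu_X B_X - (1-\gamma\rho)^2\,B_X^2, \]
while $\operatorname{Bias}(\Vd{\theta}{Q_\theta(s,a)}) = B_{X^2} - 2\mu_X B_X - B_X^2$. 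The crucial simplification here is the factorisation $\gamma^2\alpha\rho + 1 - \gamma(\alpha+\rho) = (1-\gamma\alpha)(1-\gamma\rho)$, which cleanly decouples $\alpha$ from $\rho$ in the cross term and explains why the unbiasedness conditions later turn out to be $\alpha$-free.

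From this master identity all remaining claims follow by term-wise comparison with $\operatorname{Bias}(\Vd{\theta}{Q_\theta(s,a)})$. The hypotheses $\rho, \alpha \in (0, 2/\gamma)$ force $|1-\gamma\rho|, |1-\gamma\alpha| < 1$, contracting the cross and $B_X^2$ contributions; the hypotheses $\kappa \in (0, 2/\gamma)$ and $\phi \in (1 - 2\gamma\kappa,\ 4/\gamma^2)$ force $|\gamma^2\phi + 1 - 2\gamma\kappa| < 1$, contracting the $B_{X^2}$ contribution. Substituting $\rho = \phi = \kappa = \alpha = 1$ turns all three coefficients into $(\gamma-1)^2$ and factors out, yielding \Cref{eq:tdu:bias}; substituting $\rho = \kappa = 1/\gamma$ and $\phi = 1/\gamma^2$ sends each coefficient to zero, and this is true for \emph{any} $\alpha$ because the only $\alpha$-bearing term is annihilated by the $(1-\gamma\rho) = 0$ factor. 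The main obstacle is the careful expansion and regrouping needed to arrive at the boxed identity --- in particular spotting the $(1-\gamma\alpha)(1-\gamma\rho)$ factorisation --- and the secondary subtlety that the term-wise ``less bias'' reduction presumes the three contributions are compared in absolute value rather than being allowed to cancel, which is precisely what the stated ratio conditions supply.
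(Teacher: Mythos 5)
Your proof follows essentially the same route as the paper's: the mean claim via linearity of the bias, and the variance claim via decomposing $\operatorname{Bias}(\Vd{\theta}{\delta(\theta,\tau)\g\tau})$ into three weighted copies of the terms appearing in $\operatorname{Bias}(\Vd{\theta}{Q_\theta(s,a)})$, with exactly the weights $(\gamma^2\phi-2\gamma\kappa+1)$, $(\gamma\alpha-1)(\gamma\rho-1)$ and $(\gamma\rho-1)^2$ that the paper derives, including the key $(\gamma\alpha-1)(\gamma\rho-1)$ factorisation of the cross term. The one caveat --- shared with the paper, whose appendix actually derives the interval $\phi\in\bigl((2\gamma\kappa-2)/\gamma^2,\,2\kappa/\gamma\bigr)$ rather than the one in the theorem statement and then retreats to an existence claim --- is that the stated interval conditions do not by themselves force $|\gamma^2\phi-2\gamma\kappa+1|<1$ nor term-wise domination when the three contributions have mixed signs; the exact conclusions for $\rho=\phi=\kappa=\alpha=1$ and for $\rho=\kappa=1/\gamma$, $\phi=1/\gamma^2$ are airtight in both your write-up and the paper's.
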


The first part of \Cref{prop:bias} generalises the example above to cases where the bias $b$ varies across action-state transitions. It is worth noting that the required ``smoothness'' on the bias is not very stringent: the bias of $\Ed{\theta}{Q_\theta}(s', \pi(s'))$ can be twice as large as that of $\Ed{\theta}{Q_\theta}(s, a)$ and $\Ed{\delta}{\delta \g \tau}$ can still produce a less biased estimate. Importantly, it must have the same sign, and so \Cref{prop:bias} requires temporal consistency.

To establish a similar claim for $\Vd{\delta}{\delta \g \tau}$, we need a bit more structure to capture the second-order nature of the estimator. The ratios $\rho$, $\phi$, and $\kappa$ describe the temporal structure of the bias in second-order estimators. Analogous to the mean, given sufficient temporal consistency in these biases, $\Vd{\theta}{\delta(\theta, \tau) \g \tau}$ will have less bias. Again, the temporal consistency is not overly stringent; the bias of estimates at consecutive states must agree on sign but is their relative magnitudes can vary significantly. For most transitions, it is reasonable to assume that these conditions hold true. In some MDPs, large changes in the reward can cause these requirements to break. Because \Cref{prop:bias} only establishes sufficiency, violating this requirement does not necessarily mean that $\Vd{\delta}{\delta \g \tau}$ has greater bias than $\Vd{\theta}{Q_\theta(s, a)}$. Finally, it is worth noting that these are statements about a given transition $\tau$. In most state-action transitions, the requirements in \Cref{prop:bias} will hold, in which case $\Ed{\delta}{\delta \g \tau}$ and $\Vd{\delta}{\delta \g \tau}$ exhibit less overall bias. We provide direct empirical support that \Cref{prop:bias} holds in practice through careful ceteris paribus comparisons in \Cref{sec:bsuite}. 

To obtain a concrete signal for exploration, we follow \citet{oDonoghue:2018un} and derive an exploration signal from the variance $\Vd{\theta}{\delta(\theta, \tau) | \tau}$. Because $p(\delta \g \tau)$ is defined per transition, it cannot be used as-is for posterior sampling. Therefore, we incorporate \sname as a signal for exploration via an intrinsic reward. To obtain an exploration signal that is on approximately the same scale as the extrinsic reward, we use the standard deviation $\sigma(\tau) \coloneqq \sqrt{\Vd{\theta}{\delta(\theta, \tau) \g \tau}}$ to define an augmented reward function

\begin{equation}\label{eq:reward}
\tilde{{\cR}}(\tau) \coloneqq \cR((s, a) \in \tau) + \beta \ \sigma(\tau),
\end{equation}

where $\beta \in [0, \infty)$ is a hyper-parameter that determines the emphasis on exploration. Another appealing property of $\sigma$ is that it naturally decays as the agent converges on a solution (as model uncertainty diminishes); \sname defines a distinct MDP $(\cS, \cA, \cP, \tilde{\cR}, \gamma)$ under \Cref{eq:reward} that converges on the true MDP in the limit of no model uncertainty. For a given policy $\pi$ and distribution $p(Q_\theta)$, there exists an exploration policy $\mu$ that collects transitions over which $p(Q_\theta)$ exhibits maximal uncertainty, as measured by $\sigma$. In hard exploration problems, the exploration policy $\mu$ can behave fundamentally differently from $\pi$. To capture such distinct exploration behaviour, we treat $\mu$ as a separate exploration policy that we train to maximise the augmented reward $\tilde{\cR}$, along-side training a policy $\pi$ that maximises the extrinsic reward $\cR$.

This gives rise to a natural separation of exploitation and exploration in the form of a cooperative multi-agent game, where the exploration policy is tasked with finding experiences where the agent is uncertain of its value estimate for the greedy policy $\pi$. As $\pi$ is trained on this data, we expect uncertainty to vanish (up to noise). As this happens, the exploration policy $\mu$ is incentivised to find new experiences with higher estimated uncertainty. This induces a particular pattern where exploration will reinforce experiences until the agent's uncertainty vanishes, at which point the exploration policy expands its state visitation further. This process can allow \sname to overcome estimation bias in the posterior---since it is in effect exploiting it---in contrast to previous methods that do not maintain a distinct exploration policy. We demonstrate this empirically both on Montezuma's Revenge and on Deep Sea \citep{Osband:2020be}.

\section{Implementing \sname with Bootstrapping}\label{sec:implementation}

The distribution over TD-errors that underlies \sname can be estimated using standard techniques for probability density estimation. In this paper, we leverage the statistical bootstrap as it is both easy to implement and provides a robust approximation without requiring distributional assumptions. \sname is easy to implement under the statistical bootstrap---it requires only a few lines of extra code. It can be implemented with value-based as well as actor-critic algorithms (we provide generic pseudo code in \Cref{app:implementation}); in this paper, we focus on $Q$-learning. $Q$-learning alternates between policy evaluation (\Cref{eq:Q}) and policy improvement under a greedy policy $\pi_\theta(s) = \argmax_{a} \ Q_{\theta}(s, a)$. Deep $Q$-learning  \citep{Mnih:2015ql} learns $Q_\theta$ by minimising its TD-error by stochastic gradient descent on transitions sampled from a replay buffer. Unless otherwise stated, in practice we adopt a common approach of evaluating the action taken by the learned network through a target network with separate parameters that are updated periodically \citep{Hasselt2016de}.

Our implementation starts from the bootstrapped DQN \citep{Osband:2016de}, which maintains a set of $K$ function approximators $\cQ = {\{Q_{\theta^k}\}}_{k=1}^K$, each parameterised by $\theta^k$ and regressed towards a unique target function using bootstrapped sampling of data from a shared replay memory. The Bootstrapped DQN derives a policy $\pi_\theta$ by sampling $\theta$ uniformly from $\cQ$ at the start of each episode. We provide an overview of the Bootstrapped DQN in \Cref{alg:main} for reference. To implement \sname in this setting, we make a change to the loss function (\Cref{alg:train}, changes highlighted in green). First, we estimate the \sname signal $\sigma$ using bootstrapped value estimation. We estimate $\sigma$ through observed TD-errors $\{\delta_k\}_{k=1}^K$ incurred by the ensemble $\cQ$ on a given transition:

\begin{equation}\label{eq:var}
\sigma(\tau) \approx \sqrt{\frac{1}{K-1} \sum_{k=1}^{K} {\left(\delta(\theta_k, \tau) - \bar{\delta}(\tau)\right)}^2},
\end{equation}

where $\bar{\delta} = \gamma \bar{Q}' + r - \bar{Q}$, with $\bar{x} \coloneqq \frac1K \sum_{i=1}^K x_i$ and $Q' \coloneqq Q(s', \pi(s'))$. An important assumption underpinning the bootstrapped estimation is that of stochastic optimism \citep{Osband:2016ge}, which requires the distribution over $\cQ$ to be approximately as wide as the true distribution over value estimates. If not, uncertainty over $\cQ$ can collapse, which would cause $\sigma$ to also collapse. To prevent this, $\cQ$ can be endowed with a prior \citep{Osband:2018ra} that maintains diversity in the ensemble by defining each value function as $Q_{\theta^k} + \lambda P_k$, $\lambda \in [0, \infty)$, where $P_k$ is a random prior function.

Rather than feeding this exploration signal back into the value functions in $\cQ$, which would create a positive feedback loop (uncertainty begets higher reward, which begets higher uncertainty ad-infinitum), we introduce a separate ensemble of exploration value functions $\tilde{\cQ} = \{Q_{\tilde{\theta}^k}\}_{k=1}^N$ that we train over the augmented reward (\Cref{eq:reward,eq:var}). We derive an exploration policy $\mu_{\tilde{\theta}}$ by sampling exploration parameters $\tilde{\theta}$ uniformly from $\tilde{\cQ}$, as in the standard bootstrapped DQN. 

In summary, our implementation of \sname maintains $K+N$ value functions. The first $K$ defines a standard Bootstrapped DQN. From these, we derive an exploration signal $\sigma$, which we use to train the last $N$ value functions. At the start of each episode, we proceed as in the standard Bootstrapped DQN and randomly sample a parameterisation $\theta$ from $\cQ \cup \tilde{\cQ}$ that we act under for the duration of the episode. All value functions are trained by bootstrapping from a single shared replay memory (\Cref{alg:main}); see \Cref{app:implementation} for a complete JAX \citep{Bradbury:2018jax} implementation. Consequently, we execute the (extrinsic) reward-maximising policy $\pi_{\theta \sim \cQ}$ with probability $\nicefrac{K}{(K+N)}$ and the exploration policy $\mu_{\tilde{\theta} \sim \tilde{\cQ}}$ with probability $\nicefrac{N}{(K+N)}$. While $\pi$ visits states around current reward-maximising behaviour, $\mu$ searches for data with high model uncertainty. While each population $\cQ$ and $\tilde{Q}$ can be seen as performing Bayesian inference, it is not immediately clear that the full agent admits a Bayesian interpretation. We leave this question for future work.

There are several equally valid implementations of \sname (see \Cref{app:implementation} for generic implementations for value-based learning and policy-gradient methods). In our case, it would be equally valid to define only a single exploration policy (i.e. $N=1$) and specify the probability of sampling this policy. While this can result in faster learning, a potential drawback is that it restricts the exploratory behaviour that $\mu$ can exhibit at any given time. Using a full bootstrapped ensemble for the exploration policy leverages the behavioural diversity of bootstrapping.

\begin{figure}[t!]
  \begin{minipage}[t]{.49\linewidth}
    \begin{algorithm}[H]
    \caption{Bootstrapped DQN with \sname}\label{alg:main}
    \begin{algorithmic}[1]
      \REQUIRE{$M, \cL$: MDP to solve, \sname loss}
      \REQUIRE{$\beta, K, N, \rho$: hyper-parameters}
      \STATE{Initialise $\cB$: replay buffer}
      \STATE{Initialise $K+N$ value functions, $\cQ \cup \tilde{\cQ}$}
      \WHILE{not done}
        \STATE{Observe $s$ and choose $Q_k \sim \cQ \cup \tilde{\cQ}$}
        \WHILE{episode not done}
          \STATE{Take action $a = \argmax_{\hat{a}} Q_k(s, \hat{a})$}
          \STATE{Sample mask $m$, $m_i\!\sim\!\!\operatorname{Bin}(n\!=\!\!1, p\!=\!\!\rho)$}
          \STATE{Enqueue transition $(s, a, r, s', m)$ to $\cB$}
          \STATE{\greenemph{Optimise $\cL(\{\theta^k\}_1^K\!, \{\tilde{\theta}^k\}_1^N\!,\!\gamma, \beta, \cD\!\!\sim\!\!\cB)$}}
        \ENDWHILE{}
       \ENDWHILE{}
      \end{algorithmic}
    \end{algorithm}
  \end{minipage}
  \begin{minipage}[t]{0.49\linewidth}
    \begin{algorithm}[H]
    \caption{Bootstrapped TD-loss with \sname.}\label{alg:train}
    \begin{algorithmic}[1]
      \REQUIRE{$\{\theta^k\}_{1}^K, \{\tilde{\theta}^k\}_{1}^N$: parameters}
      \REQUIRE{$\gamma, \beta, \cD$: hyper-parameters, data}
      \STATE{Initialise $\ell \leftarrow 0$}
      \FOR{$s, a, r, s', m \in \cD$}
        \STATE{$\tau \leftarrow (s, a, r, s', \gamma)$}
        \STATE{Compute $\{\delta_i\}_{i=1}^K = \{\delta(\theta^i , \tau)\}_{i=1}^K$}
        \STATE{\greenemph{Compute $\sigma$ from $\{\delta_k\}_{k=1}^K$ (\Cref{eq:var})}}
        \STATE{\greenemph{Update $\tau$ by $r \leftarrow r + \beta \, \sigma$}}
        \STATE{Compute $\{\tilde{\delta}_j\}_{j=K\!+\!1}^N = \{\delta(\tilde{\theta}^j , \tau)\}_{j=K\!+\!1}^N$}
        \STATE{$\ell \leftarrow \ell + \sum_{i=1}^K m_i \delta_i^2 + \sum_{j=1}^N m_{K+j}\tilde{\delta}_j^2$}
      \ENDFOR{}
      \STATE{\textbf{return:} $\ell \, / \, (2(N+K)|\cD|)$}
      \end{algorithmic}
    \end{algorithm}
  \end{minipage}    
\end{figure}

\section{Empirical Evaluation}\label{sec:results}

\subsection{Behaviour Suite}\label{sec:bsuite}

Bsuite \citep{Osband:2020be} was introduced as a benchmark for characterising core capabilities of RL agents. We focus on a Deep Sea, which is explicitly designed to test for deep exploration. It is a challenging exploration problem where only one out of $2^N$ policies yields any positive reward. Performance is compared on instances of the environment with grid sizes $N\in\{10, 12, \ldots, 50\}$, with an overall ``score'' that is the percentage of $N$ for which average regret goes to below 0.9 faster than $2^N$. The stochastic version generates a ‘bad’ transition with probability $1 / N$. This is a relatively high degree of uncertainty since the agent cannot recover from a bad transition in an episode.

For all experiments, we use a standard MLP with $Q$-learning, off-policy replay and a separate target network. See \Cref{app:bsuite} for details and \sname results on the full suite. We compare \sname on Deep Sea to a battery of exploration methods, broadly divided into methods that facilitate exploration by (a) sampling from a posterior (Bootstrapped DQN, Noisy Nets \citep{Fortunato:2017no}, Successor Uncertainties \citep{Janz:2019su}) or (b) use an intrinsic reward (Random Network Distillation \citep[RND;][]{Burda:2018ro}, CTS \citep{Bellemare:2016un}, and Q-Explore \citep[QEX;][]{Simmons:2019qx}). We report best scores obtained from a hyper-parameter sweep for each method. Overall, performance varies substantially between methods; only \sname performs (near-)optimally on both the deterministic and stochastic version. Methods that rely on posterior sampling do well on the deterministic version, but suffer a substantial drop in performance on the stochastic version. As the stochastic version serves to increase the complexity of modelling future state visitation, this is clear evidence that these methods suffer from the estimation bias identified in \Cref{sec:bias}. We could not make Q-explore and NoisyNets perform well in the default Bsuite setup, while Successor Uncertainties suffers a catastrophic loss of performance on the stochastic version of DeepSea.

\begin{figure}[t!]
    \centering
    \begin{subfigure}{0.51\columnwidth}
      \centering
      \includegraphics[width=\linewidth]{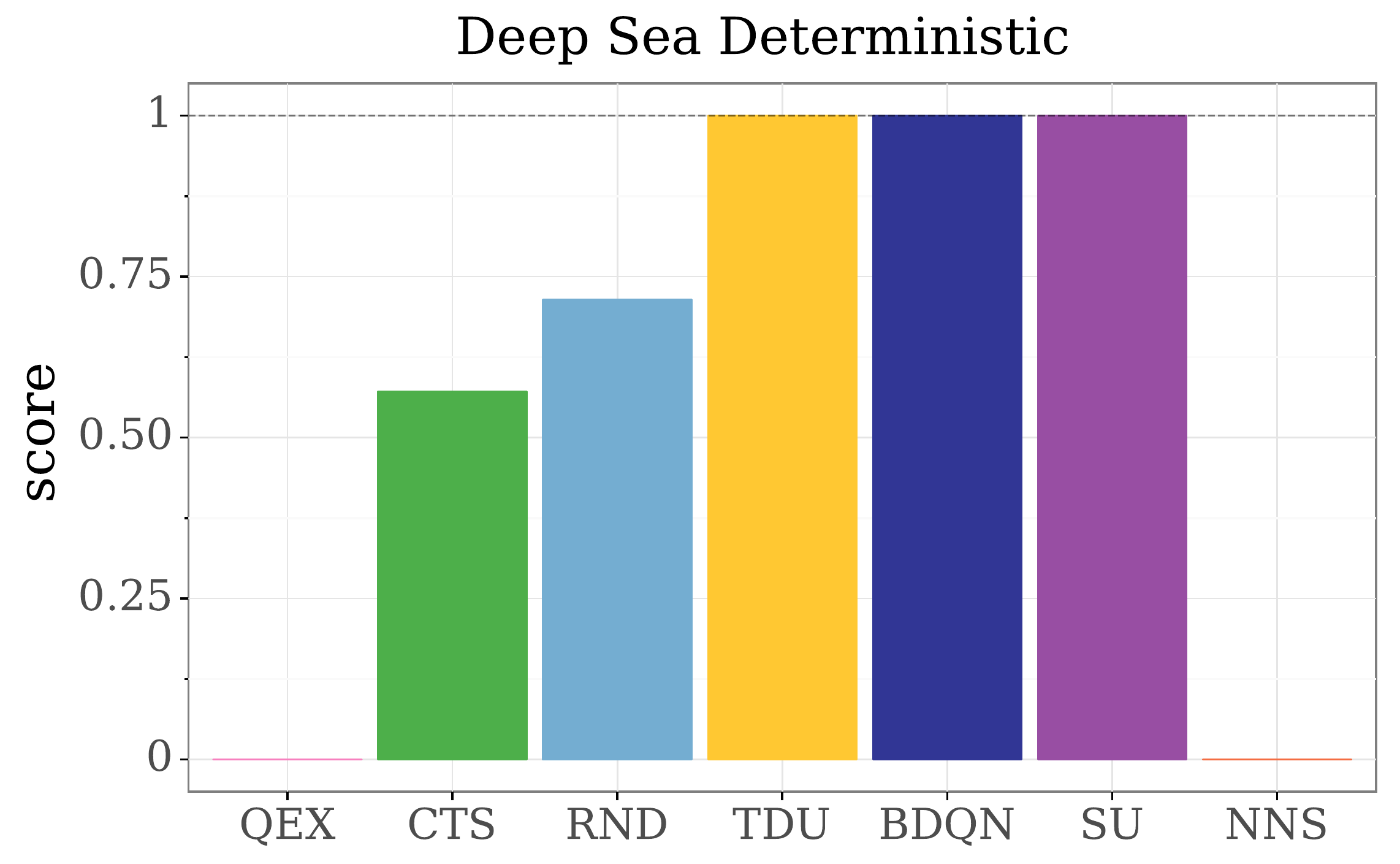}
    \end{subfigure}
    \begin{subfigure}{0.48\columnwidth}
      \centering
      \includegraphics[width=\linewidth]{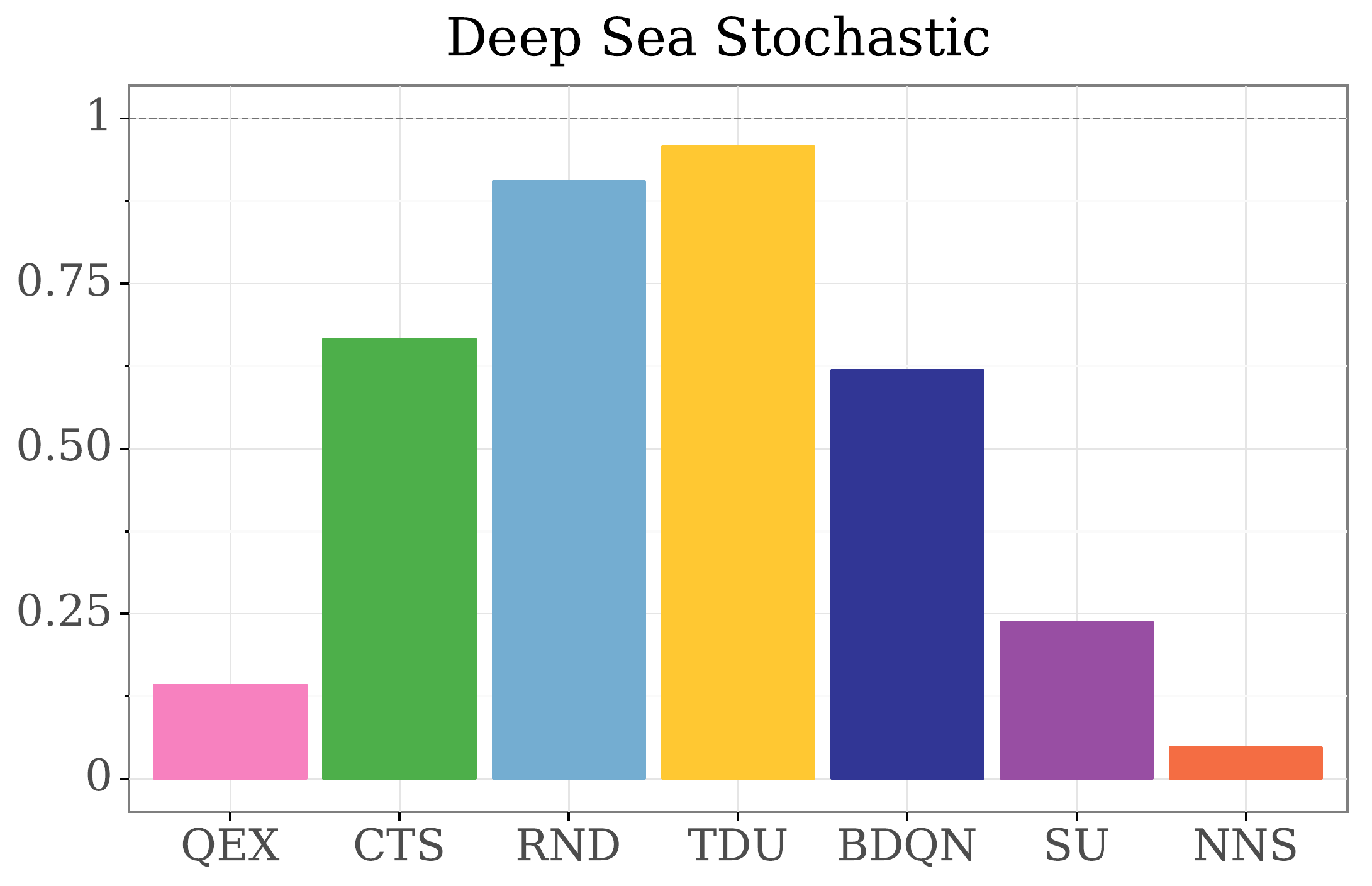}
    \end{subfigure}
  \caption{Deep Sea Benchmark. QEX, CTS, and RND use intrinsic rewards; BDQN, SU, and NNS use posterior sampling (\Cref{sec:bsuite}). Posterior sampling does well on the deterministic version, but struggles on the stochastic version, suggesting an estimation bias (\Cref{sec:bias}). Only \sname performs (near-)optimally on both the deterministic and the stochastic version of Deep Sea.}
  \label{fig:bsuite-benchmark}
\end{figure}

Examining \sname, we find that it facilitates exploration while retaining overall performance except on Mountain Car where $\beta > 0$ hurts performance (\Cref{app:bsuite}). For Deep Sea (\Cref{fig:bsuite}), prior functions are instrumental, even for large exploration bonuses ($\beta \gg 0$). However, for a given prior strength, \sname does better than the BDQN ($\beta=0$). In the stochastic version of Deep Sea, BDQN suffers a significant loss of performance (\Cref{fig:bsuite}). As this is a ceteris paribus comparison, this performance difference can be directly attributed to an estimation bias in the BDQN that \sname circumvents through its intrinsic reward. That \sname is able to facilitate efficient exploration despite environment stochasticity demonstrates that it can correct for such estimation errors.

Finally, we verify \Cref{prop:bias} experimentally. We compare \sname to versions that estimate uncertainty directly over $\cQ$ (full analysis in \Cref{app:bsuite:experiments}). We compare \sname to (a) a version where $\sigma$ is defined as standard deviation over $\cQ$ and (b) where $\sigma(\cQ)$ is used as an upper confidence bound in the policy instead of as an intrinsic reward (\Cref{fig:bsuite}). Neither matches \sname's performance across Bsuite an in particular on Deep Sea. Being ceteris paribus comparisons, this demonstrates that estimating uncertainty over TD-errors provides a stronger signal for exploration, as per \Cref{prop:bias}.

\begin{figure}[t!]
    \centering
    \begin{subfigure}{0.50\columnwidth}
      \centering
      \includegraphics[width=\linewidth]{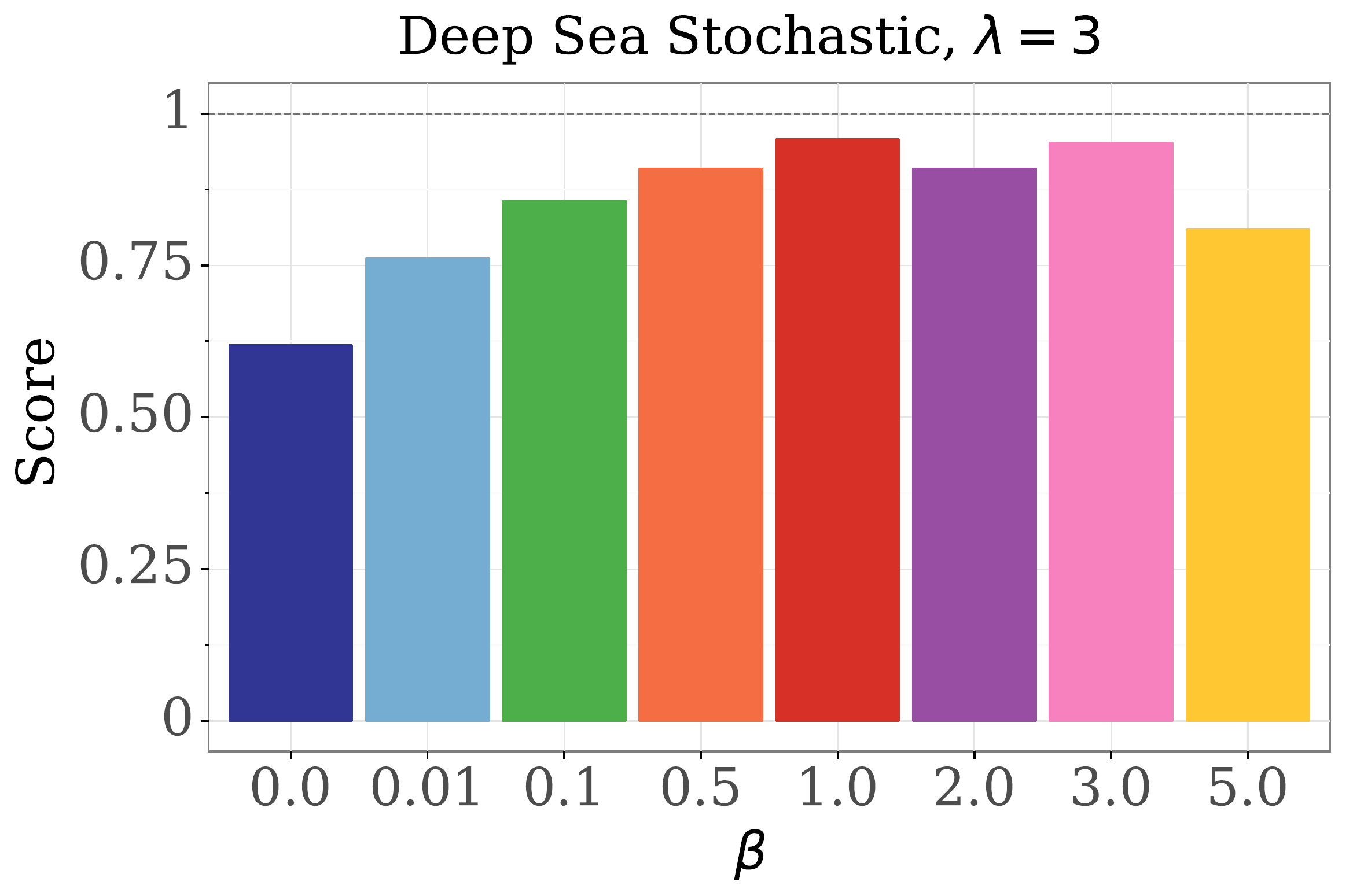}
    \end{subfigure}
    \begin{subfigure}{0.16\columnwidth}
      \centering
      \includegraphics[width=\linewidth]{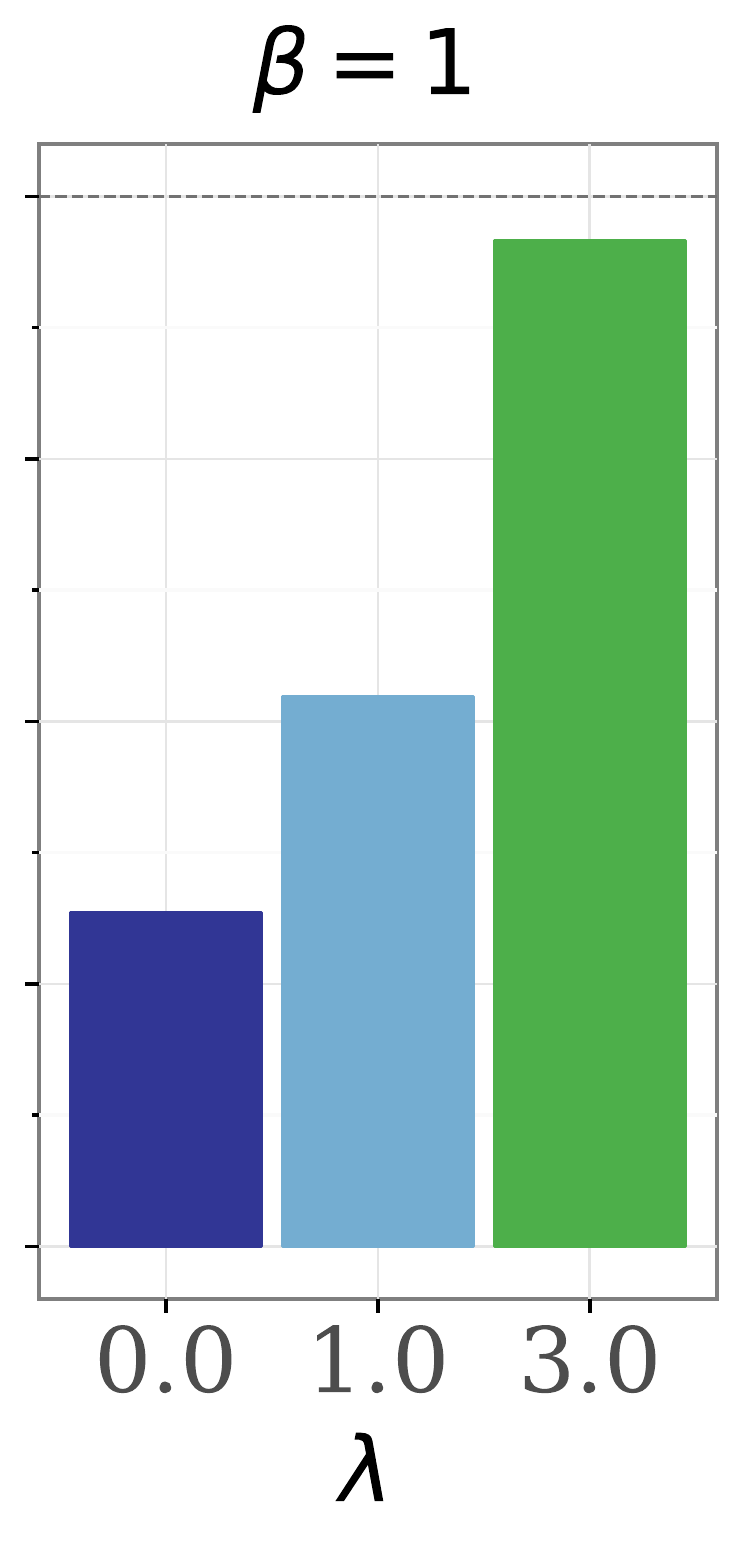}
    \end{subfigure}
    \begin{subfigure}{0.16\columnwidth}
      \centering
      \includegraphics[width=\linewidth]{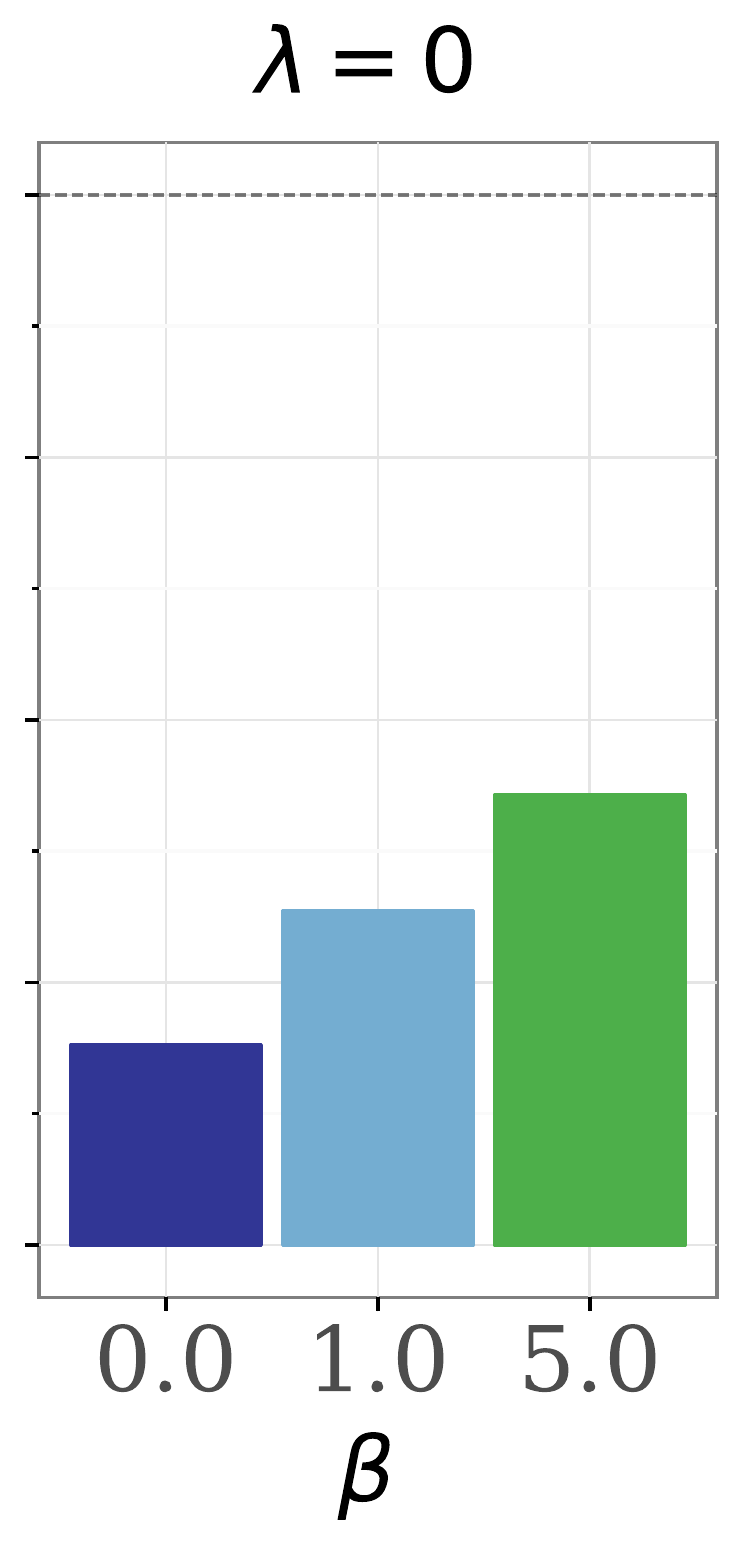}
    \end{subfigure}    
    \begin{subfigure}{0.16\columnwidth}
      \centering
      \includegraphics[width=\linewidth]{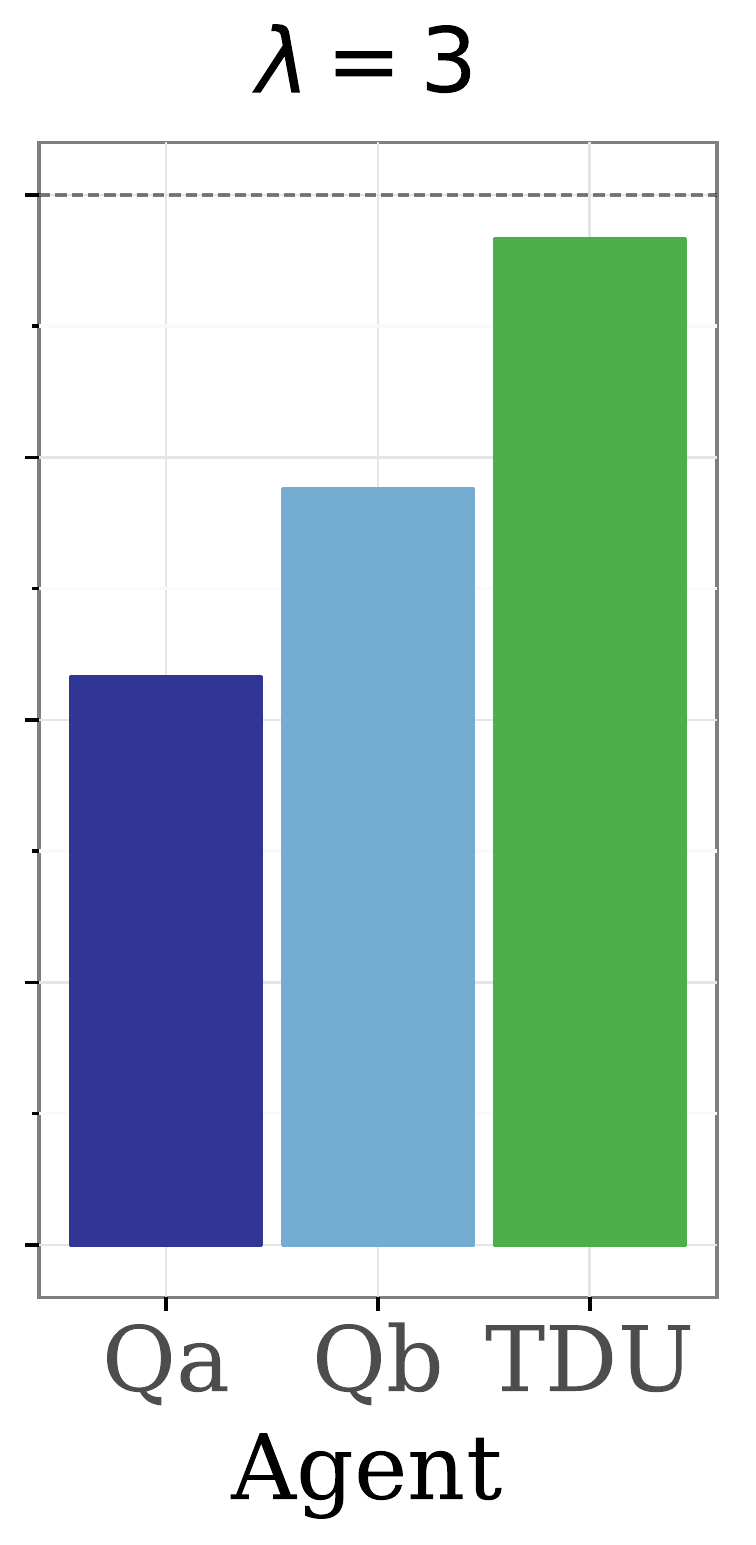}
    \end{subfigure}    
  \caption{Deep Sea results. All models solve the deterministic version for prior scale $\lambda=3$ (dashed line). \sname also solves it for $\lambda=1$. \textit{Left:} introducing stochasticity substantially deteriorates baseline performance; including \sname ($\beta \! > \! 0$) recovers close to full performance. \textit{Center left:} effect of varying $\lambda$, \sname benefits from diversity in $Q$ estimates. \textit{Center right}: effect of removing prior ($\lambda\!=\!0$). Increasing $\beta$ improves exploration, but does not reach full performance. \textit{Right:} Qa replaces $\sigma(\delta)$ with $\sigma(\cQ)$, Qb acts  by $\arg\!\max_a (Q + \sigma(\cQ))(s, a)$. Estimating uncertainty over $Q$ fails to match \sname.}
  \label{fig:bsuite}
\end{figure}

\subsection{Atari}\label{sec:atari}

\Cref{thm:unc} shows that estimation bias is particularly likely in complex environments that require neural networks to generalise across states. In recent years, such domains have seen significant improvements from running on distributed training platforms that can process large amounts of experience obtained through agent parallelism. It is thus important to develop exploration algorithms that scale gracefully and can leverage the benefits of distributed training. Therefore, we evaluate whether \sname can have a positive impact when combined with the Recurrent Replay Distributed DQN (R2D2) \citep{kapturowski2018recurrent}, which achieves state-of-the-art results on the Atari2600 suite by carefully combining a set of key components: a recurrent state, experience replay, off-policy value learning and distributed training.

\begin{figure}[t!]
    \centering
      \centering
      \includegraphics[width=1.\linewidth]{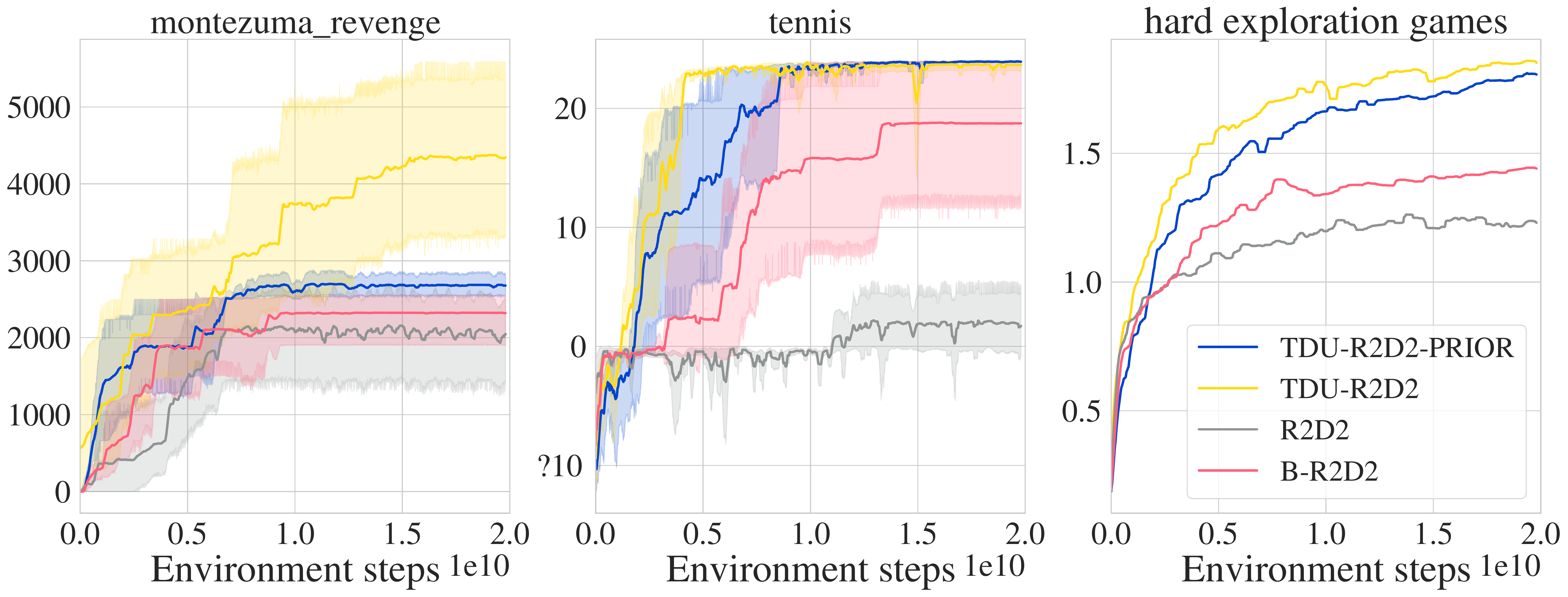}
  \caption{Atari results with distributed training. We compare TDU with and without additive prior functions to R2D2 and Bootstrapped R2D2 (B-R2D2). \textit{Left}: Results for \texttt{montezuma\_revenge}. \textit{Center}: Results for \texttt{tennis}. \textit{Right}: Mean HNS for the hard exploration games in the Atari2600 suite (including \texttt{tennis}). Shading depicts standard deviation over 8 seeds.} 
  \label{fig:r2d2_atari}
\end{figure}

As a baseline we implemented a distributed version of the bootstrapped DQN with additive prior functions. We present full implementation details, hyper-parameter choices, and results on all games in \Cref{app:atari-r2d2}. For our main results, we run each agent on 8 seeds for 20 billion steps. We focus on games that are well-known to pose challenging exploration problems \citep{Machado:2018re}: \texttt{montezuma\_revenge}, \texttt{pitfall}, \texttt{private\_eye}, \texttt{solaris}, \texttt{venture}, \texttt{gravitar}, and \texttt{tennis}. Following standard practice, \Cref{fig:r2d2_atari} reports Human Normalized Score (HNS), $\text{HNS}=\frac{\text{Agent}_{\text{score}}-\text{Random}_{\text{score}}}{\text{Human}_{\text{score}}-\text{Random}_{\text{score}}}$, as an aggregate result across exploration games as well as results on \texttt{montezuma\_revenge} and \texttt{tennis}, which are both known to be particularly hard exploration games \citep{Machado:2018re}.

Generally, we find that \sname facilitates exploration substantially, improving the mean HNS score across exploration games by 30\% compared to baselines (right panel, \Cref{fig:r2d2_atari}). An ANOVA analysis yields a statistically significant difference between TDU and non-TDU methods, controlling for game ($F = 8.17, p = 0.0045$). Notably, \sname achieves significantly higher returns on \texttt{montezuma\_revenge} and is the only agent that consistently achieves the maximal return on \texttt{tennis}. We report all per-game results in \Cref{app:atari:main}. We observe no significant gains from including prior functions with \sname and find that bootstrapping alone produces relatively marginal gains. Beyond exploration games, \sname can match or improve upon the baseline, but exhibits sensitivity to \sname hyper-parameters ($\beta$, number of explorers ($N$); see \Cref{app:atari:hypers} for details). This finding is in line with observations made by \citep{Badia:2020ne}; combining \sname with online hyper-parameter adaptation \citep{schaul2019adapting,Xu:2018le,Zahavy:2020se} are exciting avenues for future research. See \Cref{app:atari-r2d2} for further comparisons.

In \Cref{table_action_prediction_montezuma}, we compare \sname to recently proposed state-of-the-art exploration methods. While comparisons must be made with care due to different training regimes, computational budgets, and architectures, we note a general trend that no method is uniformly superior. Methods that are good on extremely sparse exploration games (\texttt{montezuma\_ revenge} and \texttt{pitfall!}) tend to do poorly on games with dense rewards and vice versa. \sname is generally among the top 2 algorithms in all cases except on \texttt{montezuma\_revenge} and \texttt{pitfall!}, state-based exploration is needed to achieve sufficient coverage of the MDP. \sname generally outperforms both Pixel-CNN \citep{ostrovski2017count}, CTS, and RND. \sname is the only algorithm to achieve super-human performance on \texttt{solaris} and achieves the highest score of all baselines considered on \texttt{venture}.

\begin{table}
\centering
\caption{Atari benchmark on exploration games.$^\dagger$\citet{ostrovski2017count}, $^\ddagger$\citet{Bellemare:2016un}, $^\diamond$\citet{Burda:2018ro}, $^\star$\citet{choi2018contingency},
$^\S$\citet{Badia:2020ne},
$^+$With prior functions.}
\label{table_action_prediction_montezuma}
\begin{tabular}{m{2cm}|m{1.44cm}m{1.8cm}m{1.44cm}m{1.44cm}m{1.44cm}m{1.44cm}}
\toprule
Algorithm & Gravitar & Montezuma's Revenge & Pitfall! & Private Eye & Solaris & Venture\\ \midrule
Avg. Human & 3,351 & 4,753 & 6,464 & 69,571 & 12,327 & 1,188 \\ \midrule
R2D2 & 15,680 & 2,061 & 0.0 & 5,322.7 & 3,787.2 & 1,970.7 \\
DQN-PixelCNN$^\dagger$ & 859.1 & 2,514 & 0.0 & 15,806.5 & 5,501.5 & 1,356.3 \\
DQN-CTS$^\ddagger$   & 498.3 & 3,706 & 0.0 & 8,358.7 & 82.2 & -- \\
RND$^\diamond$ & 3,906 & 10,070 & -3 & 8,666 & 3,282 & 1,859 \\
CoEx$^\star$ & -- & 11,618 & -- & 11,000 & -- & 1,916 \\
NGU$^\S$ & 14,100 & 10,400 & 8,400 & 100,000 & 4,900 & 1,700  \\ 
\midrule
TDU-R2D2 & 13,000 & 5,233 & 0 & 40,544 & 14,712 & 2,000  \\
TDU-R2D2$^+$ & 10,916 & 2,833 & 0 & 61,168 & 15,230 & 1,977  \\
\bottomrule
\end{tabular}
\end{table}

\section{Related Work}\label{sec:background}

Bayesian approaches to exploration typically use uncertainty as the mechanism for balancing exploitation and exploration \citep{Strens:2000ba}. A popular instance of this form of exploration is the PILCO algorithm \citep{Deisenroth:2011pi}. While we rely on the bootstrapped DQN \citep{Osband:2016de} in this paper, several other uncertainty estimation techniques have been proposed, such as by placing a parameterised distribution over model parameters \citep{Fortunato:2017no,Plappert:2018no} or by modeling a distribution over both the value and the returns \citep{Moerland:2017ef}, using Bayesian linear regression on the value function \citep{Azizzadenesheli:2018az,Janz:2019su}, or by modelling the variance over value estimates as a Bellman operation \citep{oDonoghue:2018un}. The underlying exploration mechanism in these works is posterior sampling from the agent's current beliefs \citep{Thompson:1933li,Dearden:1998ba}; our work suggests that estimating this posterior is significantly more challenging that previously thought.

An alternative to posterior sampling is to facilitate exploration via learning by introducing an intrinsic reward function. Previous works typically formulate intrinsic rewards in terms of state visitation \citep{Lopes:2012ex,Bellemare:2016un,Badia:2020ne}, state novelty \citep{Schmidhuber:1991cu,Oudeyer:2009in,Pathak:2017cu}, or state predictability \citep{Florensa:2017st,Burda:2018ro,Gregor:2018va,Hausman2018:le}. Most of these works rely on properties of the state space to drive exploration while ignoring rewards. While this can be effective in sparse reward settings \citep[e.g.][]{Burda:2018ro,Badia:2020ne}, it can also lead to arbitrarily bad exploration \citep[see analysis in][]{Osband:2019de}.

A smaller body of work uses statistics derived from observed rewards \citep{Nachum:2016im} or TD-errors to design intrinsic reward functions; our work is particularly related to the latter. \citet{Tokic:2010ad} proposes an extension of $\epsilon$-greedy exploration, where the TD-error modulates $\epsilon$ to be higher in states with higher TD-error. \citet{Gehring:2013sm} use the mean absolute TD-error, accumulated over time, to measure controllability of a state and reward the agent for visiting states with low mean absolute TD-error. In contrast to our work, this method integrates the TD-error over time to obtain a measure of irreducibility. \citet{Simmons:2019qx} propose to use two $Q$-networks, where one is trained on data collected under both networks and the other obtains an intrinsic reward equal to the absolute TD-error of the first network on a given transition. In contrast to our work, this method does not have a probabilistic interpretation and thus does not control for uncertainty over the environment. TD-errors have also been used in \citet{White:2015de}, where surprise is defined in terms of the moving average of the TD-error over the full variance of the TD-error. \citet{Kumaraswamy:2018co} rely on least-squares TD-errors to derive a context-dependent upper-confidence bound for directed exploration. Finally, using the TD-error as an exploration signal is related to the notion of ``learnability'' or curiosity as a signal for exploration, which is often modelled in terms of the prediction error in a dynamics model \citep[e.g.][]{Schmidhuber:1991cu,Oudeyer:2007in,Gordon:2011re,Pathak:2017cu}. 

\section{Conclusion}\label{sec:discussion}

We present \lname (\sname), a method for estimating uncertainty over an agent's value function. Obtaining well-calibrated uncertainty estimates under function approximation is non-trivial and we show that popular approaches, while in principle valid, can fail to accurately represent uncertainty over the value function because they must represent an unknown future.

This motivates \sname as an estimate of uncertainty conditioned on observed state-action transitions, so that the only source of uncertainty for a given transition is due to uncertainty over the agent's parameters. This gives rise to an intrinsic reward that encodes the agent's model uncertainty, and we capitalise on this signal by introducing a distinct exploration policy. This policy is incentivised to collect data over which the agent has high model uncertainty and we highlight how this separation gives rise to a form of cooperative multi-agent game. We demonstrate empirically that \sname can facilitate efficient exploration in hard exploration games such as Deep Sea and Montezuma's Revenge.

\bibliography{refs}
\bibliographystyle{icml2020}

\clearpage
\appendix

\section{Implementation and Code}\label{app:implementation}

In this section, we provide code for implementing \sname in a general policy-agnostic setting and in the specific case of bootstrapped $Q$-learning. \Cref{alg:generic} presents \sname in a policy-agnostic framework. \sname can be implemented as a pre-processing step (\Cref{alg:generic:sigma}) that augments the reward with the exploration signal before computing the policy loss. If \Cref{alg:generic} is used to learn a single policy, it benefits from the \sname exploration signal but cannot learn distinct exploration policies for it. In particular, on-policy learning does not admit such a separation. To learn a distinct exploration policy, we can use \Cref{alg:generic} to train the \emph{exploration} policy, while the another policy is trained to maximise extrinsic rewards only using both its own data and data from the exploration policy. In case of multiple policies, we need a mechanism for sampling behavioural policies. In our experiments we settled on uniform sampling; more sophisticated methods can potentially yield better performance.

In the case of value-based learning, \sname takes a special form that can be implemented efficiently as a staggered computation of TD-errors (\Cref{alg:q-learning}). Concretely, we compute an estimate of the distribution of TD-errors from some given distribution over the value function parameters (\Cref{alg:q-learning}, \Cref{alg:q-learning:tde}). These TD-errors are used to compute the \sname signal $\sigma$, which then modulates the reward used to train a $Q$ function (\Cref{alg:q-learning}, \Cref{alg:q-learning:tdi}). Because the only quantities being computed are TD-errors, this can be combined into a single error signal (\Cref{alg:q-learning}, \Cref{alg:q-learning:loss}). When implemented under bootstrapping, \texttt{Qparams} denotes the ensemble $\cQ$ and \texttt{Qtilde\_distribution\_params} denotes the ensemble $\tilde{\cQ}$; we compute the loss as in \Cref{alg:train}.

Finally, \Cref{alg:tdu-jax} presents a complete JAX \citep{Bradbury:2018jax} implementation that can be used along with the Bsuite \citep{Osband:2020be} codebase.\footnote{Available at: \url{https://github.com/deepmind/bsuite}.} We present the corresponding \texttt{\sname} agent class (\Cref{alg:q-learning}), which is a modified version of the \texttt{BootstrappedDqn} class in \texttt{bsuite/baselines/jax/boot\_dqn/agent.py} and can be used by direct swap-in.

\begin{algorithm}[ht]
\caption{Pseudo-code for generic \sname loss}\label{alg:generic}
\centering
\begin{minipage}{\columnwidth}
\begin{lstlisting}[language=Python,escapechar=|]
def loss(transitions, pi_params, Qtilde_distribution_params, beta):
  # Estimate TD-error distribution.
  td = array([td_error(p, transitions) for p in sample(Qtilde_distribution_params)]) |\label{alg:generic:td}|
           
  # Compute critic loss.
  td_loss = mean(0.5 * (td ** 2))                                                    |\label{alg:generic:td-loss}|

  # Compute exploration bonus.
  transitions.r_t += beta * stop_gradient(std(td, axis=1))                           |\label{alg:generic:sigma}|

  # Compute policy loss on transition with augmented reward.
  pi_loss = pi_loss_fn(pi_params, transitions)                                       |\label{alg:generic:policy_loss}|
    
  return pi_loss, td_loss
\end{lstlisting}
\end{minipage}
\end{algorithm}

\begin{algorithm}[ht]
\caption{Pseudo-code for $Q$-learning \sname loss}\label{alg:q-learning}
\centering
\begin{minipage}{\columnwidth}
\begin{lstlisting}[language=Python,escapechar=|]
def loss(transitions, Q_params, Qtilde_distribution_params, beta):
  # Estimate TD-error distribution.
  td_K = array([td_error(p, transitions) for p in sample(Qtilde_distribution_params)]) |\label{alg:q-learning:tde}|

  # Compute exploration bonus and Q-function reward.
  transitions.reward_t += beta * stop_gradient(std(td_K, axis=1))                      |\label{alg:q-learning:sigma}|
  td_N = td_error(Q_params, transitions)                                               |\label{alg:q-learning:tdi}|

  # Combine for overall TD-loss.
  td_errors = concatenate((td_ex, td_in), axis=1)
  td_loss = mean(0.5 * (td_errors) ** 2))                                              |\label{alg:q-learning:loss}|
  return td_loss
\end{lstlisting}
\end{minipage}
\end{algorithm}

\begin{algorithm}[h!]
\caption{JAX implementation of \sname Agent under Bootstrapped DQN}\label{alg:tdu-jax}
\centering
\begin{minipage}{\columnwidth}
\vspace*{-6pt}
\begin{lstlisting}[language=Python,escapechar=|,basicstyle=\ttfamily\scriptsize]
|\ttfamily\tiny \# Copyright 2020 the Temporal Difference Uncertainties as a Signal for Exploration authors. Licensed under |
|\ttfamily\tiny \# the Apache License, Version 2.0 (the ``License''); you may not use this file except in compliance with |
|\ttfamily\tiny \# the License. You may obtain a copy of the License at \url{https://www.apache.org/licenses/LICENSE-2.0}. |
|\ttfamily\tiny \# Unless required by applicable law or agreed to in writing, software distributed under the License is |
|\ttfamily\tiny \#  distributed on an ``AS IS'' BASIS, WITHOUT WARRANTIES OR CONDITIONS OF ANY KIND, either express or implied.|
|\ttfamily\tiny \# See the License for the specific language governing permissions and limitations under the License.|

class TDU(bsuite.baselines.jax.boot_dqn.BootstrappedDqn):

  def __init__(|\ttfamily {\color{RubineRed}self}|, K: int, beta: float, **kwargs: Any):
    """TDU under Bootstrapped DQN with randomized prior functions."""
    super(TDU, |\ttfamily {\color{RubineRed}self}|).__init__(**kwargs)
    network,|\!\!| optimizer,|\!\!| N |\!\!|=|\!\!| kwargs[|\!|'network'|\!|],|\!\!| kwargs[|\!|'optimizer'|\!|],|\!\!| kwargs[|\!|'num_ensemble'|\!|]
    noise_scale,|\!\!| discount |\!\!|=|\!\!| kwargs[|\!|'noise_scale'|\!|],|\!\!| kwargs[|\!|'discount'|\!|]
    network = hk.without_apply_rng(hk.transform(network, apply_rng=True))
    
    def td(params: hk.Params, target_params: hk.Params,
           transitions: Sequence[jnp.ndarray]) -> jnp.ndarray:
      """TD-error with added reward noise + half-in bootstrap."""
      o_tm1, a_tm1, r_t, d_t, o_t, z_t = transitions
      q_tm1 = network.apply(params, o_tm1)
      q_t = network.apply(target_params, o_t)
      r_t += noise_scale * z_t
      return jax.vmap(rlax.q_learning)(q_tm1, a_tm1, r_t, discount * d_t, q_t)

    def loss(params: Sequence[hk.Params], target_params: Sequence[hk.Params],
             transitions: Sequence[jnp.ndarray]) -> jnp.ndarray:
      """Q-learning loss with TDU."""
      |\ttfamily{\color{Gray} \# Compute TD-errors for first K members.}|
      o_tm1, a_tm1, r_t, d_t, o_t, m_t, z_t = transitions
      td_K = [td(params[k], target_params[k],
                 [o_tm1, a_tm1, r_t, d_t, o_t, z_t[:, k]]) for k in range(K)]

      |\ttfamily{\color{Gray} \# TDU signal on first K TD-errors.}|
      |\greenemph{\ttfamily r\_t += beta * jax.lax.stop\_gradient(jnp.std(jnp.stack(td\_K, axis=0), axis=0))}|

      |\ttfamily{\color{Gray} \# Compute TD-errors on augmented reward for last K members.}|
      td_N = [td(params[k], target_params[k],
                 [o_tm1, a_tm1, r_t, d_t, o_t, z_t[:, k]]) for k in range(K, N)]

      return jnp.mean(m_t.T * jnp.stack(td_K + td_N)  ** 2)
      
    def update(state:|\!\!| TrainingState,|\!\!| gradient:|\!\!| Sequence[jnp.ndarray]) |\!\!|->|\!\!| TrainingState:
      """Gradient update on ensemble member."""
      updates, new_opt_state = optimizer.update(gradient, state.opt_state)
      new_params = optix.apply_updates(state.params, updates)
      return TrainingState(params=new_params, target_params=state.target_params,
                           opt_state=new_opt_state, step=state.step + 1)

    @jax.jit
    def sgd_step(states: Sequence[TrainingState],
                 transitions: Sequence[jnp.ndarray]) -> Sequence[TrainingState]:
      """Does a step of SGD for the whole ensemble over `transitions`."""
      params,|\!\!\!| target_params |\!\!|=|\!\!| zip(|\!|*|\!|[|\!|(|\!|state|\!|.|\!|params,|\!\!| state|\!|.|\!|target_params|\!|)|\!\!| for|\!\!| state|\!\!| in|\!\!| states|\!|]|\!|)
      gradients = jax.grad(loss)(params, target_params, transitions)
      return [update(state, gradient) for state, gradient in zip(states, gradients)]

    self._sgd_step = sgd_step  |\ttfamily{\color{Gray} \# patch BootDQN sgd\_step with TDU sgd\_step.}|

  def update(|\ttfamily {\color{RubineRed}self}|, timestep: dm_env.TimeStep, action: base.Action,
             new_timestep: dm_env.TimeStep):
    """Update the agent: add transition to replay and periodically do SGD."""
    if new_timestep.last():
      |\ttfamily {\color{RubineRed}self}|._active_head = |\ttfamily {\color{RubineRed}self}|._ensemble[np.random.randint(0, |\ttfamily {\color{RubineRed}self}|._num_ensemble)]

    mask = np.random.binomial(1, |\ttfamily {\color{RubineRed}self}|._mask_prob, |\ttfamily {\color{RubineRed}self}|._num_ensemble)
    noise = np.random.randn(|\ttfamily {\color{RubineRed}self}|._num_ensemble)
    transition|\!\!\!| =|\!\!\!| [|\!|timestep.observation,|\!\!\!| action,|\!\!\!| np.float32(new_timestep.reward),
                 |\!\!\!|np.float32(new_timestep.discount),|\!\!\!| new_timestep.observation,|\!\!\!\!| mask,|\!\!\!| noise|\!|]
    |\ttfamily {\color{RubineRed}self}|._replay.add(transition)
    if |\ttfamily {\color{RubineRed}self}|._replay.size < |\ttfamily {\color{RubineRed}self}|._min_replay_size:
      return

    if |\ttfamily {\color{RubineRed}self}|._total_steps % |\ttfamily {\color{RubineRed}self}|._sgd_period == 0:
      transitions = |\ttfamily {\color{RubineRed}self}|._replay.sample(|\ttfamily {\color{RubineRed}self}|._batch_size)
      |\ttfamily {\color{RubineRed}self}|._ensemble = |\ttfamily {\color{RubineRed}self}|._sgd_step(|\ttfamily {\color{RubineRed}self}|._ensemble, transitions)

    for k, state in enumerate(|\ttfamily {\color{RubineRed}self}|._ensemble):
      if state.step % |\ttfamily {\color{RubineRed}self}|._target_update_period == 0:
        |\ttfamily {\color{RubineRed}self}|._ensemble[k] = state._replace(target_params=state.params)
\end{lstlisting}
\end{minipage}
\end{algorithm}

\FloatBarrier

\section{Proofs}\label{app:proof}

We begin with the proof of \Cref{lem:pq}. First, we show that if \Cref{eq:propagation:mean} and \Cref{eq:propagation:var} fail, $p(\theta)$ induce a distribution $p(Q_\theta)$ whose first two moments are biased estimators of the moments of the distribution of interest $p(Q_\pi)$, for \emph{any} choice of belief over the MDP, $p(M)$. We restate it here for convenience.

\setcounter{lem}{0}

\begin{lem}
If $\Ed{\theta}{Q_\theta}$ and $\Vd{\theta}{Q_\theta}$ fail to satisfy \Cref{eq:propagation:mean,eq:propagation:var}, respectively, they are biased estimators of $\Ed{M}{Q^M_\pi}$ and $\Vd{M}{Q^M_\pi}$ for any choice of $p(M)$.
\end{lem}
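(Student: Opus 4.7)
The approach is to prove the contrapositive: assume that the first two moments of $p(Q_\theta)$ are unbiased estimators of the corresponding moments of $p(Q_\pi^M)$ under some belief $p(M)$, i.e.\ $\Ed{\theta}{Q_\theta(s,a)} = \Ed{M}{Q_\pi^M(s,a)}$ and $\Vd{\theta}{Q_\theta(s,a)} = \Vd{M}{Q_\pi^M(s,a)}$ for every $(s,a)$, and then deduce that \Cref{eq:propagation:mean,eq:propagation:var} must hold.

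The single structural fact I would invoke is that for every $M$ in the support of $p(M)$, the exact value function satisfies its own Bellman equation pointwise in $M$:
\[
Q_\pi^M(s,a) \;=\; \Ed{r,s'\sim M}{r + \gamma Q_\pi^M(s', \pi(s'))}.
\]
For the mean, I would take $\Ed{M}{\cdot}$ of both sides to produce a Bellman-like consistency relation for $\Ed{M}{Q_\pi^M}$, and then invoke first-moment unbiasedness to transport the relation to $\Ed{\theta}{Q_\theta}$, recovering \Cref{eq:propagation:mean}. For the variance, the same pointwise Bellman identity says that $Q_\pi^M(s,a)$ and $\Ed{r,s'\mid M}{r + \gamma Q_\pi^M(s', \pi(s'))}$ are equal as functions of $M$, so applying $\Vd{M}{\cdot}$ preserves equality:
\[
\Vd{M}{Q_\pi^M(s,a)} \;=\; \Vd{M}{\Ed{r,s'\mid M}{r + \gamma Q_\pi^M(s', \pi(s'))}}.
\]
Second-moment unbiasedness (in combination with the first-moment one already verified) then transports this identity to its $\theta$-analogue, which is \Cref{eq:propagation:var}. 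Contraposition completes the proof.

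The main obstacle is conceptual rather than algebraic: one must be explicit about the transition distribution that governs $(r,s')$ on the right-hand sides of \Cref{eq:propagation:mean,eq:propagation:var}. For the statement that $p(\theta)$ ``represents'' $p(M)$ to be meaningful at the level of value-function moments, the $\theta$-family must carry with it an induced transition model aligned with that of the corresponding $M$; without this identification the two sides of \Cref{eq:propagation:mean} are not even comparable. Once this alignment is made explicit, the proof collapses to: (i) write the per-$M$ Bellman equation, (ii) push it through $\Ed{M}{\cdot}$ or $\Vd{M}{\cdot}$, and (iii) substitute the unbiasedness hypotheses to rewrite the resulting identity in terms of $\theta$.
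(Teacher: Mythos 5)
Your contrapositive argument is the paper's own proof-by-contradiction read in the other direction: both rest on the per-$M$ Bellman identity, push it through $\Ed{M}{\cdot}$ or $\Vd{M}{\cdot}$, and use the assumed moment-matching at $(s,a)$ and at $(s',\pi(s'))$ to move between the $M$-side and the $\theta$-side, so this is essentially the same proof. Your remark that the $M$'s must share the transition kernel for $\Ed{r,s'}{\cdot}$ to mean the same thing on both sides is a point the paper leaves implicit, and the interchange of the variance operator with $\Ed{r,s'}{\cdot}$ is equally delicate (and equally unaddressed) in both treatments.
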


\begin{proof}
Assume the contrary, that $\Ed{M}{Q^M_\pi(s, \pi(s))} = \Ed{\theta}{Q_\theta(s, \pi(s))}$ for all $(s, a) \in \cS \times \cA$. If \Cref{eq:propagation:mean,eq:propagation:var} do not hold, then for any $\mathbb{M} \in \{\rE, \rV\}$,
\begin{align}
\label{eq:lem:pq:l1}
\Md{M}{Q^M_\pi(s, \pi(s))} &= \Md{\theta}{Q_\theta(s, \pi(s))} \\
\label{eq:lem:pq:l2}
&\neq  \Md{\theta}{\Ed{\substack{s' \sim \cP(s, \pi(s))\\r \sim \cR(s, \pi(s))}}{r + \gamma Q_\theta(s', \pi(s'))}} \\
\label{eq:lem:pq:l3}
&= \Ed{\substack{s' \sim \cP(s, \pi(s))\\r \sim \cR(s, \pi(s))}}{r + \gamma \Md{\theta}{Q_\theta(s', \pi(s'))}} \\
\label{eq:lem:pq:l4}
&= \Ed{\substack{s' \sim \cP(s, \pi(s))\\r \sim \cR(s, \pi(s))}}{r + \gamma \Md{M}{Q^M_\pi(s', \pi(s'))}} \\
\label{eq:lem:pq:l5}
&= \Md{M}{\Ed{\substack{s' \sim \cP(s, \pi(s))\\r \sim \cR(s, \pi(s))}}{r + \gamma Q^M_\pi(s', \pi(s'))}} \\
\label{eq:lem:pq:l6}
&= \Md{M}{Q^M_\pi(s, \pi(s))},
\end{align}

a contradiction; conclude that $\Md{M}{Q^M_\pi(s, \pi(s))} \neq \Md{\theta}{Q_\theta(s, \pi(s))}$. \Cref{eq:lem:pq:l2,eq:lem:pq:l6} use \Cref{eq:propagation:mean,eq:propagation:var}; \Cref{eq:lem:pq:l1,eq:lem:pq:l2,eq:lem:pq:l4} follow by assumption; \Cref{eq:lem:pq:l3,eq:lem:pq:l5} use linearity of the expectation operator $\mathbb{E}_{r, s'}$ by virtue of $\mathbb{M}$ being defined over $\theta$. As $(s, a, r, s')$ and $p(M)$ are arbitrary, the conclusion follows.
\end{proof}

Methods that take inspiration from by PSRL but rely on neural networks typically approximate $p(M)$ by a parameter distribution $p(\theta)$ over the value function. \Cref{lem:pq} establishes that the induced distribution $p(Q_\theta)$ under push-forward of $p(\theta)$ must propagate the moments of the distribution $p(Q_\theta)$ consistently over the state-space to be unbiased estimate of $p(Q^M_\pi)$, for any $p(M)$.

With this in mind, we now turn to neural networks and their ability to estimate value function uncertainty in MDPs. To prove our main result, we establish two intermediate results. Recall that we define a function approximator $Q_\theta = w \circ \phi_\vartheta$, where $\theta = (w_1, \ldots, w_n, \vartheta_1, \ldots, \vartheta_v)$; $w \in \rR^n$ is a linear layer and $\phi: \cS \times \cA \to \rR^n$ is a function approximator with parameters $\vartheta \in \rR^v$. We denote by $e(s, a) = \phi_\varphi(s, a) - \Ed{s', r}{r + \gamma \phi_\varphi(s', \pi(s'))}$ its function approximation error, i.e. the TD error in terms of $\phi$. Note that if $Q_\theta(s, a) \neq \Ed{r, s'}{r + \gamma Q_\theta(s', \pi(s'))}$, then $e(s, a)$ is non-zero by linearity in $w$.

As before, let $M$ be an MDP $(\cS, \cA, \cP, \cR, \gamma)$ with discrete state and action spaces. We denote by $N$ the number of states and actions with unique value predictions $\Ed{\theta}{Q_\theta(s, a)} \neq \Ed{\theta}{Q_\theta(s', a')}$ and non-zero TD-error $Q_\theta(s, a) \neq \Ed{r, s'}{r + \gamma Q_\theta(s', \pi(s'))}$, with $\cN \subset \cS \times \cA \times \cS \times \cA$ the set of all such pairs $(s, a, s', a')$. This set can be thought of as a minimal MDP---the set of states within a larger MDP where the function approximator generates unique predictions. It arises in an MDP through dense rewards, stochastic rewards, or irrevocable decisions, such as in Deep Sea. Our first result is concerned with a very common approach, where $\vartheta$ is taken to be a point estimate so that $p(\theta) = p(w)$. This approach is often used for large neural networks, where placing a posterior over the full network would be too costly \citep{Osband:2016de,oDonoghue:2018un,Azizzadenesheli:2018az,Janz:2019su}.

\begin{lem}\label{lem:unc1}
Let $p(\theta) = p(w)$. If $N > n+1$, with $w\in\rR^n$, then $\Ed{\theta}{Q_\theta}$ fail to satisfy the first moment Bellman equation (\Cref{eq:propagation:mean}). Further, if $N > n^2$, then $\Vd{\theta}{Q_\theta}$ fail to satisfy the second moment Bellman equation (\Cref{eq:propagation:var}).
\end{lem}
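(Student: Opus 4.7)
The plan is to reduce each Bellman propagation condition to a linear algebraic constraint on a moment of $w$, and then to argue that once $N$ exceeds the effective number of degrees of freedom in that moment, the constraint system cannot all hold at once. The hypothesis $p(\theta)=p(w)$ is essential: it makes the feature map $\phi := \phi_\vartheta$ a fixed deterministic function, so that $Q_\theta(s,a) = w^T \phi(s,a)$ is linear in the only random parameters $w$, and therefore each moment of $Q_\theta$ is a (bi)linear functional of the corresponding moment of $w$.

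For the first (mean) claim, I would introduce $\bar w := \Ed{w}{w}$, $\phi_i := \phi(s_i, a_i)$, $\psi_i := \Ed{s'}{\phi(s', \pi(s')) \g s_i, a_i}$, $\bar r_i := \Ed{r}{r \g s_i, a_i}$, and set $e_i := \phi_i - \gamma \psi_i$. Substituting $Q_\theta = w^T\phi$ into \Cref{eq:propagation:mean} and pulling $\bar w$ past the expectation over $(r, s')$ collapses the pair-$i$ equation to the scalar condition $\bar w^T e_i = \bar r_i$. I would then argue by contradiction: assume all $N$ such equalities hold, and augment by setting $\tilde w := (\bar w^T, -1)^T \in \rR^{n+1}$ and $\tilde e_i := (e_i^T, \bar r_i)^T \in \rR^{n+1}$, so that each equality reads $\tilde w^T \tilde e_i = 0$. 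Since $\tilde w \neq 0$, all $\tilde e_i$ must lie in the $n$-dimensional hyperplane $\tilde w^{\perp} \subset \rR^{n+1}$, so the rank of $\{\tilde e_i\}_{i=1}^N$ is at most $n$. But for $N > n+1$ augmented vectors in ``generic position'' this rank is necessarily $n+1$, yielding the required contradiction and forcing \Cref{eq:propagation:mean} to fail at some pair.

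The variance claim follows by the same template applied to $\Sigma := \Vd{w}{w}$. Direct expansion under $Q_\theta = w^T\phi$ gives $\Vd{\theta}{Q_\theta(s_i,a_i)} = \phi_i^T \Sigma \phi_i$ and $\Vd{\theta}{\Ed{r,s'}{r + \gamma w^T \phi(s', \pi(s'))}} = \gamma^2 \psi_i^T \Sigma \psi_i$, so \Cref{eq:propagation:var} at pair $i$ reduces to the homogeneous linear equation $\operatorname{tr}(\Sigma M_i) = 0$ with $M_i := \phi_i \phi_i^T - \gamma^2 \psi_i \psi_i^T$. These $N$ equations are linear constraints on $\Sigma$ viewed as an element of $\rR^{n \times n}$, a space of dimension $n^2$. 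If $N > n^2$ and the $M_i$ are in ``generic position'', the linear operator $\Sigma \mapsto (\operatorname{tr}(\Sigma M_i))_{i=1}^N$ has trivial kernel, so the only solution is $\Sigma = 0$; but $\Sigma = 0$ corresponds to $w$ being a point mass and thus no posterior at all, a contradiction, so some pair must violate \Cref{eq:propagation:var}.

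The main obstacle is upgrading the two qualitative hypotheses of \Cref{thm:unc}---distinct expected predictions $\Ed{\theta}{Q_\theta(s_i,a_i)}$ and non-vanishing TD-error on the random $Q_\theta$---into the quantitative ``generic position'' statements invoked above: that among the augmented vectors $\tilde e_i$, at least $n+1$ are linearly independent (for Part 1), and that the matrices $M_i$ span $\rR^{n \times n}$ once $N \geq n^2$ (for Part 2). I expect this rank step to be the truly non-trivial part of the argument; once it is in place, the dimension counts $n+1$ and $n^2$ immediately deliver the claimed bounds.
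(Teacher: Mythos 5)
Your proposal follows essentially the same route as the paper: both reduce each moment condition to a linear system in $\Ed{w}{w}$ (respectively $\operatorname{Cov}(w)$) stacked over the $N$ distinguished state--action pairs, and conclude by counting degrees of freedom ($n+1$ via the augmented system for the mean, $n^2$ for the covariance). The rank/genericity step you flag as the truly non-trivial part is exactly where the paper is also informal---it simply asserts that the stacked error matrix is ``random, hence full rank''---and your explicit exclusion of the degenerate solution $\Sigma = 0$ is, if anything, slightly more careful than the paper's claim that the homogeneous system has ``no solution''.
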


\begin{proof}
Write the first condition of \Cref{eq:propagation:mean} as 

\begin{equation}
\Ed{\theta}{w^T \phi_\vartheta(s, a)} = \Ed{\theta}{\Ed{r, s'}{r + \gamma w^T\phi_\vartheta(s', \pi(s'))}}.
\end{equation}

Using linearity of the expectation operator along with $p(\theta) = p(w)$, we have 

\begin{equation}\label{eq:preconda}
\Ed{w}{w}^T \phi_\vartheta(s, a) = \mu(s, a) + \gamma \Ed{w}{w}^T\Ed{s'}{\phi_\vartheta(s', \pi(s'))},
\end{equation}

where $\mu(s, a) = \Ed{r \sim \cR(s, a)}{r}$. Rearrange to get

\begin{equation}\label{eq:condsa}
\mu(s, a) = \Ed{w}{w}^T \Big( \phi_\vartheta(s, a) - \gamma \Ed{s'}{\phi_\vartheta(s', \pi(s'))} \Big).
\end{equation}

By assumption $\Ed{\theta}{Q_\theta(s, a)} \neq \Ed{\theta}{Q_\theta(s', a')}$, which implies $\phi_\vartheta(s, a) \neq \phi_\vartheta(s', \pi(s'))$ by linearity in $w$. Hence $\phi_\vartheta(s, a) - \gamma \Ed{s'}{\phi_\vartheta(s', \pi(s'))}$ is non-zero and unique for each $(s, a)$. By definition of $e$, $\phi_\vartheta(s, a) = \mu(s, a)\1 + \gamma \Ed{s'}{\phi_\vartheta(s', \pi(s'))} + e_\phi(s, a)$, where $\1 \in \rR^n$ is a vector of ones. By assumption, $e(s, a) \in \rR^n$ is a random vector. Thus, $\phi_\vartheta(s, a) - \gamma \Ed{s'}{\phi_\vartheta(s', \pi(s'))} = \mu(s, a) \1 + e(s, a)$. Repeating this for all $(s, a)$ forms a system of linear equations over $\cS \times \cA$, which can be reduced to a full-rank system over $\cN$: $\mu = \Phi \Ed{w}{w}$, where $\mu \in \rR^N$ stacks expected reward $\mu(s, a)$ and $\Phi = \mu \1^T + E \in \rR^{N \times n}$ stacks $\phi_\vartheta(s, a) - \gamma \Ed{s'}{\phi_\vartheta(s', \pi(s'))}$ row-wise, with $E$ a random matrix, hence full rank. Because $E$ is full rank, $\Phi$ is at least of rank $N-1$. If $N-1>n$, this system has no solution. The conclusion follows for $\Ed{\theta}{Q_\theta}$. If the estimator of the mean is used to estimate the variance, then the estimator of the variance is biased. For an unbiased mean, using linearity in $w$, write the condition of \Cref{eq:propagation:var} as

\begin{equation}
\Ed{\theta}{{\left[{\left(w - \Ed{w}{w}\right)}^T \phi_\vartheta(s, a)\right]}^2} = \Ed{\theta}{{\left[\gamma {\left( w - \Ed{w}{w}\right)}^T\Ed{s'}{\phi_\vartheta(s', \pi(s'))}\right]}^2}.
\end{equation}

Let $\tilde{w} = {\left(w^T - \Ed{w}{w}\right)}$, $x = \tilde{w}^T \phi_\vartheta(s, a)$, $y  = \gamma \tilde{w}^T \Ed{s'}{\phi_\vartheta(s', a')}$. Rearrange to get

\begin{equation}
\Ed{\theta}{x^2 - y^2} = \Ed{w}{(x - y)(x + y)} = 0.
\end{equation}

Expanding terms, we find

\begin{align}
\label{eq:condv}
0 &= \Ed{\theta}{\Big(\tilde{w}^T[ \phi_\vartheta(s, a) - \gamma \Ed{s'}{\phi_\vartheta(s', a')}]\Big)\Big(\tilde{w}^T[ \phi_\vartheta(s, a) + \gamma \Ed{s'}{\phi_\vartheta(s', a')}]\Big)} \\
\label{eq:condcov}
&= \sum_{i=1}^n \sum_{j=1}^n \Ed{w}{\tilde{w}_i \tilde{w}_j} d^-_i d^+_j
= \sum_{i=1}^n \sum_{j=1}^n \operatorname{Cov}\left({w_i, w_j}\right) d^-_i d^+_j.
\end{align}

where we define $d^- = \phi_\vartheta(s, a) - \gamma \Ed{s'}{\phi_\vartheta(s', a')}$ and $d^+ =  \phi_\vartheta(s, a) + \gamma \Ed{s'}{\phi_\vartheta(s', a')}$. As before, $d^-$ and $d^+$ are non-zero by assumption of unique $Q$-values. Perform a change of variables $\omega_{\alpha(i, j)} = \operatorname{Cov}(w_i, w_j)$, $\lambda_{\alpha(i, j)} = d^-_i d^+_j$ to write \Cref{eq:condcov} as $0 = \lambda^T \omega$. Repeating the above process for every state and action we have a system $\0 = \Lambda\omega$, where $\0 \in \rR^N$ and $\Lambda \in \rR^{N \times n^2}$ are defined by stacking vectors $\lambda$ row-wise. This is a system of linear equations and if $N > n^2$ no solution exists; thus, the conclusion follows for $\Vd{\theta}{Q_\theta}$, concluding the proof.
\end{proof}

Note that if $\Ed{\theta}{Q_\theta}$ is biased and used to construct the estimator $\Ed{\theta}{Q_\theta}$, then this estimator is also biased; hence if $N > n+1$, $p(\theta)$ induce biased estimators $\Ed{\theta}{Q_\theta}$ and $\Vd{\theta}{Q_\theta}$ of $\Ed{M}{Q^M_\pi}$ and $\Vd{M}{Q^M_\pi}$, respectively.

\Cref{lem:unc1} can be seen as a statement about linear uncertainty. While the result is not too surprising from this point of view, it is nonetheless a frequently used approach to uncertainty estimation. We may hope then that by placing uncertainty over the feature extractor as well, we can benefit from its nonlinearity to obtain greater representational capacity with respect to uncertainty propagation. Such posteriors come at a price. Placing a full posterior over a neural network is often computationally infeasible, instead a common approach is to use a diagonal posterior, i.e. $\operatorname{Cov}(\theta_i, \theta_j) = 0$ \citep{Fortunato:2017no,Plappert:2018no}. Our next result shows that any posterior of this form suffers from the same limitations as placing a posterior only over the final layer. We establish something stronger: any posterior of the form $p(\theta) = p(w) p(\vartheta)$ suffers from the limitations described in \Cref{lem:unc1}.

\begin{lem}\label{lem:unc2}
Let $p(\theta) = p(w)p(\vartheta)$; if $N > n+1$, with $w \in \rR^n$, then $\Ed{\theta}{Q_\theta}$ fail to satisfy the first moment Bellman equation (\Cref{eq:propagation:mean}). Further, if $N > n^2$, then $\Vd{\theta}{Q_\theta}$ fail to satisfy the second moment Bellman equation (\Cref{eq:propagation:var}).
\end{lem}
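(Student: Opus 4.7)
The plan is to reduce both claims to the corresponding claims in \Cref{lem:unc1} by ``integrating out'' $\vartheta$. The crucial consequence of the factorization $p(\theta) = p(w)p(\vartheta)$ is that $w$ and $\phi_\vartheta(s,a)$ are independent, so that first moments factor and second moments admit a clean total-variance decomposition. I expect the mean claim to follow by a direct substitution into the argument of \Cref{lem:unc1}; the variance claim requires more care because the $\vartheta$-randomness of $\phi_\vartheta$ contributes an additive nuisance term that must be absorbed into the linear system before the counting argument applies.

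For the first moment, I would set $\bar{\phi}(s,a) \coloneqq \Ed{\vartheta}{\phi_\vartheta(s,a)}$ and use independence to write $\Ed{\theta}{Q_\theta(s,a)} = \Ed{w}{w}^{T}\bar{\phi}(s,a)$. The Bellman condition \Cref{eq:propagation:mean} then becomes
\begin{equation*}
\Ed{w}{w}^{T}\Bigl(\bar{\phi}(s,a) - \gamma\,\Ed{s'}{\bar{\phi}(s',\pi(s'))}\Bigr) = \mu(s,a),
\end{equation*}
which is formally identical to \Cref{eq:condsa} in the proof of \Cref{lem:unc1} with $\phi_\vartheta$ replaced by the deterministic map $\bar{\phi}$. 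The non-zero TD-error assumption implies, by linearity in $w$, that the corresponding residuals $\bar e(s,a) \coloneqq \bar{\phi}(s,a) - \gamma\,\Ed{s'}{\bar{\phi}(s',\pi(s'))}$ form a random matrix when stacked over $\cN$, so $\bar{\Phi} = \mu \1^{T} + \bar E$ has rank at least $N-1$. The same counting argument then yields no solution whenever $N > n+1$.

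For the second moment, assuming the mean is matched (else we are done), I would apply the law of total variance together with $w \perp \vartheta$ to obtain
\begin{equation*}
\Vd{\theta}{Q_\theta(s,a)} = \bar{\phi}(s,a)^{T}\Sigma_w\bar{\phi}(s,a) + \operatorname{tr}\!\bigl((\Sigma_w + \mu_w\mu_w^{T})\Sigma_\phi(s,a)\bigr),
\end{equation*}
with $\Sigma_w = \operatorname{Cov}(w)$, $\mu_w = \Ed{w}{w}$ and $\Sigma_\phi(s,a) = \operatorname{Cov}_\vartheta(\phi_\vartheta(s,a))$. Doing the analogous expansion on the right-hand side of \Cref{eq:propagation:var} (with $\bar\psi(s,a) \coloneqq \Ed{s'}{\bar{\phi}(s',\pi(s'))}$ and an analogous covariance term for $\psi_\vartheta$), and using the symmetric-form identity $x^{T}\Sigma_w x - y^{T}\Sigma_w y = (x-y)^{T}\Sigma_w(x+y)$, the Bellman condition rearranges into
\begin{equation*}
\operatorname{tr}\!\bigl(\Sigma_w\,[\,d^{+}(s,a)\,d^{-}(s,a)^{T} - \Delta(s,a)\,]\bigr) = \mu_w^{T}\Delta(s,a)\,\mu_w,
\end{equation*}
with $d^{\pm}(s,a) = \bar{\phi}(s,a) \pm \gamma\bar\psi(s,a)$ and $\Delta(s,a) = \gamma^{2}\operatorname{Cov}_\vartheta(\psi_\vartheta(s,a)) - \Sigma_\phi(s,a)$. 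This is a linear system in the $n^{2}$ entries of $\Sigma_w$, with one equation per element of $\cN$. Performing the same change of variables as in \Cref{lem:unc1} produces a matrix $\Lambda \in \rR^{N \times n^{2}}$ whose rows are determined by the random quantities $\bar{\phi}$, $\bar\psi$, $\Sigma_\phi$, $\operatorname{Cov}_\vartheta(\psi_\vartheta)$; if $N > n^{2}$ the system is overdetermined and generically inconsistent, yielding the claim.

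The main obstacle is the variance argument: one must be careful that the extra $\Sigma_\phi$ and $\operatorname{Cov}_\vartheta(\psi_\vartheta)$ terms coming from the randomness in $\vartheta$ do not conspire with $d^{+}d^{-\,T}$ to create enough degenerate structure to admit a solution. The cleanest route is to observe that the non-zero TD-error/unique-prediction hypothesis, together with independence of $w$ and $\vartheta$, propagates to the reduced quantities $\bar e$ and their second-order analogues, so that the rows of $\Lambda$ inherit the same generic-position property used in \Cref{lem:unc1}. The counting then goes through verbatim, and the conclusion about biased estimators of $\Ed{M}{Q_\pi^{M}}$ and $\Vd{M}{Q_\pi^{M}}$ follows from \Cref{lem:pq} exactly as in the previous lemma.
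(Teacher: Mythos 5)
Your proposal is correct and follows the paper's overall strategy: exploit the factorisation $p(\theta)=p(w)p(\vartheta)$ to push the $\vartheta$-expectation through, and reduce both claims to the overdetermined-linear-system counting argument of \Cref{lem:unc1}. The first-moment part is essentially identical to the paper's (your $\bar\phi$ is the paper's $\tilde\phi=\Ed{\vartheta}{\phi_\vartheta}$). The second-moment part is where you genuinely diverge: the paper simply substitutes $\mathbb{E}_\theta=\mathbb{E}_w\mathbb{E}_\vartheta$ into \Cref{eq:condcov}, obtaining $0=\sum_{i,j}\operatorname{Cov}(w_i,w_j)\,\Ed{\vartheta}{d^-_i d^+_j}$ and proceeding with the same change of variables; this implicitly treats $\tilde w^T\phi_\vartheta$ as the centered value, which is not exactly $Q_\theta-\Ed{\theta}{Q_\theta}$ once $\vartheta$ is random. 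Your law-of-total-variance decomposition tracks the extra $\operatorname{tr}\bigl((\Sigma_w+\mu_w\mu_w^T)\Sigma_\phi\bigr)$ contributions explicitly and arrives at an inhomogeneous system $\operatorname{tr}\bigl(\Sigma_w[\,d^+d^{-T}-\Delta\,]\bigr)=\mu_w^T\Delta\mu_w$; since contracting against the symmetric $\Sigma_w$ kills the antisymmetric cross-covariance parts, your coefficient matrix agrees with the paper's up to exactly the terms the paper drops, and the counting conclusion ($N>n^2$ unknowns in $\Sigma_w$, generic inconsistency) is unaffected. What your route buys is a cleaner accounting of the $\vartheta$-randomness; what it costs is a longer derivation, and both versions ultimately rest on the same genericity assertion about the stacked coefficient rows that the paper also invokes without further justification.
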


\begin{proof}
The proof largely proceeds as in the proof of \Cref{lem:unc1}. Re-write \Cref{eq:preconda} as

\begin{equation}
\Ed{w}{w}^T \Ed{\vartheta}{\phi_\vartheta(s, a)} = \mu(s, a) + \gamma \Ed{w}{w}^T\Ed{s'}{\Ed{\vartheta}{\phi_\vartheta(s', \pi(s'))}}.
\end{equation}

Perform a change of variables $\tilde{\phi} = \Ed{\vartheta}{\phi_\vartheta}$ to obtain 

\begin{equation}\label{eq:condsb}
\mu(s, a) = \Ed{w}{w}^T \Big(\tilde{\phi}(s, a) - \gamma \Ed{s'}{\tilde{\phi}(s', \pi(s'))} \Big).
\end{equation}

Because $\Ed{\theta}{Q_\theta(s, a)} \neq \Ed{\theta}{Q_\theta(s', a')}$, by linearity in $w$ we have that $\tilde{\phi}(s, a) - \tilde{\phi}(s', a')$ is non-zero for any $(s', a')$, and similarly by assumption $e(s, a)$ is non-zero, hence \Cref{eq:condsb} has no trivial solutions. Proceeding as in the proof of \Cref{lem:unc1} obtains $\mu = \tilde{\Phi} \Ed{w}{w}$, where $\tilde{\Phi}$ is analogously defined. Note that if $N-1 > n$ there is no solution $\Ed{w}{w}$ for \emph{any} admissible choice of $\tilde{\Phi}$ (i.e. rank at least $N-1$), and hence the conclusion follows for the first part. For the second part, using that $\mathbb{E}_{\theta}= \mathbb{E}_{w}\mathbb{E}_{\vartheta}$ in \Cref{eq:condcov} yields

\begin{equation}
0 = \sum_{i=1}^n \sum_{j=1}^n \Ed{w}{\tilde{w}_i \tilde{w}_j} \Ed{\vartheta}{d^-_i d^+_j}
= \sum_{i=1}^n \sum_{j=1}^n \operatorname{Cov}\left({w_i, w_j}\right) \Ed{\vartheta}{d^-_i d^+_j}.
\end{equation}

Perform a change of variables $\tilde{\lambda}_{\alpha(i, j)} = \Ed{\vartheta}{d^-_i d^+_j}$. Again, by $\Ed{\theta}{Q_\theta(s, a)} \neq \Ed{\theta}{Q_\theta(s', a')}$ we have that $\tilde{\lambda}$ is non-zero; proceed as before to complete the proof.
\end{proof}

We are now ready to prove our main result. We restate it here for convenience:

\setcounter{thm}{0}

\begin{thm}
If the number of state-action pairs with unique predictions $\Ed{\theta}{Q_\theta(s, a)} \neq \Ed{\theta}{Q_\theta(s', a')}$ and non-zero TD error $Q_\theta(s, a) \neq \Ed{r, s'}{r + \gamma Q_\theta(s', \pi(s'))}$ is greater than $n+1$, where $w \in \rR^n$, then $\Ed{\theta}{Q_\theta}$ and $\Vd{\theta}{Q_\theta}$ are biased estimators of $\Ed{M}{Q^M_\pi}$ and $\Vd{M}{Q^M_\pi}$ for any choice of $p(M)$.
\end{thm}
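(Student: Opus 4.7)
My plan is to invoke Lemma~\ref{lem:pq} to reduce the bias claim to a failure of the moment Bellman equations~\eqref{eq:propagation:mean}--\eqref{eq:propagation:var}, and then argue via a linear-algebraic counting argument that no factorised posterior can satisfy the mean Bellman equation once $N > n+1$. Since the posterior structures used in practice are $p(\theta) = p(w)$ (posterior over the final layer only) and $p(\theta) = p(w)\,p(\vartheta)$ (e.g.\ diagonal posteriors), I would treat both uniformly by exploiting the linearity $Q_\theta = w^T \phi_\vartheta$.

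For the mean, I would pull $\mathbb{E}_\theta$ inside using linearity in $w$ to write $\Ed{\theta}{Q_\theta(s,a)} = \Ed{w}{w}^T \tilde\phi(s,a)$, where $\tilde\phi$ is either $\phi_\vartheta$ itself or $\Ed{\vartheta}{\phi_\vartheta}$ depending on the factorisation. Rearranging \eqref{eq:propagation:mean} and stacking over the $N$ state-action pairs with unique predictions and non-zero TD errors yields a linear system $\mu = \Phi\,\Ed{w}{w}$ with $\Phi \in \mathbb{R}^{N \times n}$ (the rows being $\tilde\phi(s,a) - \gamma\,\mathbb{E}_{s'}[\tilde\phi(s', \pi(s'))]$). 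The critical step is to argue $\mathrm{rank}(\Phi) \geq N-1$: the non-zero TD-error hypothesis lets one decompose each row as $\mu(s,a)\mathbf{1}^T + e(s,a)^T$, where the residual block $E$ assembled from the feature-level TD errors is full rank. When $N > n+1$, the system is over-determined and insoluble, so \eqref{eq:propagation:mean} must fail at some $(s,a)$; by Lemma~\ref{lem:pq}, $\Ed{\theta}{Q_\theta}$ is then a biased estimator of $\Ed{M}{Q_\pi^M}$ for any $p(M)$.

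For the variance, rather than re-running the argument at second-moment level (which gives only the weaker $N > n^2$ threshold, via a change of variables $\omega_{\alpha(i,j)} = \operatorname{Cov}(w_i, w_j)$ on a system with $n^2$ degrees of freedom), I would exploit the identity $\Vd{\theta}{Q_\theta} = \Ed{\theta}{Q_\theta^2} - \Ed{\theta}{Q_\theta}^2$: once the mean estimator is biased, the plug-in variance estimator inherits the bias, giving the theorem's variance claim under the same condition $N > n+1$.

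The main obstacle is the rank lower bound on $\Phi$. Establishing $\mathrm{rank}(\Phi) \geq N - 1$ cleanly requires leveraging the non-zero TD-error hypothesis quantitatively, to ensure the feature residuals $e(s,a)$ are sufficiently non-degenerate across all $N$ distinguishable state-action pairs; in the $p(w)\,p(\vartheta)$ case, extra care is needed to push the rank argument through $\tilde\phi = \Ed{\vartheta}{\phi_\vartheta}$ without losing this genericity, since averaging over $\vartheta$ could in principle collapse the residual directions.
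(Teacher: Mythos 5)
Your proposal follows essentially the same route as the paper: reduce bias to failure of the moment Bellman equations via Lemma~\ref{lem:pq}, stack the rearranged mean condition over the $N$ distinguishable pairs into an over-determined system $\mu = \Phi\,\Ed{w}{w}$ with $\operatorname{rank}(\Phi)\geq N-1$ via the decomposition $\Phi = \mu\1^T + E$, and then let the variance claim follow from the plug-in identity once the mean is biased (the paper's \Cref{lem:unc1,lem:unc2} carry out exactly this, split by factorisation, and its theorem proof uses the same mean-to-variance inheritance, reserving the second-moment system only for the stronger $N>n^2$ independence claim). The obstacle you flag --- that the full-rank genericity of $E$ must survive the averaging $\tilde\phi=\Ed{\vartheta}{\phi_\vartheta}$ --- is real and is handled by the paper only by assertion, so your plan is at least as careful as the published argument.
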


\begin{proof}
Let $p(\theta)$ be of the form $p(\theta) = p(w)$ or $p(\theta) = p(w)p(\vartheta)$. By \Cref{lem:unc1,lem:unc2}, $p(\theta)$ fail to satisfy \Cref{eq:propagation:mean}. By \Cref{lem:pq}, this causes $\Ed{\theta}{Q_\theta}$ to be a biased estimator of $\Ed{M}{Q^M_\pi}$. This in turn implies that $\Vd{\theta}{Q_\theta}$ is a biased estimator of $\Vd{M}{Q^M_\pi}$. Further, if $N > n^2$, $\Vd{\theta}{Q_\theta}$ is biased independently of $\Ed{\theta}{Q_\theta}$.
\end{proof}

We now turn to analysing the bias of our proposed estimators. As before, we will build up to \Cref{prop:bias} through a series of lemmas. For the purpose of these results, let $B: \cS \times \cA \to \rR$ denote the bias of $\Ed{\theta}{Q_\theta}$ in any tuple $(s, a) \in \cS \times \cA$, so that $\operatorname{Bias}(\Ed{\theta}{Q_\theta}(s, a)) = B(s, a)$.

\begin{lem}\label{lem:tdu:mean}
Given a transition $\tau \coloneqq (s, a, r, s')$, for any $p(M)$, given $p(\theta)$, if 

\begin{equation}
\frac{B(s', \pi(s'))}{B(s, a)} \in (0, 2/ \gamma)
\end{equation}

then $\Ed{\theta}{\delta(\theta, \tau) \g \tau}$ has less bias than $\Ed{\theta}{Q_\theta(s, a)}$.
\end{lem}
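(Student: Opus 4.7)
The plan is to reduce the statement to a one-line linear-algebra computation followed by a scalar inequality in $\rho = B(s',\pi(s'))/B(s,a)$.

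First I would expand the TD error using its definition $\delta(\theta,\tau) = r + \gamma Q_\theta(s',\pi(s')) - Q_\theta(s,a)$. Conditioning on $\tau$ makes $r$ a constant with respect to $\theta$, so linearity of $\Ed{\theta}{\cdot}$ gives $\Ed{\theta}{\delta(\theta,\tau)\g\tau} = r + \gamma\,\Ed{\theta}{Q_\theta(s',\pi(s'))} - \Ed{\theta}{Q_\theta(s,a)}$. Exactly the same decomposition applies to the target quantity $\Ed{M}{\delta^M(\tau)\g\tau}$ against which the bias is measured. Subtracting the two cancels the common $r$ term and leaves $\operatorname{Bias}(\Ed{\theta}{\delta(\theta,\tau)\g\tau}) = \gamma B(s',\pi(s')) - B(s,a)$, which, by the definition of $\rho$, factors as $(\gamma\rho - 1)\,B(s,a)$.

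The claim $|\operatorname{Bias}(\Ed{\theta}{\delta(\theta,\tau)\g\tau})| < |\operatorname{Bias}(\Ed{\theta}{Q_\theta(s,a)})|$ then reduces to the scalar inequality $|\gamma\rho - 1| < 1$, which is equivalent to $0 < \gamma\rho < 2$, i.e.\ exactly the hypothesis $\rho \in (0, 2/\gamma)$. As a sanity check, the constant-bias example from the main text ($B \equiv b$, hence $\rho = 1$) drops out immediately: the bias of $\Ed{\theta}{\delta\g\tau}$ is $(\gamma - 1)b$, recovering the remark preceding \Cref{prop:bias}. The case $\rho = 1/\gamma$ zeroes the prefactor and yields the unbiasedness statement of the full proposition.

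The only point that needs care --- and what I expect to be the only ``obstacle'' worth flagging --- is bookkeeping about what the bias is measured against: $\Ed{\theta}{Q_\theta(s,a)}$ and $\Ed{\theta}{\delta(\theta,\tau)\g\tau}$ estimate different population quantities ($\Ed{M}{Q^M_\pi(s,a)}$ and $\Ed{M}{\delta^M(\tau)\g\tau}$ respectively), so the Lemma compares bias \emph{magnitudes} of distinct estimators rather than of a single estimand. The argument above handles this cleanly because the ratio of magnitudes is exactly $|\gamma\rho - 1|$, independent of the shared scale $B(s,a)$, so no additional probabilistic machinery beyond linearity of expectation is required.
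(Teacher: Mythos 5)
Your proposal is correct and follows essentially the same route as the paper's proof: expand $\Ed{\theta}{\delta(\theta,\tau)\g\tau}$ by linearity, identify its bias as $\gamma B(s',\pi(s')) - B(s,a) = (\gamma\rho-1)B(s,a)$, and reduce the claim to $|\gamma\rho-1|<1$, which is exactly $\rho\in(0,2/\gamma)$. Your closing remark about comparing bias magnitudes of estimators of different estimands is a fair point of care that the paper leaves implicit, but it does not change the argument.
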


\begin{proof}
From direct manipulation of $\Ed{\theta}{\delta(\theta, \tau) \g \tau}$, we have 

\begin{align}
\Ed{\theta}{\delta(\theta, \tau) \g \tau}
&= \Ed{\theta}{\gamma Q_\theta(s', \pi(s')) + r - Q_\theta(s, a)}  \\
&= \gamma\Ed{\theta}{ Q_\theta(s', \pi(s'))} + r - \Ed{\theta}{Q_\theta(s, a)}  \\
&= \gamma \Ed{M}{Q^M_\pi(s', \pi(s'))}+ r - \Ed{M}{Q^M_\pi(s, a)} + \gamma B(s', \pi(s'))  - B(s, a) \\
&= \Ed{M}{\delta^M_\pi(\tau)} + \gamma B(s', \pi(s'))  - B(s, a).
\end{align}

Consequently, $\operatorname{Bias}(\Ed{\theta}{\delta(\theta, \tau) \g \tau}) = \gamma B(s', \pi(s'))  - B(s, a)$ and for this bias to be less than $\operatorname{Bias}( \Ed{\theta}{Q_\theta(s, a)}) = B(s, a)$, we require $|\gamma B(s', \pi(s')) - B(s, a) | < | B(s, a) |$. Let $\rho = B(s', \pi(s')) / B(s, a)$ and write $|(\gamma \rho- 1) B(s, a) | < | B(s, a) |$ from which it follows that for this to hold true, we must have $\rho \in (0, 2/\gamma)$, as to be proved.
\end{proof}

We now turn to characterising the conditions under which $\Vd{\theta}{\delta(\theta, \tau) \g \tau}$ enjoys a smaller bias than $\Vd{\theta}{Q_\theta(s, a)}$. Because the variance term involves squaring the TD-error, we must place some restrictions on the expected behaviour of the $Q$-function to bound the bias. First, as with $B$, let $C: \cS \times \cA \to \rR$ denote the bias of $\Ed{\theta}{Q_\theta^2}$ for any tuple $(s, a) \in \cS \times \cA$, so that $\operatorname{Bias}(\Ed{\theta}{Q_\theta(s, a)^2}) = C(s, a)$. Similarly, let $D: \cS \times \cA \times \cS \to \rR$ denote the bias of $\Ed{\theta}{Q_\theta(s', \pi(s'))Q_\theta(s, a)}$ for any transition $(s, a, s') \in \cS \times \cA \times \cS$. 

\begin{lem}\label{lem:tdu:var}
For any $\tau$ and any $p(M)$, given $p(\theta)$, define relative bias ratios

\begin{equation}
\rho = \frac{B(s', \pi(s'))}{B(s, a)}, \quad
\phi = \frac{C(s', \pi(s'))}{C(s, a)}, \quad 
\kappa = \frac{D(s, a, s')}{C(s, a)}, \quad
\alpha = \frac{\Ed{M}{Q_\pi^M(s', \pi(s'))}}{\Ed{M}{Q_\pi^M(s, a)}}.
\end{equation}

There exists $\rho \approx 1$, $\phi \approx 1$, $\kappa \approx 1$, $\alpha \approx 1$ such that $\Vd{\theta}{\delta(\theta, \tau) \g \tau}$ have less bias than $\Vd{\theta}{Q_\theta(s, a)}$. In particular, if $\rho = \phi = \kappa = \alpha = 1$, then 

\begin{equation}\label{eq:tdu:def}
|\operatorname{Bias}(\Vd{\theta}{\delta(\theta, \tau) \g \tau})| = |(\gamma-1)^2 \operatorname{Bias}(\Vd{\theta}{Q_\theta(s, a)})| < |\operatorname{Bias}(\Vd{\theta}{Q_\theta(s, a)})|. 
\end{equation}

Further, if $\rho = 1 / \gamma$, $\kappa = 1/\gamma$, $\phi = 1 / \gamma^2$, then $|\operatorname{Bias}(\Vd{\theta}{\delta(\theta, \tau) \g \tau})| = 0$ for any $\alpha$.
\end{lem}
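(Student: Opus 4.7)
My plan is to treat the statement as a second-moment analog of lem:tdu:mean, which requires tracking first- and second-moment biases jointly. I would begin from the observation that, conditional on $\tau$, the reward $r$ is deterministic, so
\begin{equation*}
\Vd{\theta}{\delta(\theta, \tau) \g \tau} = \gamma^2 \Vd{\theta}{Q_\theta(s', \pi(s'))} + \Vd{\theta}{Q_\theta(s, a)} - 2\gamma\, \operatorname{Cov}_\theta\!\bigl(Q_\theta(s', \pi(s')),\, Q_\theta(s, a)\bigr),
\end{equation*}
with the analogous identity for the push-forward of $p(M)$. Subtracting the two identifies $\operatorname{Bias}(\Vd{\theta}{\delta \g \tau})$ as a signed linear combination of the biases of a variance at $(s,a)$, a variance at $(s', \pi(s'))$, and a cross moment, each of which can then be expanded through $\mathbb{V}[X] = \mathbb{E}[X^2] - \mathbb{E}[X]^2$ in terms of $B$, $C$, and $D$.

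After introducing shorthand $\mu = \Ed{M}{Q^M_\pi(s,a)}$, $\mu' = \Ed{M}{Q^M_\pi(s', \pi(s'))}$, $B = B(s,a)$, $B' = B(s', \pi(s'))$, $C = C(s,a)$, $C' = C(s', \pi(s'))$, $D = D(s, a, s')$, a direct expansion yields the individual identity
\begin{equation*}
\operatorname{Bias}(\Vd{\theta}{Q_\theta(s,a)}) = C - 2\mu B - B^2,
\end{equation*}
together with the analog at $(s', \pi(s'))$ and, for the covariance, $\operatorname{Bias}(\operatorname{Cov}_\theta) = D - \mu B' - \mu' B - B B'$. Substituting the ratios $\rho$, $\phi$, $\kappa$, $\alpha$ defined in the statement and collecting terms, I expect the calculation to telescope into the closed form
\begin{equation*}
\operatorname{Bias}(\Vd{\theta}{\delta \g \tau}) = C\bigl(\gamma^2\phi + 1 - 2\gamma\kappa\bigr) - 2\mu B (\gamma\rho - 1)(\gamma\alpha - 1) - B^2 (\gamma\rho - 1)^2.
\end{equation*}

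With this identity in hand, the comparison with $\operatorname{Bias}(\Vd{\theta}{Q_\theta(s,a)})$ reduces to bounding each of the three scalar factors against $1$ in absolute value. The interval $\rho \in (0, 2/\gamma)$ forces $|\gamma\rho - 1| < 1$, which controls the $B^2$-term; pairing this with $\alpha \in (0, 2/\gamma)$ makes $|(\gamma\alpha - 1)(\gamma\rho - 1)| < 1$ and controls the $\mu B$-term; and the stated intervals on $\kappa$ and $\phi$ force $|\gamma^2\phi + 1 - 2\gamma\kappa| < 1$ to control the $C$-term. The two explicit displays in the theorem drop straight out of the closed form: under $\rho = \phi = \kappa = \alpha = 1$, every parenthesis becomes either $(\gamma - 1)$ or $(\gamma - 1)^2$ and the formula factors as $(\gamma - 1)^2 [C - 2\mu B - B^2]$, giving \Cref{eq:tdu:def}; under $\rho = \kappa = 1/\gamma$ and $\phi = 1/\gamma^2$, the first parenthesis is $1 + 1 - 2 = 0$ and $\gamma\rho - 1 = 0$, so the total bias vanishes identically and independently of $\alpha$.

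The main obstacle, mirroring lem:tdu:mean, will be the sign-matching step: comparing magnitudes of the three-term combination against $C - 2\mu B - B^2$ only yields a strict term-by-term reduction when the dominant summands share sign across the two expressions. Cancellations inside $\operatorname{Bias}(\Vd{\theta}{Q_\theta(s,a)})$ could in principle shrink its magnitude artificially, so I would present the result as giving sufficient conditions under the implicit hypothesis (as in lem:tdu:mean) that the ratios are well-defined and that the $C$-, $\mu B$-, and $B^2$-contributions do not fortuitously cancel. Once that caveat is in place, the algebraic identity above immediately delivers both special-case equalities and the inequality, with no further work beyond the bookkeeping described above.
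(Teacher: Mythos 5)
Your proposal is correct and follows essentially the same route as the paper's proof: both expand $\Vd{\theta}{\delta(\theta,\tau)\g\tau}=\Ed{\theta}{x_\theta^2}-\Ed{\theta}{x_\theta}^2$ with $x_\theta=\gamma Q_\theta(s',\pi(s'))-Q_\theta(s,a)$, express the bias as a weighted combination $w_1(\phi,\kappa)\,C + w_2(\alpha,\rho)\,2\mu B + w_3(\rho)\,B^2$ of the same three terms appearing in $\operatorname{Bias}(\Vd{\theta}{Q_\theta(s,a)})$, and read off the two special cases from the weights $(\gamma-1)^2$ and $0$. Your signs on the $\mu B$ and $B^2$ terms are in fact the correct ones (the paper writes $C+2A+B^2$ where the expansion gives $C-2A-B^2$, a consistent slip that does not affect the conclusion), and your closing caveat about sign-matching makes explicit a point the paper's ``there always exist a set of weights'' step leaves implicit.
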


\begin{proof}
We begin by characterising the bias of $\Vd{\theta}{Q_\theta(s, a)}$. Write

\begin{align}
\Vd{\theta}{Q_\theta(s, a)}
&= \Ed{\theta}{{Q(s, a)}^2} - \Ed{\theta}{{Q(s, a)}}^2 \\
&= \Ed{M}{{Q^M_\pi(s, a)}^2} + C(s, a) - {\left(\Ed{M}{{Q^M_\pi(s, a)}} + B(s, a)\right)}^2.
\end{align}

The squared term expands as

\begin{equation}
{\left(\Ed{M}{{Q^M_\pi(s, a)}} + B(s, a)\right)}^2 =
\Ed{M}{{Q^M_\pi(s, a)}}^2 + 2\Ed{M}{Q^M_\pi(s, a)} B(s, a) + B(s, a)^2.
\end{equation}

Let $A(s, a) = \Ed{M}{Q^M_\pi(s, a)} B(s, a)$ and write the bias of $\Vd{\theta}{Q_\theta(s, a)}$ as

\begin{equation}\label{eq:tdu:baseline}
\operatorname{Bias}(\Vd{\theta}{Q_\theta(s, a)}) = C(s, a) + 2A(s, a) + B(s, a)^2.
\end{equation}

We now turn to $\Vd{\theta}{\delta(\theta, \tau) \g \tau}$. First note that the reward cancels in this expression:

\begin{equation}\label{eq:tdu:var:rew}
\delta(\theta, \tau) - \Ed{\theta}{\delta(\theta, \tau)} = \gamma Q_\theta(s', \pi(s')) - Q_\theta(s, a) - \left(\gamma \Ed{\theta}{Q_\theta(s', \pi(s'))} - \Ed{\theta}{Q_\theta(s, a) } \right).
\end{equation}

Denote by $x_\theta = \gamma Q_\theta(s', \pi(s')) - Q_\theta(s, a)$ with $\Ed{\theta}{x_\theta} = \gamma \Ed{\theta}{Q_\theta(s', \pi(s'))} - \Ed{\theta}{Q_\theta(s, a) }$. Write

\begin{align}
\Vd{\theta}{\delta(\theta, \tau) \g \tau}
&= \Ed{\theta}{{\left(\delta(\theta, \tau) - \Ed{\theta}{\delta(\theta, \tau)} \right)}^2} \\
\label{eq:tdu:var:drop_rew}
&= \Ed{\theta}{{\left(x_\theta - \Ed{\theta}{x_\theta} \right)}^2} \\
&= \Ed{\theta}{x_\theta^2} - \Ed{\theta}{x_\theta}^2 \\
\label{eq:tdu:var:expand_x}
&= \Ed{\theta}{{\left(\gamma Q_\theta(s', \pi(s')) - Q_\theta(s, a)\right)}^2} - {\left( \gamma \Ed{\theta}{Q_\theta(s', \pi(s'))} - \Ed{\theta}{Q_\theta(s, a)} \right)}^2.
\end{align}

\Cref{eq:tdu:var:drop_rew} uses \Cref{eq:tdu:var:rew} and \Cref{eq:tdu:var:expand_x} substitutes back for $x_\theta$. We consider each term in the last expression in turn. For the first term, $\Ed{\theta}{{\left(\gamma Q_\theta(s', \pi(s')) - Q_\theta(s, a)\right)}^2}$, expanding the square yields

\begin{equation}
\gamma^2 \Ed{\theta}{Q_\theta(s', \pi(s'))^2} - 2\gamma\Ed{\theta}{Q_\theta(s', \pi(s') Q_\theta(s, a)} + \Ed{\theta}{Q_\theta(s, a)^2}.
\end{equation}

From this, we obtain the bias as

\begin{align}
\operatorname{Bias}\left(\Ed{\theta}{{\left(\gamma Q_\theta(s', \pi(s')) - Q_\theta(s, a)\right)}^2}\right)
&= \gamma^2 C(s', \pi(s')) - 2\gamma D(s, a, s') + C(s, a) \\ 
\label{eq:tdu:var:biasC}
&= \left(\gamma^2 \phi - 2\gamma \kappa + 1\right) C(s, a).
\end{align}

We can compare this term to $C(s, a)$ in the bias of of $\Vd{\theta}{Q_\theta(s, a)}$ (\Cref{eq:tdu:baseline}). For the bias term in \Cref{eq:tdu:var:biasC} to be smaller, we require $| \left(\gamma^2 \phi - 2\gamma \kappa + 1\right) C(s, a) | < | C(s, a) |$ from which it follows that $\left(\gamma^2 \phi - 2\gamma \kappa + 1\right) \in (-1, 1)$. In terms of $\phi$, this means

\begin{equation}\label{eq:tdu:var:phi}
\phi \in \left(\frac{2k\gamma - 2}{\gamma^2}, \frac{2k}{\gamma} \right).
\end{equation}

If the bias term $D$ is close to $C$ ($\kappa \approx 1$), this is approximately the same condition as for $\rho$ in \Cref{lem:tdu:mean}. Generally, as $\kappa$ grows large, $\phi$ must grow small and vice-versa. The gist of this requirement is that the biases should be relatively balanced $\kappa \approx \phi \approx 1$. 

For the second term in \Cref{eq:tdu:var:expand_x}, recall that $\Ed{\theta}{Q_\theta(s', \pi(s'))} = 
\Ed{M}{Q_\pi^M(s', \pi(s'))} + B(s', \pi(s'))$ and $\Ed{\theta}{Q_\theta(s, a)} = \Ed{M}{Q_\pi^M(s, a)} + B(s, a)$. We have

\begin{equation}
{\left( \Ed{\theta}{Q_\theta(s', \pi(s'))} - \Ed{\theta}{Q_\theta(s, a)} \right)}^2
= {\left((\gamma \alpha - 1) \Ed{M}{Q^M_\pi(s, a)} +  (\gamma\rho - 1) B(s, a) \right)}^2,
\end{equation}

where $\alpha = \Ed{M}{Q_\pi^M(s', \pi(s'))} /\Ed{M}{Q_\pi^M(s, a)}$. This expands as

\begin{equation}
(\gamma \alpha - 1)^2 \Ed{M}{Q^M_\pi(s, a)}^2 + 2 (\gamma \alpha - 1) (\gamma \rho - 1) \Ed{M}{Q^M_\pi(s, a)} B(s, a) +  (\gamma\rho - 1)^2 B(s, a)^2.
\end{equation}

Note that from \Cref{eq:tdu:def}, ${\left(\Ed{M}{Q_\pi^M(s', \pi(s'))} - \Ed{M}{Q_\pi^M(s, a)}\right)}^2 = (\gamma \alpha - 1)^2 \Ed{M}{Q^M_\pi(s, a)}^2$ and so the bias of $\Vd{\theta}{\delta(\theta, \tau) \g \tau}$ can be written as

\begin{equation}\label{eq:tdu:var:bias}
\operatorname{Bias}(\Vd{\theta}{\delta(\theta, \tau) \g \tau}) = w_1(\phi, \kappa) C(s, a) + w_2(\alpha, \rho) 2 A(s, a) + w_3(\rho) B(s, a)^2
\end{equation}

where 

\begin{equation}
w_1(\phi, \kappa) = \left(\gamma^2 \phi - 2\gamma \kappa + 1\right), \quad
w_2(\alpha, \rho) =  (\gamma \alpha - 1) (\gamma\rho - 1), \quad
w_3(\rho) = (\gamma \rho - 1)^2.
\end{equation}

Note that the bias in \Cref{eq:tdu:var:bias} involves the same terms as the bias of $\Vd{\theta}{Q_\theta(s, a)}$ (\Cref{eq:tdu:baseline}) but are weighted. Hence, there always exist as set of weights such that $|\operatorname{Bias}(\Vd{\theta}{\delta(\theta, \tau) \g \tau})| < |\operatorname{Bias}(\Vd{\theta}{Q_\theta(s, a)})|$. In particular, if $\rho=1/\gamma$, $\kappa = 1 / \gamma$, $\phi = 1 / \gamma^2$, then $\operatorname{Bias}(\Vd{\theta}{\delta(\theta, \tau) \g \tau})| = 0$ for any $\alpha$. Further, if $\rho=\alpha=\kappa=\phi=1$, then we have that $w_1(\phi, \kappa) = w_2(\alpha, \rho) = w_3(\rho) = (\gamma-1)^2$ and so

\begin{equation}
|\operatorname{Bias}(\Vd{\theta}{\delta(\theta, \tau) \g \tau})| = |(\gamma-1)^2 \operatorname{Bias}(\Vd{\theta}{Q_\theta(s, a)})| < |\operatorname{Bias}(\Vd{\theta}{Q_\theta(s, a)})|, 
\end{equation}
as desired.
\end{proof}

\begin{prop}\label{prop:bias2}
For any $\tau$ and any $p(M)$, given $p(\theta)$, if $\rho \in (0, 2/\gamma)$, then $\Ed{\delta}{\delta \g \tau}$ has lower bias than $\Ed{\theta}{Q_\theta(s, a)}$. Additionally, there exists $\rho \approx 1$, $\phi \approx 1$, $\kappa \approx 1$, $\alpha \approx 1$ such that $\Vd{\theta}{\delta(\theta, \tau) \g \tau}$ have less bias than $\Vd{\theta}{Q_\theta(s, a)}$. In particular, if $\rho = \phi = \kappa = \alpha = 1$, then  $|\operatorname{Bias}(\Vd{\theta}{\delta(\theta, \tau) \g \tau})| = |(\gamma-1)^2 \operatorname{Bias}(\Vd{\theta}{Q_\theta(s, a)})| < |\operatorname{Bias}(\Vd{\theta}{Q_\theta(s, a)})|$. Further, if $\rho = 1 / \gamma$, $\kappa = 1/\gamma$, $\phi = 1 / \gamma^2$, then $|\operatorname{Bias}(\Vd{\theta}{\delta(\theta, \tau) \g \tau})| = 0$ for any $\alpha$.
\end{prop}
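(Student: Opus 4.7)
I would split the argument into the mean part and the variance part, handling each by a direct bias expansion and then comparing to the baseline bias of $\Ed{\theta}{Q_\theta(s,a)}$ or $\Vd{\theta}{Q_\theta(s,a)}$. Throughout I would use the shorthand $B(s,a) := \operatorname{Bias}(\Ed{\theta}{Q_\theta(s,a)})$, $C(s,a) := \operatorname{Bias}(\Ed{\theta}{Q_\theta(s,a)^2})$, and $D(s,a,s') := \operatorname{Bias}(\Ed{\theta}{Q_\theta(s',\pi(s'))Q_\theta(s,a)})$, which turn the four ratios $\rho,\phi,\kappa,\alpha$ into clean reweighting factors.

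For the mean, I would expand $\Ed{\theta}{\delta(\theta,\tau)\g \tau} = \gamma \Ed{\theta}{Q_\theta(s',\pi(s'))} + r - \Ed{\theta}{Q_\theta(s,a)}$ and subtract the true Bellman residual $\Ed{M}{\delta^M_\pi(\tau)} = \gamma \Ed{M}{Q^M_\pi(s',\pi(s'))} + r - \Ed{M}{Q^M_\pi(s,a)}$. The reward $r$ cancels, so the bias of $\Ed{\delta}{\delta \g \tau}$ collapses to $\gamma B(s',\pi(s')) - B(s,a) = (\gamma\rho - 1)B(s,a)$. The condition $|(\gamma\rho-1)B(s,a)| < |B(s,a)|$ is exactly $\rho \in (0, 2/\gamma)$, and $\rho = 1/\gamma$ zeroes the factor, giving the unbiasedness claim.

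For the variance part, the first key observation is that $r$ drops out of the centred variable $\delta - \Ed{\theta}{\delta}$, so $\Vd{\theta}{\delta(\theta,\tau)\g \tau} = \Vd{\theta}{\gamma Q_\theta(s',\pi(s')) - Q_\theta(s,a) \g \tau}$. Writing this as $\Ed{\theta}{X^2} - \Ed{\theta}{X}^2$ and expanding, I would show that the $\Ed{\theta}{X^2}$ term contributes bias $(\gamma^2\phi - 2\gamma\kappa + 1)C(s,a)$, while the squared-mean term splits into (i) a pure $\Ed{M}{Q^M_\pi(s,a)}^2$ piece that matches the baseline and cancels, (ii) a mixed contribution with weight $(\gamma\alpha - 1)(\gamma\rho - 1)$ multiplying $2A(s,a) := 2\Ed{M}{Q^M_\pi(s,a)}B(s,a)$, and (iii) a pure $B(s,a)^2$ term with weight $(\gamma\rho - 1)^2$. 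Comparing term-by-term with $\operatorname{Bias}(\Vd{\theta}{Q_\theta(s,a)}) = C(s,a) + 2A(s,a) + B(s,a)^2$ yields the three sufficient conditions: $|\gamma^2\phi - 2\gamma\kappa + 1| < 1$ unpacks into $\phi \in (1 - 2\gamma\kappa,\,(2/\gamma)^2)$, and $|(\gamma\alpha-1)(\gamma\rho-1)| < 1$, $|(\gamma\rho-1)^2| < 1$ give the ranges on $\alpha$ and $\rho$. The two closed-form special cases then drop out by plug-in: $\rho=\phi=\kappa=\alpha=1$ makes every weight equal to $(\gamma-1)^2$, and $\rho=1/\gamma$, $\kappa=1/\gamma$, $\phi=1/\gamma^2$ annihilates each of the three weights independently of $\alpha$.

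The main obstacle I expect is simply the bookkeeping of the second-order expansion: three distinct bias contributions get multiplied by three different functions of the four ratios, and one has to verify that the stated intervals each suffice to keep their respective weight strictly below one in absolute value. The coupling $\phi \in (1 - 2\gamma\kappa, (2/\gamma)^2)$ is what makes the statement non-obvious: it arises from isolating $\phi$ in $|\gamma^2\phi - 2\gamma\kappa + 1|<1$ and is the only place where two ratios interact. The mean case is essentially a one-line manipulation; all of the subtlety lives in ensuring the variance decomposition is clean enough that the sufficient conditions separate ratio by ratio, rather than only jointly.
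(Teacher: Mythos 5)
Your proposal follows essentially the same route as the paper's proof: the mean part via the cancellation of $r$ and the identity $\operatorname{Bias}(\Ed{\delta}{\delta \g \tau}) = (\gamma\rho - 1)B(s,a)$, and the variance part via the decomposition $\operatorname{Bias}(\Vd{\theta}{\delta(\theta,\tau)\g\tau}) = w_1(\phi,\kappa)\,C(s,a) + w_2(\alpha,\rho)\,2A(s,a) + w_3(\rho)\,B(s,a)^2$ compared term-by-term against $C(s,a) + 2A(s,a) + B(s,a)^2$, with the two special cases obtained by substitution. This is correct and matches the paper's argument (including its implicit term-by-term comparison of the weighted bias contributions).
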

\begin{proof}
The first part follows from \Cref{lem:tdu:mean}, the second part follows from \Cref{lem:tdu:var}.
\end{proof}

\section{Binary Tree MDP}\label{app:tremdp}

In this section, we make a direct comparison between the Bootstrapped DQN and TDU on the Binary Tree MDP introduced by \citet{Janz:2019su}. In this MDP, the agent has two actions in every state. One action terminates the episode with $0$ reward while the other moves the agent one step further up the tree. At the final branch, one leaf yields a reward of $1$. Which action terminates the episode and which moves the agent to the next branch is randomly chosen per branch, so that the agent must learn an action map for each branch separately. This is a similar environment to Deep Sea, but simpler in that an episode terminates upon taking a wrong action and the agent does not receive a small negative reward for taking the correct action. We include the Binary Tree MDP experiment to compare the scaling property of \sname as compared to TDU on a well-known benchmark.

We use the default Bsuite implementation\footnote{\url{https://github.com/deepmind/bsuite/tree/master/bsuite/baselines/jax/bootdqn}.} of the bootstrapped DQN, with the default architecture and hyper-parameters from the published baseline, reported in \Cref{tab:bsuite}. The agent is composed of a two-layer MLP with RELU activations that approximate $Q(s, a)$ and is trained using experience replay. In the case of the bootstrapped DQN, all ensemble members learn from a shared replay buffer with bootstrapped data sampling, where each member $Q_{\theta^k}$ is a separate MLP (no parameter sharing) that is regressed towards separate target networks. We use Adam \citep{Kingma:2015ad} and update target networks periodically (\Cref{tab:bsuite}).

We run 5 seeds per tree-depth, for depths $L \in \{10, 20, \ldots,  250\}$ and report mean performance in \Cref{fig:tree_mdp_boot_tdu}. Our results are in line with those of \citet{Janz:2019su}, differences are due to how many gradient steps are taken per episode (our results are between the reported scores for the $1\times$ and $25\times$ versions of the bootstrapped DQN). We observe a clear beneficial effect of including \sname, even for small values of $\beta$. Further, we note that performance is largely monotonically increasing in $\beta$, further demonstrating that the \sname signal is well-behaved and robust to hyper-parameter values. 

We study the properties of \sname in \Cref{fig:tree_mdp_tdu_hyp}, which reports performance without prior functions ($\lambda =0$). We vary $\beta$ and the number of exploration value functions $N$. The total number of value functions is fixed at $20$, and so varying $N$ is equivalent to varying the degree of exploration. We note that $N$ has a similar effect to $\beta$, but has a slightly larger tendency to induce over-exploration for large values of $N$.

\begin{figure}[t]
  \includegraphics[width=\linewidth]{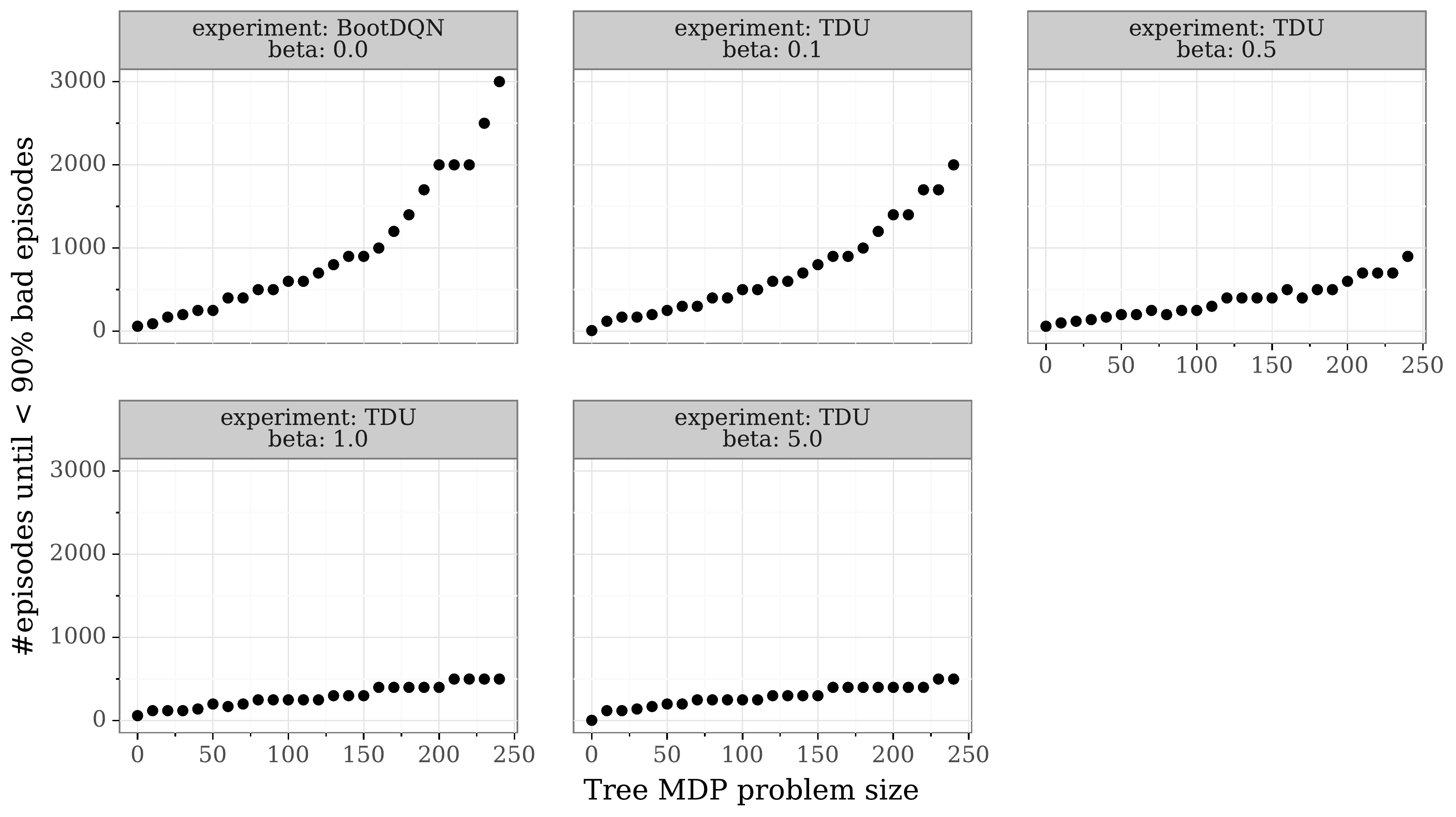}
  \caption{Performance on Binary Tree MDP. \textit{Top left:} BootDQN ($\beta=0$). \textit{Others:} \sname with varying strenghts of intrinsic reward ($\beta > 0$). Results with prior strength $\lambda = 3$. Mean performance over $5$ seeds for each tree depth.}
  \label{fig:tree_mdp_boot_tdu}
\end{figure}

\begin{figure}[t]
  \includegraphics[width=\linewidth]{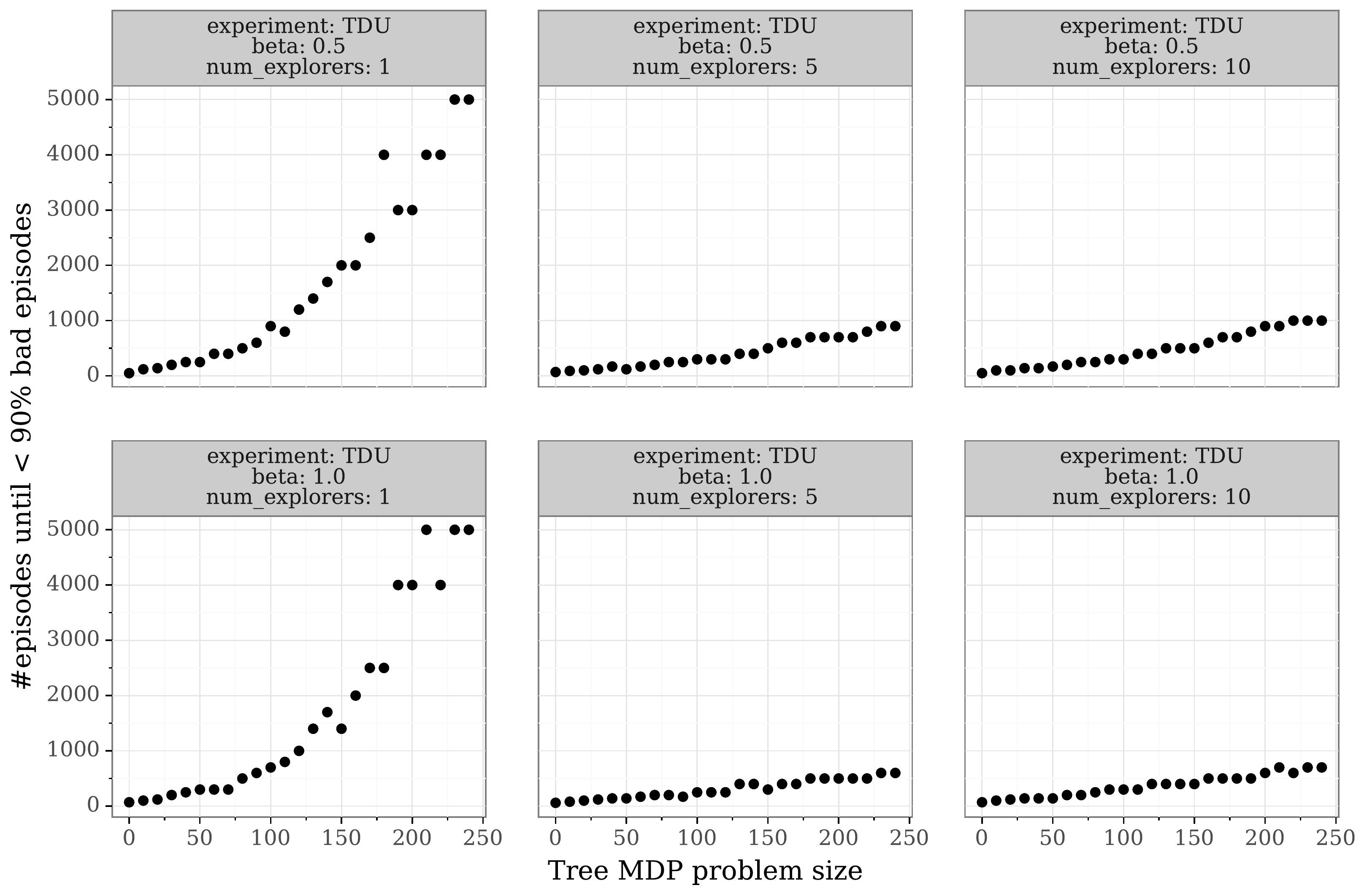}
  \caption{Hyper-parameter sensitivity analysis on Binary Tree MDP. \textit{Top:} Sweep over number of exploration policies ($N$) for $\beta=0.5$. \textit{Top:} Sweep over number of exploration value functions ($N$) for $\beta=1$. All results without prior functions ($\lambda=0$). Mean performance over $5$ seeds for each tree depth.}
  \label{fig:tree_mdp_tdu_hyp}
\end{figure}

\clearpage

\section{Behaviour Suite}\label{app:bsuite}

\begin{figure}[t]
  \centering
  \begin{subfigure}{0.45\linewidth}
    \centering
    \includegraphics[width=\linewidth]{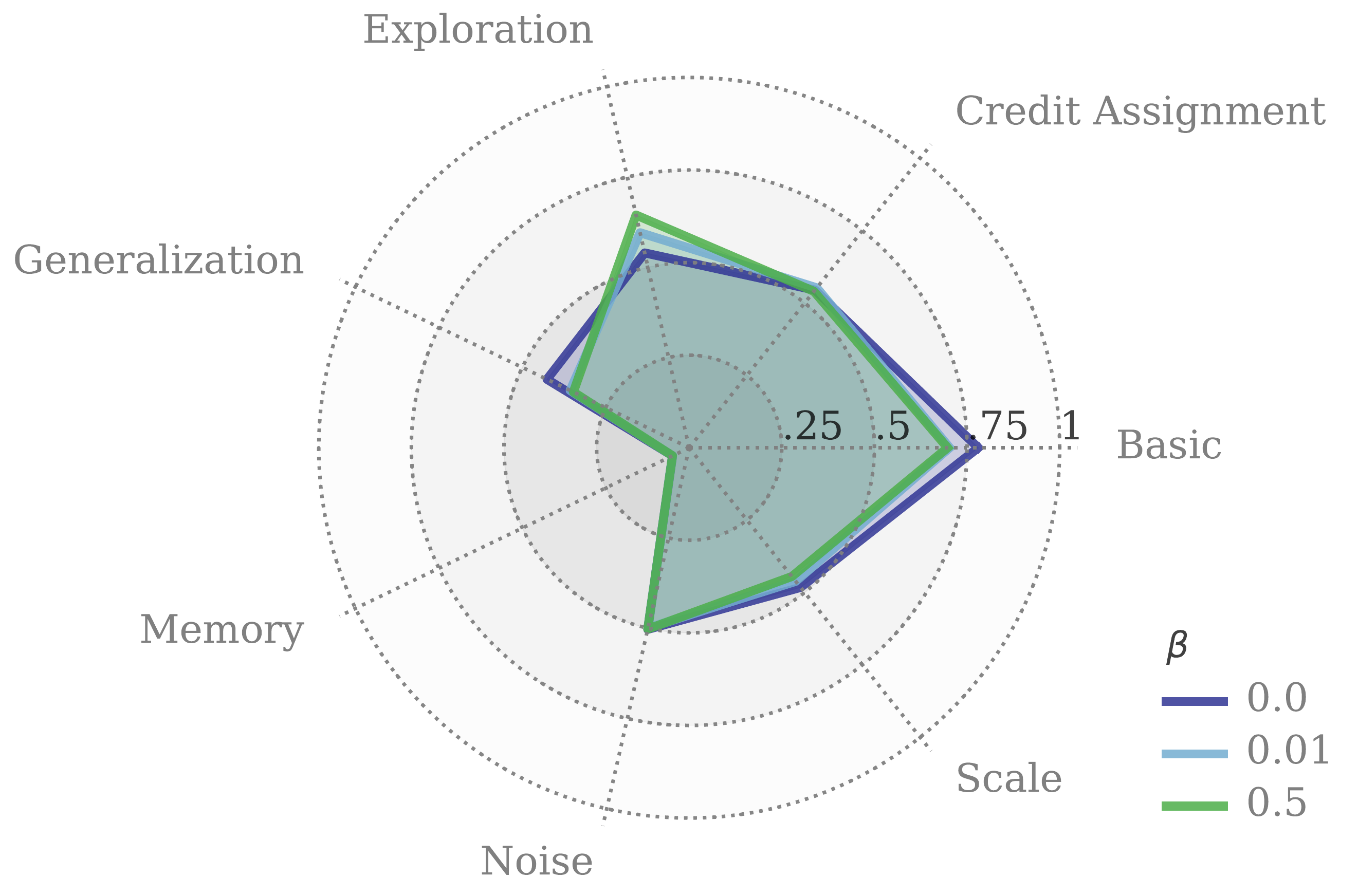}
  \end{subfigure}
  \begin{subfigure}{0.45\linewidth}
    \centering
    \includegraphics[width=\linewidth]{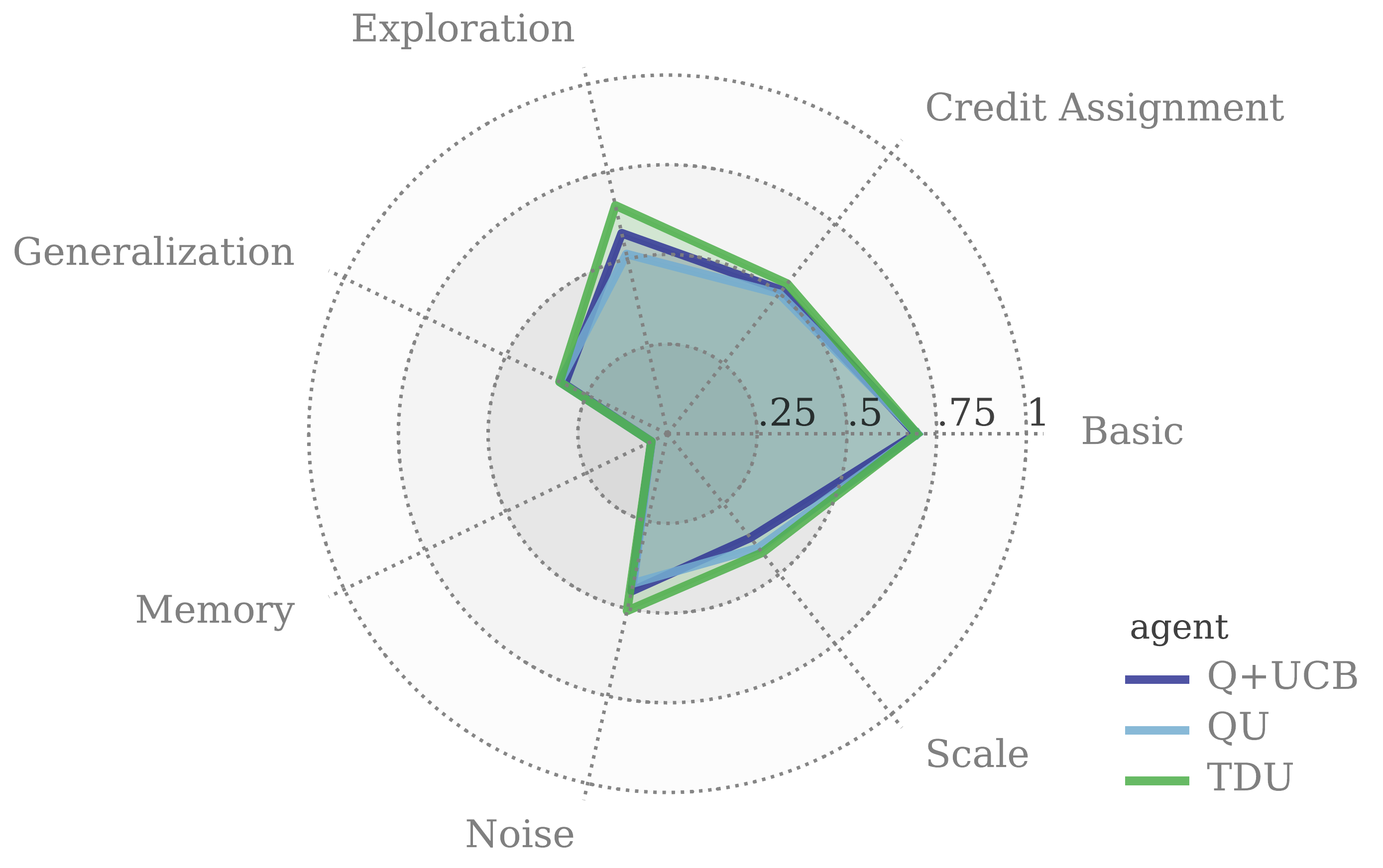}
  \end{subfigure}
  \caption{Overall performance scores on Bsuite. \textit{Left:} Effect of varying $\beta$. \textit{Right:} comparison of TDU to exploration under $\sigma = \sigma(\cQ)$ as intrinsic reward (QU) or as an immediate bonus (Q+UCB).}
  \label{fig:bsuite-radar}
\end{figure}

From \citet{Osband:2020be}: ``The Behaviour Suite for Reinforcement Learning (Bsuite) is a collection of carefully-designed experiments that investigate core capabilities of a reinforcement learning agent . The aim of the Bsuite project is to collect clear, informative and scalable problems that capture key issues in the design of efficient and general learning algorithms and study agent behaviour through their performance on these shared benchmarks.''

\begin{wraptable}{r}{0.45\textwidth}
  \vspace*{-12pt}
  \centering
  \caption{Hyper-parameters for Bsuite.}\label{tab:bsuite}
  \begin{tabular}{l | l}
    \toprule
    discount factor ($\gamma$)  & 0.99\\
    batch size                  & 32\\
    num hidden layers           & 2\\
    hidden layer sizes          & [64, 64]\\
    ensemble size               & 20\\
    learning rate               & 0.001\\
    mask prob                   & 1.0\\ 
    replay size                 & 10000\\
    env steps per gradient step & 1\\
    env steps per target update & 4\\ 
    \bottomrule
  \end{tabular}
  \vspace*{-10pt}
\end{wraptable}

\subsection{Agents and hyper-parameters}

All baselines use the default Bsuite DQN implementation\footnote{\url{https://github.com/deepmind/bsuite/tree/master/bsuite/baselines/jax/dqn}.}. We use the default architecture and hyper-parameters from the published baseline, reported in \Cref{tab:bsuite}, and sweep over algorithm-specific hyper-parameters, reported in \Cref{tab:bsuite:sweep}. The agent is composed of a two-layer MLP with RELU activations that approximate $Q(s, a)$ and is trained using experience replay. In the case of the bootstrapped DQN, all ensemble members learn from a shared replay buffer with bootstrapped data sampling, where each member $Q_{\theta^k}$ is a separate MLP (no parameter sharing) that is regressed towards separate target networks. We use Adam \citep{Kingma:2015ad} and update target networks periodically (\Cref{tab:bsuite}).

\paragraph{QEX} Uses two networks $Q_\theta$ and $Q_\vartheta$, where $Q_\theta$ is trained to maximise the extrinsic reward, while $Q_\vartheta$ is trained to maximise the absolute TD-error of $Q_\theta$ \citep{Simmons:2019qx}. In contrast to TDU, the intrinsic reward is given as a point estimate of the TD-error for a given transition, and thus cannot be interpreted as measuring uncertainty as such. 

\paragraph{CTS} Implements a count-based reward defined by $i(s, a, \cH) = (N(s, a, \cH) + 0.01)^{-1/2}$, where $\cH$ is the history and $N(s, a, \cH) = \sum_{\tau \in \cH} \boldsymbol{1}_{(s, a) \in \tau}$ is the number of times $(s, a)$ has appeared in a transition $\tau \coloneqq (s, a, r, s')$. This intrinsic reward is added to the extrinsic reward to form an augmented reward $\tilde{r} = r + \beta i$ used to train a DQN agent \citep{Bellemare:2016un}.

\paragraph{RND} Uses two auxiliary networks $f_\vartheta$ and $f_{\tilde{\vartheta}}$ that map a state into vectors $x = f_\vartheta(s)$ and $\tilde{x} = f_{\tilde{\vartheta}}(s)$, $x, \tilde{x} \in \rR^m$. While $\tilde{\vartheta}$ is a random parameter vector that is fixed throughout, $\vartheta$ is trained to minimise the mean squared error $i(s) = \| x - \tilde{x}\|$. This error is simultaneously used as an intrinsic reward in the augmented reward function $\tilde{r}(s, a) = r(s, a) + \beta i(s)$ and is used to train a DQN agent. Following \citet{Burda:2018ro}, we normalise intrinsic rewards by an exponential moving average of the mean and the standard deviation that are being updated with batch statistics (with decay $\alpha$).

\paragraph{BDQN} Trains an ensemble $\cQ = \{Q_{\theta^k}\}_{k=1}^K$ of DQNs \citep{Osband:2016de}. At the start of each episode, one DQN is randomly chosen from which a greedy policy is derived. Data collected is placed in a shared replay memory, and all ensemble members have some probability $\rho$ of training on any transition in the replay. Each ensemble member has its own target network. In addition, each DQN is augmented with a random prior function $f_{\vartheta}$, where $\tilde{\vartheta}$ is a fixed parameter vector that is randomly sampled at the start of training. Each DQN is defined by $Q_{\theta^k} + \lambda f_{\vartheta^k}$, where $\lambda$ is a hyper-parameter regulating the scale of the prior. Note that the target network uses a distinct prior function.

\paragraph{SU} Decomposes the DQN as $Q_\theta(s, a) = w^T \psi_\vartheta(s, a)$. The parameters $\vartheta$ are trained to satisfy the Success Feature identity while $w$ is learned using Bayesian linear regression; at the start of each episode, a new $w$ is sampled from the posterior $p(w \g \text{history})$ \citep{Janz:2019su}.\footnote{See \url{https://github.com/DavidJanz/successor_uncertainties_tabular}.}

\paragraph{NNS} NoisyNets replace feed-forward layers $W x + b$ by a noisy equivalent $(W + \Sigma \odot \epsilon^W) x + (b + \sigma \odot \epsilon^b)$, where $\odot$ is element-wise multiplication; $\epsilon^W_{ij} \sim \cN(0, \beta)$ and $\epsilon^b_{i} \sim \cN(0, \beta)$ are white noise of the same size as $W$ and $b$, respectively. The set $(W, \Sigma, b, \sigma)$ are learnable parameters that are trained on the normal TD-error, but with the noise vector re-sampled after every optimisation step. Following \citet{Fortunato:2017no}, sample noise separately for the target and the online network.

\begin{table}
  \centering
  \caption{Hyper-parameter grid searches for Bsuite. Best values in bold.}\label{tab:bsuite:sweep}
  \begin{tabular}{l | l | l}
    \toprule
    Algorithm & Hyper-parameter                                         & Sweep set\\
    \midrule
    QEX & Intrinsic reward scale ($\beta$)      & $\{10^{-4}, 10^{-3}, 10^{-2}, 10^{-1}, 10^{0}, 5 \cdot 10^{0}, \boldsymbol{10^{1}}, 10^{2}, 10^{3}\}$ \\[0.5em]
    CTS       & Intrinsic reward scale ($\beta$)      & $\{10^{-4}, 10^{-3}, 10^{-2}, \boldsymbol{10^{-1}}, 5 \cdot 10^{0}, 10^{0}, 10^{1}, 10^{2}, 10^{3}\}$ \\[0.5em]
    RND       & Intrinsic reward scale ($\beta$)      & $\{10^{-2}, 5 \cdot 10^{-1}, 10^{-1}, 10^{0}, 5 \times 10^{0}, \boldsymbol{10^{1}}, 10^{2}\}$ \\
              & $x$-dim ($m$)                         & $\{\boldsymbol{10}, 64, 128\}$ \\
              & Moving average decay ($\alpha$)       & $\{0.9, 0.99, \boldsymbol{0.999}\}$ \\
              & Normalise intrinsic reward            & $\{\text{\textbf{True}}, \text{False}\}$ \\[1em]
    BDQN      & Prior scale ($\lambda$)               & $\{0, 1, \boldsymbol{3}, 5, 10, 50, 100\}$ \\[1em]
    SU        & Hidden size                           & $\{20, \boldsymbol{64}\}$ \\
              & Likelihood variance ($\beta$)         & $\{10^{-2}, 10^{-1}, 10^{0}, \boldsymbol{10^{1}}, 10^{2}\}$ \\
              & Prior variance ($\theta$)             & $\{\boldsymbol{10^{-3}}, 10^{-1}, 10^{0}, 10^{1}, 10^{3}\}$ \\[1em]
    NNS       & Noise scale ($\beta$)                 & $\{10^{-2}, 10^{-1}, 10^{0}, \boldsymbol{10^{1}}, 10^{2}\}$ \\[1em]    
    TDU       & Prior scale ($\lambda$)               & $\{0, 10^0, \boldsymbol{3 \cdot 10^0}\}$ \\
              & Intrinsic reward scale ($\beta$)      & $\{10^{-3}, 10^{-2}, 10^{-1}, \boldsymbol{10^{0}}, 5 \cdot 10^{0}, 10^{1}\}$ \\
    \bottomrule
  \end{tabular}
\end{table}

\paragraph{\sname} We fix the number of explorers to $10$ (half of the number of value functions in the ensemble), which roughly corresponds to randomly sampling between a reward-maximising policy and an exploration policy. Our experiments can be replicated by running the \texttt{TDU} agent implemented in \Cref{alg:tdu-jax} in the Bsuite GitHub repository.\footnote{\url{https://github.com/deepmind/bsuite/blob/master/bsuite/baselines/jax}.}

\subsection{\sname Experiments}\label{app:bsuite:experiments}

\paragraph{Effect of \sname} Our main experiment sweeps over $\beta$ to study the effect of increasing the \sname exploration bonus, with $\beta \in \{0, 0.01, 0.1, 0.5, 1, 2, 3, 5\}$; $\beta=0$ corresponds to default bootstrapped DQN. We find that $\beta$ reflects the exploitation-exploration trade-off: increasing $\beta$ leads to better performance on exploration tasks (see main paper) but typically leads to worse performance on tasks that do not require further exploration beyond $\epsilon$-greedy (\Cref{fig:bsuite-radar}). In particular, we find that $\beta > 0$ prevents the agent from learning on Mountain Car, but otherwise retains performance on non-exploration tasks. \Cref{fig:bsuite-comp-main} provides an in-depth comparison per game.

Because $\sigma$ is a principled measure of concentration in the distribution $p(\delta \g s, a, r, s')$, $\beta$ can be interpreted as specifying how much of the tail of the distribution the agent should care about. The higher we set $\beta$, the greater the agent's sensitivity to the tail-end of its uncertainty estimate. Thus, there is no reason in general to believe that a single $\beta$ should fit all environments, and recent advances in multi-policy learning \citep{schaul2019adapting,Zahavy:2020se,Badia:2020ne} suggests that a promising avenue for further research is to incorporate mechanisms that allow either $\beta$ to dynamically adapt or the sampling probability over policies. To provide concrete evidence to that effect, we conduct an ablation study that uses bandit policy sampling below.

\paragraph{Effect of prior functions} We study the inter-relationship between additive prior functions \citep{Osband:2019de} and \sname. We sweep over $\lambda \in [0, 1, 3]$, where prior functions define value function estimates by $Q^{k} = Q_{\theta^k} + \lambda P^k$ for some random network $P^k$ . Thus, $\lambda = 0$ implies no prior function. We find a general synergistic relationship; increasing $\lambda$ improves performance (both with and without \sname), and for a given level of $\lambda$, performance on exploration tasks improve for any $\beta > 0$. It should be noted that these effects do no materialise as clearly in our Atari settings, where we find no conclusive evidence to support $\lambda > 0$ under \sname.

\paragraph{Ablation: exploration under non-TD signals} To empirically support theoretical underpinnings of \sname (\Cref{prop:bias}), we conduct an ablation study where $\sigma$ is re-defined as the standard deviation over value estimates:
\begin{equation}
\sigma(\cQ) \coloneqq \sqrt{\frac{1}{K-1} \sum_{k=1}^K Q^k - \bar{Q}}.
\end{equation}
In contrast to \sname, this signal does not condition on the future and consequently is likely to suffer from a greater bias. We apply this signal both as in intrinsic reward (QU), as in \sname, and as an UCB-style exploration bonus (Q+UCB), where $\sigma$ is instead applied while acting by defining a policy by $\pi(\cdot) = \argmax_{a} Q(\cdot, a) + \beta \sigma(\cQ; \cdot, a)$. Note that \sname cannot be applied in this way because the \sname exploration signal depends on $r$ and $s'$. We tune each baseline over the same set of $\beta$ values as above (incidentally, these coincide to $\beta=1$) and report best results in \Cref{fig:bsuite-radar}. We find that either alternative is strictly worse than \sname. They suffer a significant drop in performance on exploration tasks, but are also less able to handle noise and reward scaling. Because the \emph{only} difference between QU and \sname is that in \sname, $\sigma$ conditions on the next state. Thus, there results are in direct support of \Cref{prop:bias} and demonstrates that $\Vd{\theta}{\delta \g \tau}$ is likely to have less bias than $\Vd{\theta}{Q_\theta(s, a)}$. 

\paragraph{Ablation: bandit policy sampling} Our main results indicate, unsurprisingly, that different environments require different emphasis on exploration. To test this more concretely, in this experiment we replace uniform policy sampling with the UCB1 bandit algorithm. However, in contrast to that example, where UCB1 is used to take actions, here it is used to select a policy for the next episode. We treat each $N+K$ value function as an ``arm'' and estimate its mean reward $V^k \approx \Ed{\pi^k}{r}$, where the expectation is with respect to rewards $r$ collected under policy $\pi^k(\cdot) = \argmax_{\!a} Q^k(\cdot, a)$. The mean reward is estimated as the running average
\begin{equation}
V^{k}(n) = \frac{1}{n(k)}\sum_{i=1}^{n(k)} r_i,
\end{equation}
where $n(k)$ is the number of environment steps for which policy $\pi^k$ has been used and $r_i$ are the observed rewards under policy $\pi^k$. Prior to an episode, we choose a policy to act under according to: $\argmax_{k=1, \ldots,N+K} V^k(n) + \eta \sqrt{\log n / n(k)}$, where $n$ is the total number of environment steps taken so far and $\eta$ is a hyper-parameter that we tune. As in the bandit example, this sampling strategy biases selection towards policies that currently collect higher reward, but balances sampling by a count-based exploration bonus that encourages the agent to eventually try all policies. This bandit mechanism is very simple as our purpose is to test whether some form of adaptive sampling can provide benefits; more sophisticated methods \citep[e.g.][]{schaul2019adapting} can yield further gains.

We report full results in \Cref{fig:bsuite-comp-main}; we use $\beta=1$ and tune $\eta \in \{0.1, 1, 2, 4, 6, 8\}$. We report results for the hyper-parameter that performed best overall, $\eta=8$, though differences with $\eta > 4$ are marginal. While \sname does not impact performance negatively in general, in the one case where it does---Mountain Car---introducing a bandit to adapt exploration can largely recover performance. The bandit yields further gains in dense reward settings, such as in Cartpole and Catch, with an outlying exception in the bandit setting with scaled rewards.

\begin{figure}[t]
  \centering
  \includegraphics[width=\linewidth]{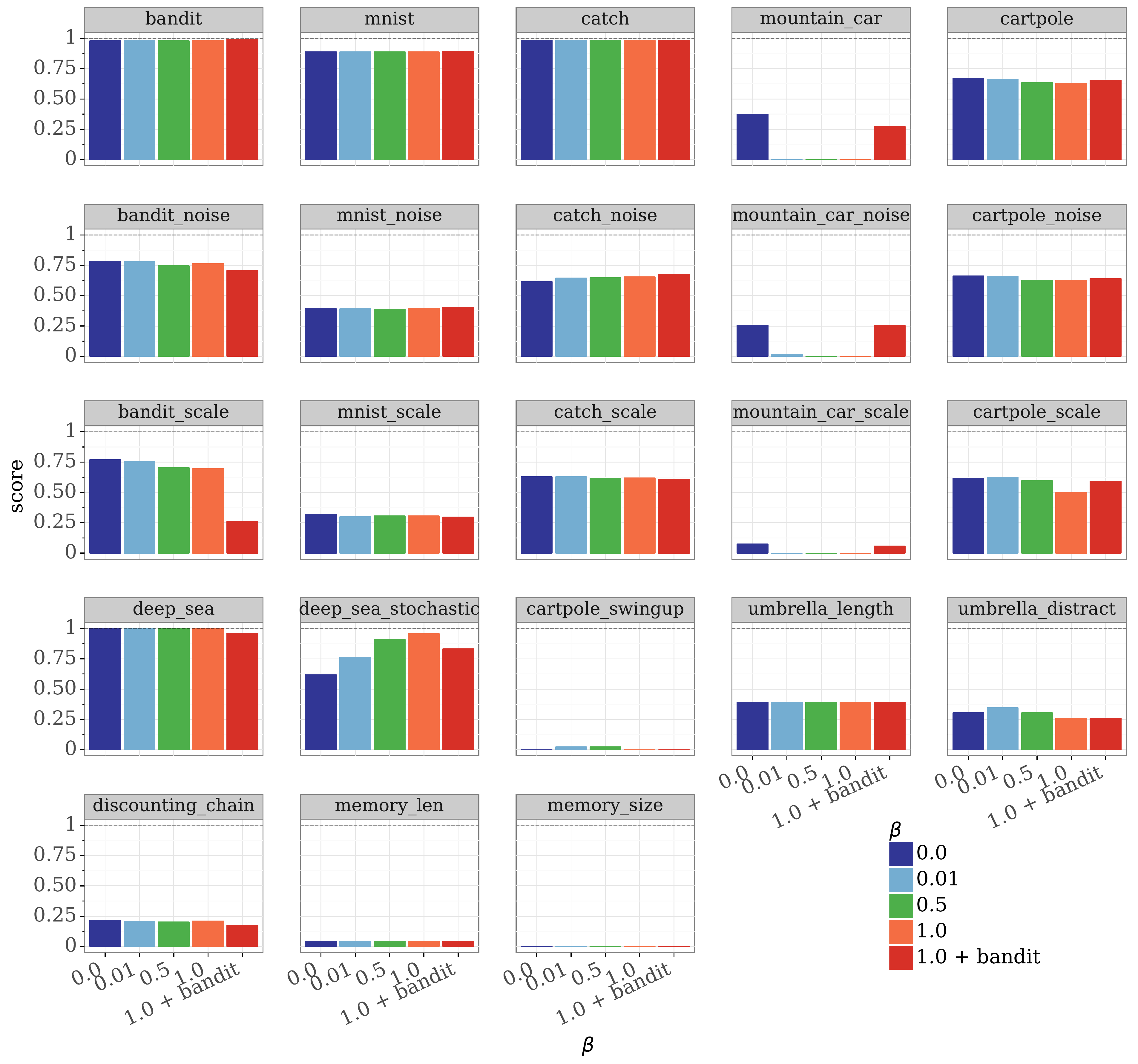}
  \caption{Bsuite per-task results. Results reported for different values of $\beta$ with prior $\lambda = 3$. We also report results under UCB1 policy sampling (``bandit'') for $\beta=1, \lambda=3, \eta=8$.}
  \label{fig:bsuite-comp-main}
\end{figure}

\section{Atari with R2D2}\label{app:atari-r2d2}

\subsection{Bootstrapped R2D2}
We augment the R2D2 agent with an ensemble of dueling action-value heads $Q_i$. The behavior policy followed by the actors is an $\epsilon$-greedy policy as before, but where the greedy action is determined according to a single $Q_i$ for a fixed length of time (100 actor steps in all of our experiments), before sampling a new $Q_i$ uniformly at random. The evaluation policy is also $\epsilon$-greedy with $\epsilon = 0.001$, where the Q-values are averaged only over the exploiter heads.

Each trajectory inserted into the replay buffer is associated with a binary mask indicating which $Q_i$ will be trained from this data, ensuring that the same mask is used every time the trajectory is sampled. Priorities are computed as in R2D2, except that TD-errors are now averaged over all heads.

Instead of using reward clipping, R2D2 estimates a transformed version of the state-action value function to make it easier to approximate for a neural network. One can define a transformed Bellman operator given any squashing function $h: \rR \rightarrow \rR$ that is monotonically increasing and invertible. We use the function $h: \rR \mapsto \rR$ defined by
\begin{align}
h(z)&=\sign(z)(\sqrt{|z|+1}-1) + \epsilon z,
\\[1em]
h^{-1}(z)&=\sign(z)\left(\left(\frac{\sqrt{1+4\epsilon(|z|+1+\epsilon)}-1}{2\epsilon}\right)-1\right),
\end{align}
for $\epsilon$ small. In order to compute the TD errors accurately we need to account for the transformation,
\begin{equation}
\delta(\theta, s, a, r, s') \coloneqq \gamma h^{-1}(Q_{\theta}(s', \pi(s'))) + r - h^{-1}(Q_{\theta}(s, a)).    
\end{equation}
Similarly, at evaluation time we need to apply $h^{-1}$ to the output of each head before averaging.

When making use of a prior we use the form $Q^k = Q^k_{\theta} + \lambda P^k$, where $P^k$ is of the same architecture as the $Q^k_{\theta}$ network, but with the widths of all layers cut to reduce computational cost.
Finally, instead of n-step returns we utilise $Q(\lambda)$ \citep{Peng:1994in} as was done in \citep{Guez:2020va}.
In all variants we used the hyper-parameters listed in \Cref{tab:hyperparameters}.

\begin{table}[b]
\centering
\caption{R2D2 hyperparameters.}\label{tab:hyperparameters}
\begin{tabular}{l|l}
\toprule
Ensemble size & $10$ \\
Optimizer & Adam \citep{Kingma:2015ad}\\
Learning rate & $0.0002$  \\
Adam epsilon & $0.001$  \\
Adam beta1 & $0.9$  \\
Adam beta2 & $0.999$  \\
Adam global clip norm & $40$  \\
Discount & $0.997$ \\
Batch size & $64$ \\
Trace length & $80$ \\
Replay period & $40$ \\
Burn in length & $20$ \\
$\lambda$ for RL loss & $0.97$ \\
R2D2 reward transformation & ${\rm sign}(x) \cdot (\sqrt{|x| + 1} - 1) + 0.001 \cdot x$ \\
Replay capacity (num of sequences) & $1e5$ \\
Replay priority exponent & $0.9$ \\
Importance sampling exponent & $0.6$ \\
Minimum sequences to start replay & $5000$ \\
Actor update period & $100$ \\
Target Q-network update period & $400$ \\
Evaluation $\epsilon$ & $0.001$\\\bottomrule
\end{tabular}
\end{table}

\subsection{Pre-processing}

We used the standard pre-process of the frames received from the Arcade Learning Environment.\footnote{Publicly available at \url{https://github.com/mgbellemare/Arcade-Learning-Environment}.} See \Cref{fig:pre} for details.

\begin{wraptable}{r}{0.5\textwidth}
\vspace*{-12pt}
\centering
\caption{Atari pre-processing hyperparameters.}
\label{fig:pre}
\begin{tabular}{l|l}
\toprule
Max episode length & $30$ min \\
Num. action repeats & $4$ \\
Num. stacked frames & $4$ \\
Zero discount on life loss & $false$ \\
Random noops range & $30$ \\
Sticky actions & $false$ \\
Frames max pooled & 3 and 4\\
Grayscaled/RGB & Grayscaled \\
Action set & Full \\ \bottomrule
\end{tabular}
\vspace{-1ex}
\end{wraptable}

\subsection{Hyper-parameter Selection}\label{app:atari:hypers}

In the distributed setting we have three TDU-specific hyper-parameters to tune namely: $\beta$, $N$ and the prior weight $\lambda$. 
For our main results, we run each agent across 8 seeds for 20 billions steps. For ablations and hyper-parameter tuning, we ran agents across 3 seeds for 5 billion environment steps on a subset of 8 games: \texttt{frostbite},\texttt{gravitar}, \texttt{hero}, \texttt{montezuma\_revenge}, \texttt{ms\_pacman}, \texttt{seaquest}, \texttt{space\_invaders}, \texttt{venture}.
This subset presents quite a bit of diversity including dense-reward games as well as three hard exploration games: \texttt{gravitar}, \texttt{montezuma\_revenge} and \texttt{venture}.
To minimise the computational cost, we started by setting $\lambda$ and $N$ while maintaining $\beta=1$. We employed a coarse grid of $\lambda \in \{0., 0.05, 0.1\}$ and $N \in \{2, 3, 5\}$. \Cref{fig:atari_r2d2_n_lambda} summarises the results in terms of the mean Human Normalised Scores (HNS) across the set.
We see that the performance depends on the type of games being evaluated. Specifically, hard exploration games achieve a significantly lower score.
Performance does not significantly change with the number of explorers. The largest differences are observed for the exploration games when $N=5$.
We select best performing sets of hyper parameters for \sname with and without additive priors: $(N=2, \lambda=0.1)$ and $(N=5, \lambda=0)$, respectively.

\begin{figure}[ht]
    \begin{subfigure}{\columnwidth}
      \centering
      \includegraphics[width=0.98\linewidth]{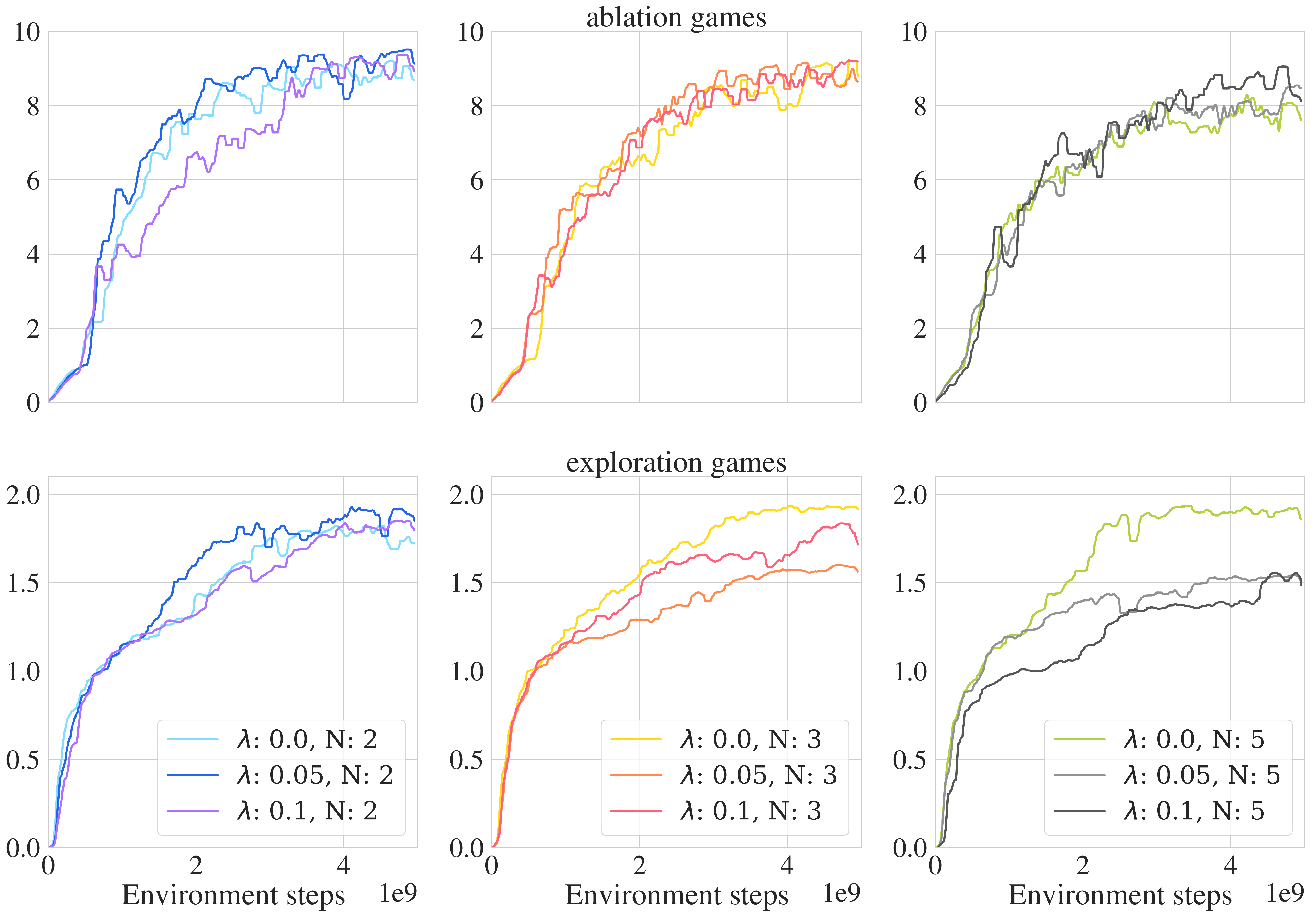}
    \end{subfigure}    
  \caption{Ablation for prior scale, $\lambda$ and the number of explorers, $N$, on the distributed setting. We fix $\beta=1$. Refer to the text for details on the ablation and exploration set of games.} 
  \label{fig:atari_r2d2_n_lambda}
\end{figure}

We evaluate the influence of the exploration bonus strength by fixing $(N=5, \lambda=0)$ and choosing $\beta \in \{0.1, 1., 2.\}$.
\Cref{fig:atari_r2d2_beta} summarises the results. The set of dense rewards is composed of the games in the ablation set that are not considered hard exploration games. We observe that larger values of $\beta$ help on exploration but affect performance on dense reward games. 
We plot jointly the performance in mean HNS acting when averaging the Q-values for both, the exploiter heads (solid lines) and the explorer heads (dotted lines). We can see that higher strengths for the exploration bonus (higher $\beta$) renders the explorers ``uninterested'' in the extrinsic rewards, preventing them to converge to exploitative behaviours. This effect is less strong for the hard exploration games. \Cref{fig:atari_r2d2_beta_games} we show how this effect manifests itself on the performance on three games: \texttt{gravitar}, \texttt{space\_invaders}, and \texttt{hero}. 
This finding also applies to the evaluation performed on our evaluation using all 57 games in the Atari suite, as shown below. We conjecture that controlling for the strength of the exploration bonus on a per game manner would significantly improve the results. This finding is in line with observations made by \citep{Badia:2020ne}; combining \sname with adaptive policy sampling \citep{schaul2019adapting} or online hyper-parameter tuning \citep{Xu:2018le,Zahavy:2020se} are exciting avenues for future research.

\begin{figure}[t!]
    \begin{subfigure}{\columnwidth}
      \centering
      \includegraphics[width=0.98\linewidth]{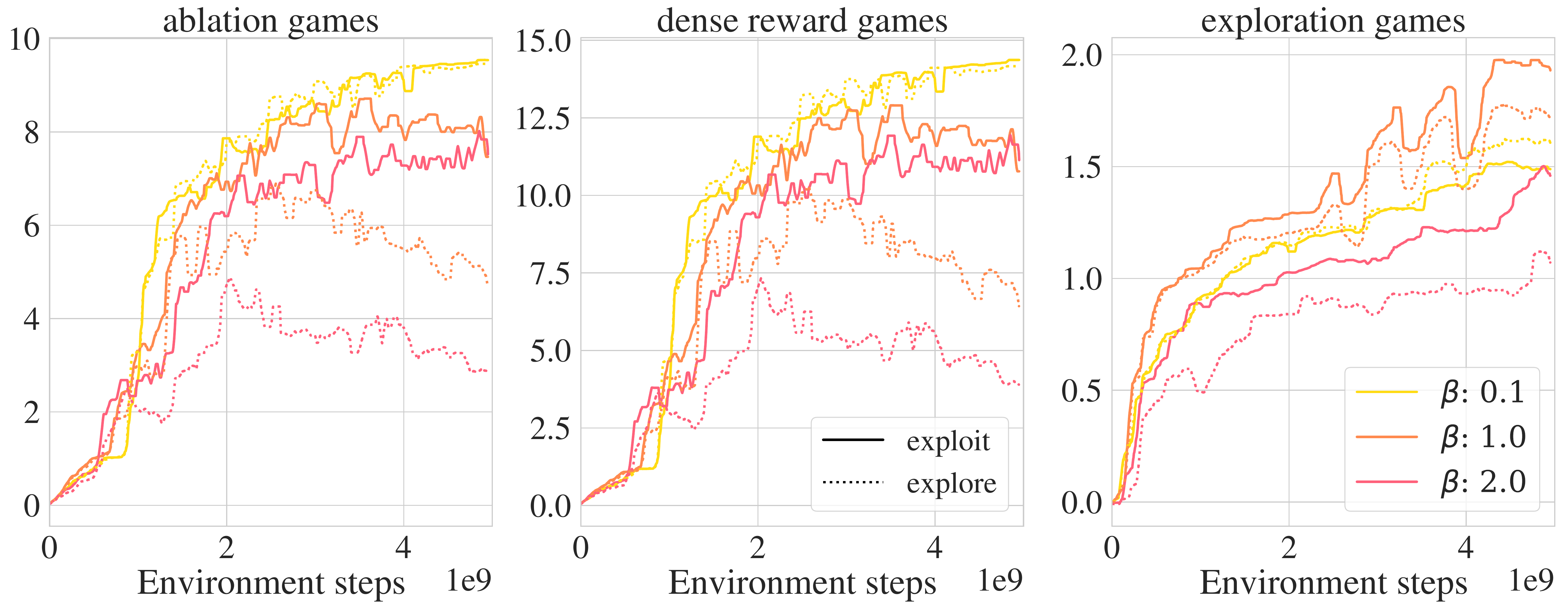}
    \end{subfigure}    
  \caption{Ablation for the exploration bonus strength, $\beta$, on the distributed setting. We fix $(N=5, \lambda=0)$. We report the mean HNS for the ensemble of exploiter (solid lines) and the ensemble of explorers (dotted lines). All runs are average over three seeds per game. Refer to the text for details on ablation and exploration set of games.} 
  \label{fig:atari_r2d2_beta}
\end{figure}

\begin{figure}[t!]
    \begin{subfigure}{\columnwidth}
      \centering
      \includegraphics[width=0.98\linewidth]{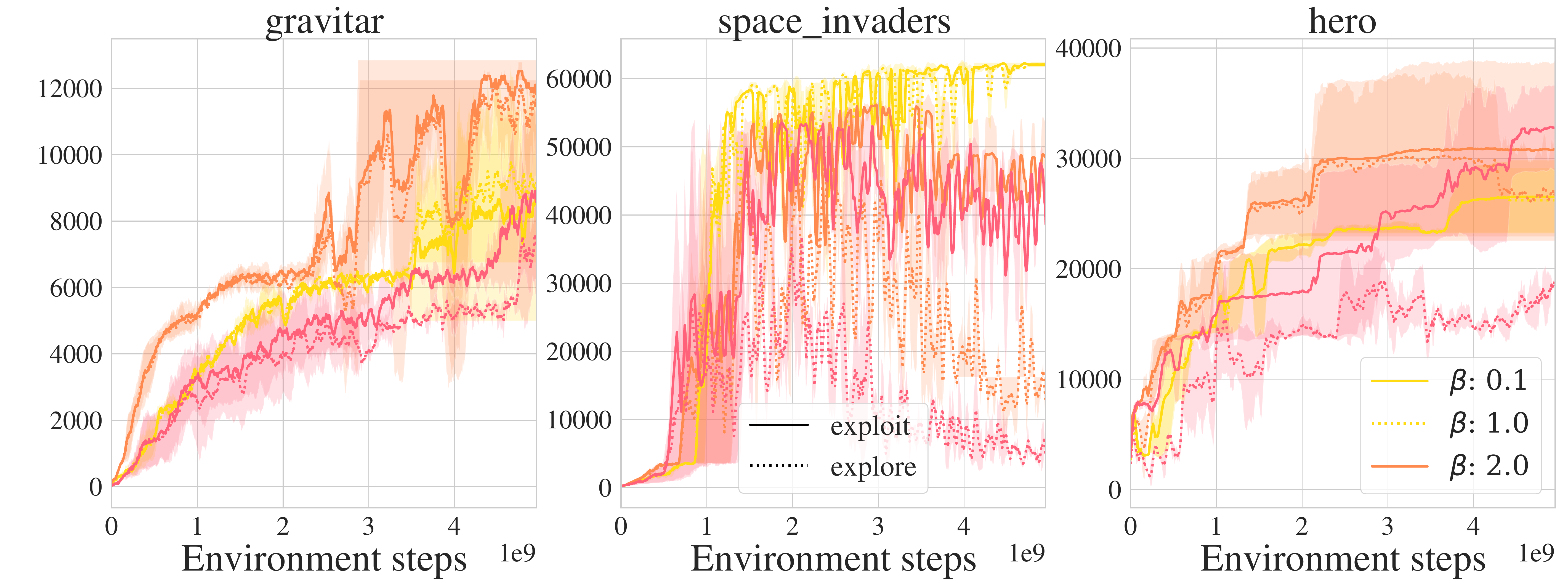}
    \end{subfigure}    
  \caption{Ablation for the exploration bonus strength, $\beta$, on the distributed setting. We fix $(N=5, \lambda=0)$. We report the score on three different games for the ensemble of exploiter (solid lines) and the ensemble of explorers (dotted lines). All runs are average over three seeds per game.}
  \label{fig:atari_r2d2_beta_games}
\end{figure}

\subsection{Detailed Results: Main Experiment}\label{app:atari:main}

In this section we provide more detailed results from our main experiment in \Cref{sec:atari}. We concentrated our attention on the subset of games that are well-known to pose challenging exploration problems \citep{Machado:2018re}: \texttt{montezuma\_revenge}, \texttt{pitfall}, \texttt{private\_eye}, \texttt{solaris}, \texttt{venture}, \texttt{gravitar}, and \texttt{tennis}. We also add a varied set of dense reward games. 

\Cref{fig:atari_per_game} shows the performance for each game. We can see that TDU always performs on par or better than each of the baselines, leading to significant improvements in data efficiency and final score in games such as \texttt{montezuma\_revenge}, \texttt{private\_eye}, \texttt{venture}, \texttt{gravitar}, and \texttt{tennis}. Gains in exploration games can be substantial, and in \texttt{montezuma\_revenge}, \texttt{private\_eye}, \texttt{venture}, and \texttt{gravitar}, \sname without prior functions achieves statistically significant improvements. \sname with prior functions achieve statistically significant improvements on \texttt{montezuma\_revenge}, \texttt{private\_eye}, and \texttt{gravitar}. Beyond this, both methods improve the rate of convergence on \texttt{seaquest} and \texttt{tennis}, and achieve higher final mean score. Overall, \sname yields benefits across both dense reward and exploration games, as summarised in \Cref{fig:atari_main_summary}. Note that R2D2's performance on dense reward games is deflated due to particularly low scores on \texttt{space\_invaders}. Our results are in line with the original publication, where R2D2 does not show substantial improvements until after 35 Bn steps.

\begin{figure}[b!]
    \begin{subfigure}{\columnwidth}
      \centering
      \includegraphics[width=0.98\linewidth]{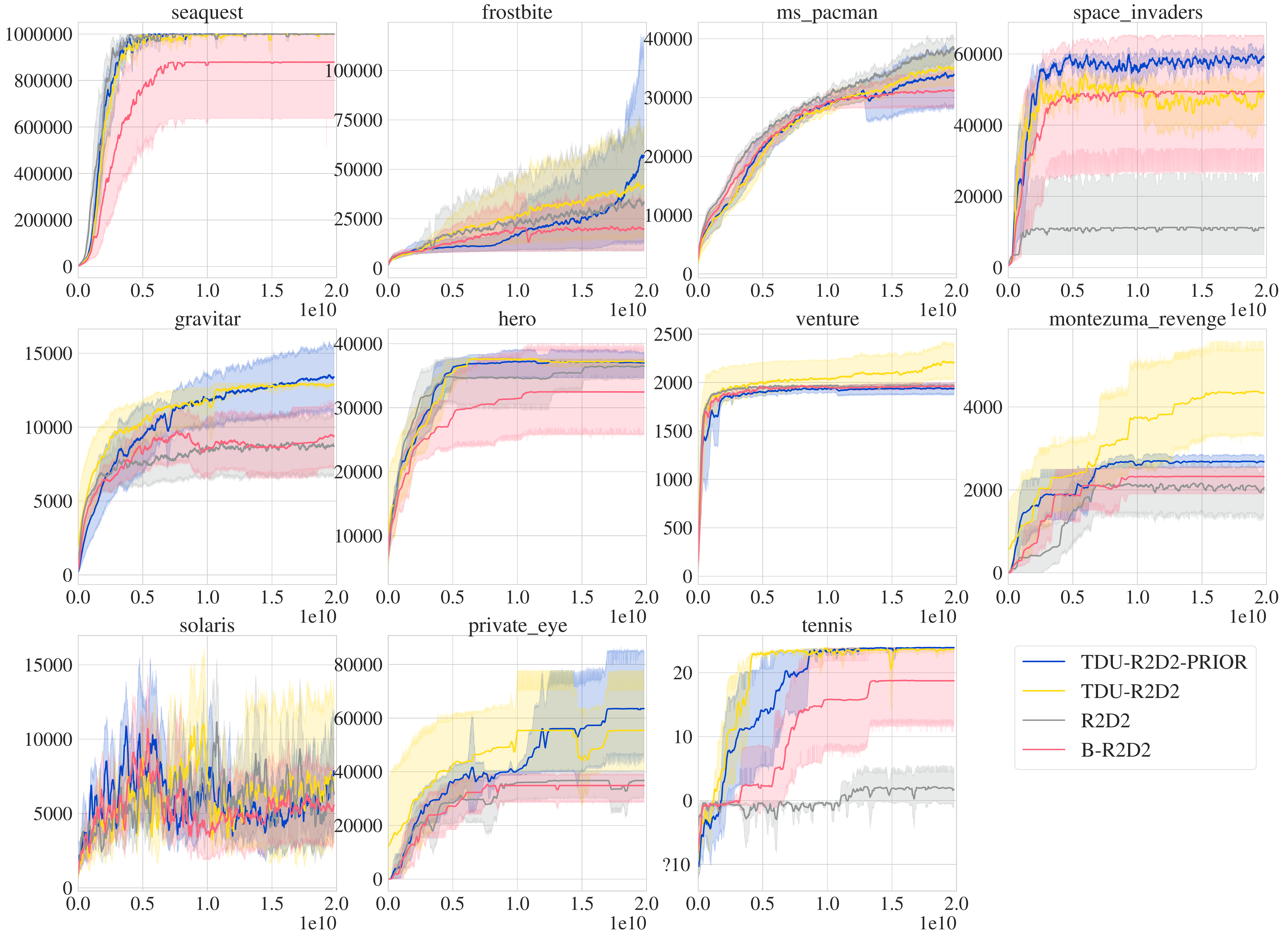}
    \end{subfigure}    
  \caption{Performance on each game in the main experiment in \Cref{sec:atari}. Shading depicts standard deviation over 8 seeds.} 
  \label{fig:atari_per_game}
\end{figure}

\begin{figure}[b!]
    \begin{subfigure}{\columnwidth}
      \centering
      \includegraphics[width=0.98\linewidth]{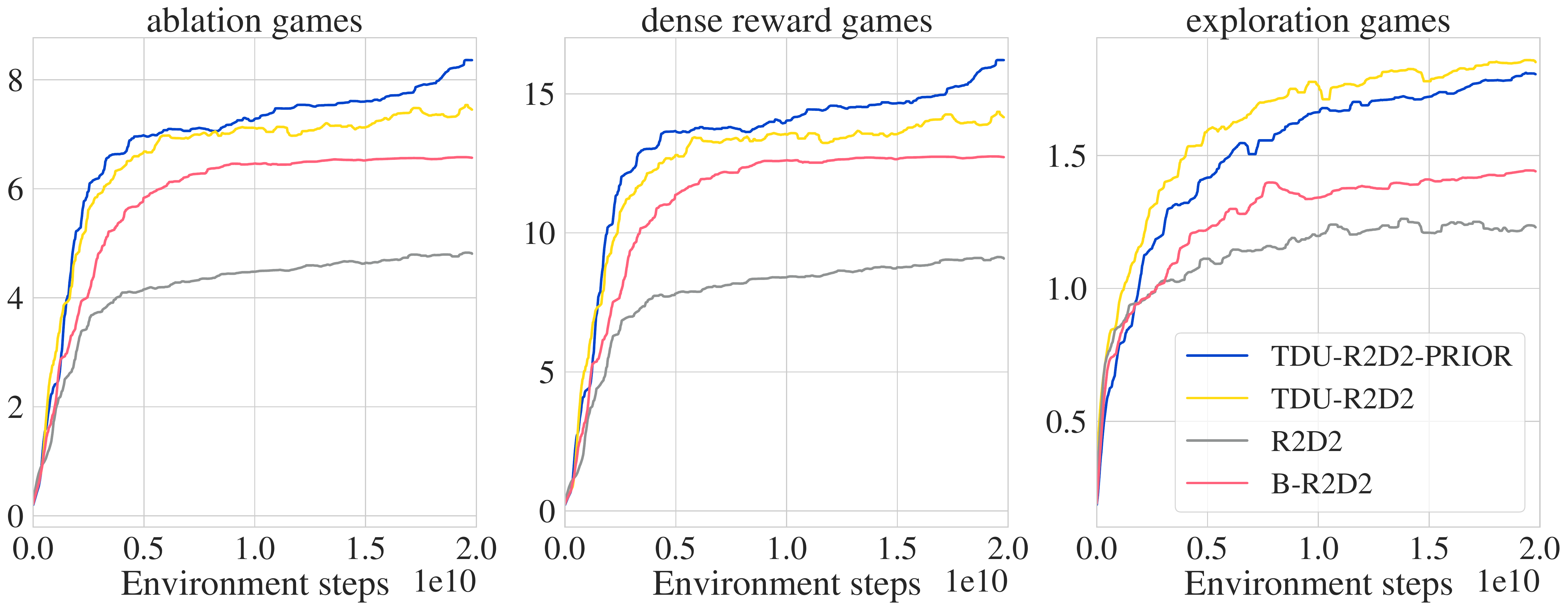}
    \end{subfigure}    
  \caption{Performance across all games in the main experiment in \Cref{sec:atari}. We report mean HNS over the full set of games used in the main experiment, dense reward games, and exploration games. Shading depicts standard deviation over 8 seeds.} 
  \label{fig:atari_main_summary}
\end{figure}

\subsection{Full Atari suite}

In this section we report the performance on all 57 games of the Atari suite. In addition to the two configurations used to obtain the results presented in the main text (reported in Section 5.2), in this section we included a variant of each of them with lower exploration bonus strength of $\lambda=0.1$. In all figures we refer to these variants by adding an L (for lower $\lambda$) at the end of the name, e.g. TDU-R2D2-L. In \Cref{fig:r2d2_atari_57} we report a summary of the results in terms of mean HNS and median HNS for the suite as well as mean HNS restricted to the hard exploration games only. We show the performance on each game in \Cref{fig:r2d2_atari_57_per_game}. Reducing the value of $\beta$ significantly improves the mean HNS without strongly degrading the performance on the games that are challenging from an exploration standpoint. The difference in performance in terms of mean HNS can be explained by looking at a few high scoring games, for instance: \texttt{assault}, \texttt{asterix}, \texttt{demon\_attack} and \texttt{gopher} (see \Cref{fig:r2d2_atari_57_per_game}). We can see that incorporating priors to \sname is not crucial for achieving high performance in the distributed setting. 

\begin{figure}[b!]
    \begin{subfigure}{\columnwidth}
      \centering
      \includegraphics[width=0.98\linewidth]{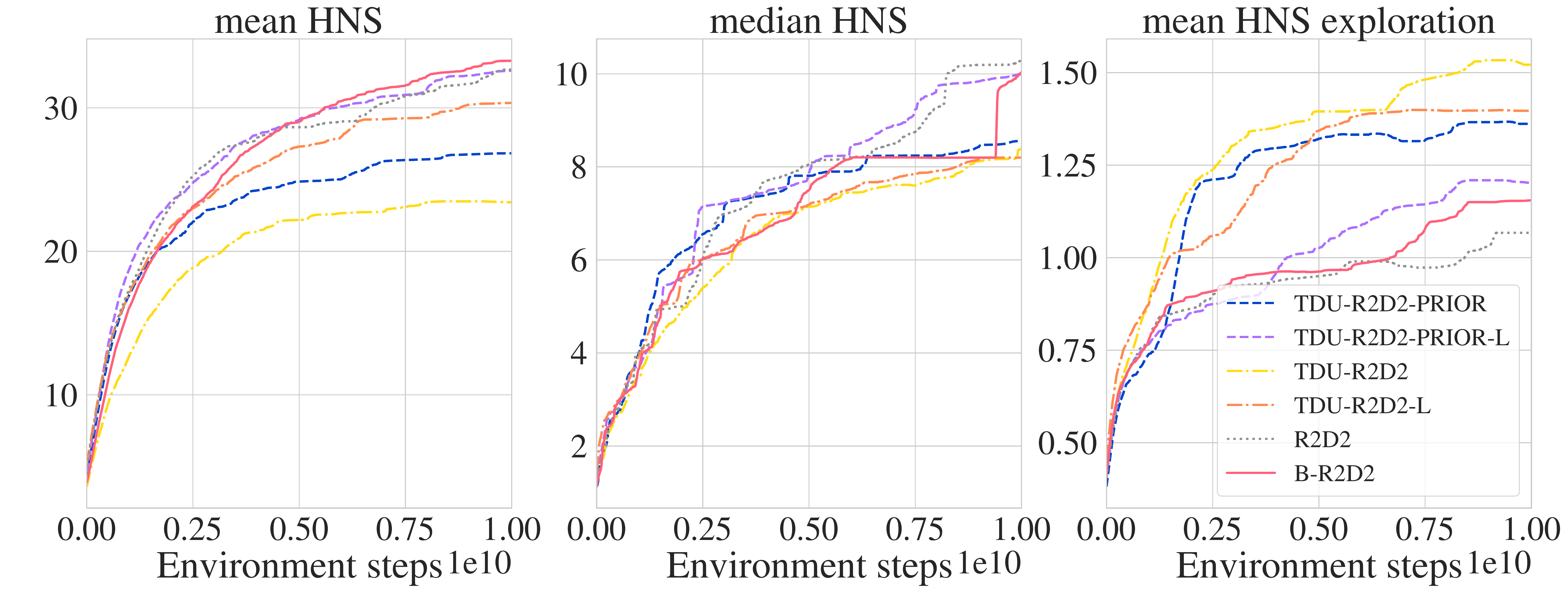}
    \end{subfigure}    
  \caption{Performance over the 57 atari games. We report mean and median HNS over the full suite, and mean HNS over the exploration games.} 
  \label{fig:r2d2_atari_57}
\end{figure}

\newpage
\begin{figure}[b!]
    \begin{subfigure}{\columnwidth}
      \centering
      \includegraphics[width=0.98\linewidth]{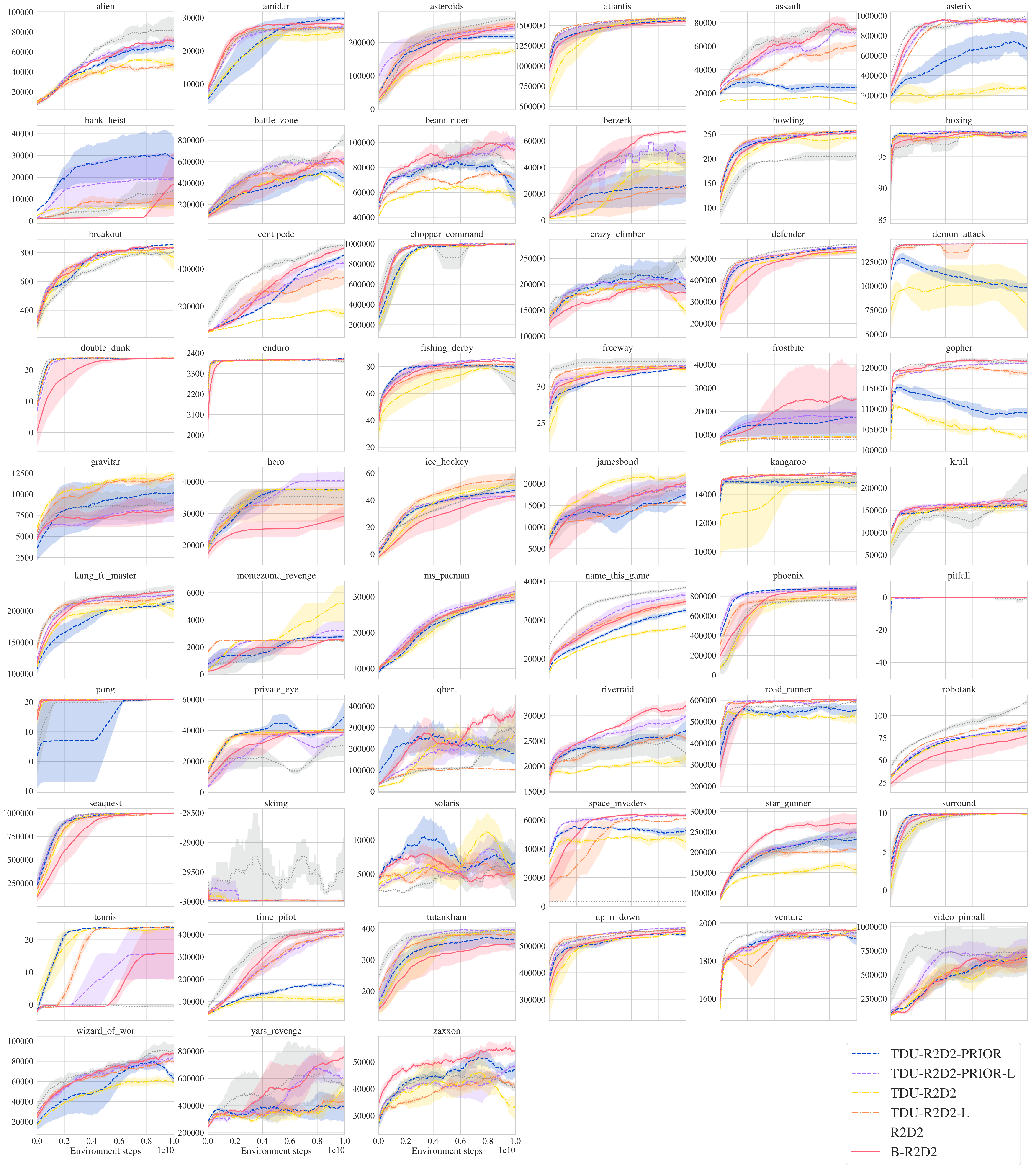}
    \end{subfigure}    
  \caption{Results for each individual game. Shading depicts standard deviation over 3 seeds.} 
  \label{fig:r2d2_atari_57_per_game}
\end{figure}

\end{document}